\documentclass{article}

% if you need to pass options to natbib, use, e.g.:
%     \PassOptionsToPackage{numbers, compress}{natbib}
% before loading neurips_2021

% ready for submission
%\usepackage[nonatbib]{neurips_2021}

% to compile a preprint version, e.g., for submission to arXiv, add add the
% [preprint] option:
     \usepackage[preprint]{neurips_2021}

% to compile a camera-ready version, add the [final] option, e.g.:
%     \usepackage[final]{neurips_2021}

% to avoid loading the natbib package, add option nonatbib:
%    \usepackage[nonatbib]{neurips_2021}

\usepackage[utf8]{inputenc} % allow utf-8 input
\usepackage[T1]{fontenc}    % use 8-bit T1 fonts
\usepackage{hyperref}       % hyperlinks
\usepackage{url}            % simple URL typesetting
\usepackage{booktabs}       % professional-quality tables
\usepackage{amsfonts}       % blackboard math symbols
\usepackage{nicefrac}       % compact symbols for 1/2, etc.

\usepackage{amsmath,amsfonts,amsthm,amssymb}
\usepackage{setspace}
\usepackage{fullpage}
\usepackage{fancyhdr}
\usepackage{xspace}

\usepackage{titletoc}
\allowdisplaybreaks
\usepackage{verbatim}
\usepackage{lastpage}
\usepackage{extramarks}
\usepackage{natbib}
\usepackage{diagbox}

%%%%%%%%%%%%%%%%%%% Custom Packages
\usepackage{macros}
\newcommand{\pbra}[1]{{\left( {#1} \right)}}
\newcommand{\bbra}[1]{{\left[ {#1} \right]}}

\newcommand{\maxtwo}[2]{\max \left( #1, #2 \right)}
\newcommand{\ind}[1]{\mathbb{I}\left[ #1 \right]}

\newcommand{\tildeO}{\widetilde{O}}

\newcommand{\thetas}{\theta^\star}

\newcommand{\lmin}{\lambda_{\mathrm{min}}}
\newcommand{\lmax}{\lambda_{\mathrm{max}}}

\usepackage{amsmath,amsfonts,bm}

\def\1{\bm{1}}

\makeatletter
\newcommand{\ve}{\@ifnextchar\bgroup{\velong}{{\bm{e}}}}
\newcommand{\velong}[1]{{\bm{#1}}}
\makeatother

\DeclareMathAlphabet{\mathsfit}{\encodingdefault}{\sfdefault}{m}{sl}
\SetMathAlphabet{\mathsfit}{bold}{\encodingdefault}{\sfdefault}{bx}{n}

\def\calA{{\mathcal{A}}}

\def\calC{{\mathcal{C}}}
\def\calD{{\mathcal{D}}}
\def\calE{{\mathcal{E}}}

\def\calS{{\mathcal{S}}}

\newcommand{\mle}{\preccurlyeq}

\newcommand{\diag}{\mathrm{diag}}

\newcommand{\dotp}[2]{\left<#1, #2\right>}

\newcommand{\normtwo}[1]{\left\| #1 \right\|_2}

\newcommand{\normop}[1]{\left\| #1 \right\|_{\mathrm{op}}}

\usepackage{algorithm}
\usepackage{algorithmic}
\usepackage{graphicx}
\usepackage{subcaption}
\usepackage{wrapfig}
\usepackage{enumitem}

\def\alphaone{\sqrt{2\ln 2|\StateSpace \times \ActionSpace| + \ln \frac{1}{\delta}}}
\def\alphatwo{2\sqrt{2d\ln 6 + \ln\frac{1}{\delta} }}

\def\shownotes{1}  %set 1 to show author notes
\ifnum\shownotes=1
\newcommand{\authnote}[2]{[#1: #2]}
\else
\newcommand{\authnote}[2]{}
\fi

% ===== space saving tricks
\allowdisplaybreaks
\renewcommand{\paragraph}[1]{\noindent\textbf{#1}}
\captionsetup[figure]{font= footnotesize,labelfont= footnotesize}

\title{Design of Experiments \\
for Stochastic Contextual Linear Bandits}

\author{
Andrea Zanette* \\
Institute for Computational and Mathematical Engineering\\
Stanford University \\
Stanford, CA \\
\texttt{zanette@stanford.edu} \\
\AND
Kefan Dong* \\
Department of Computer Science \\
Stanford University \\
Stanford, CA \\
\texttt{kefandong@stanford.edu} \\
\And
Jonathan Lee* \\
Department of Computer Science \\
Stanford University \\
Stanford, CA \\
\texttt{jnl@stanford.edu} \\
\And
Emma Brunskill \\
Department of Computer Science \\
Stanford University \\
Stanford, CA \\
\texttt{ebrun@cs.stanford.edu} \\
}

\begin{document}
\maketitle
\begin{abstract}
In the stochastic linear contextual bandit setting there exist several minimax procedures for exploration with policies that are \emph{reactive} to the data being acquired. In practice, there can be a significant engineering overhead to deploy these algorithms, especially when the dataset is collected in a distributed fashion or when a human in the loop is needed to implement a different  policy.  Exploring with a single non-reactive policy is beneficial in such cases. Assuming some batch contexts are available, we design a single stochastic policy to collect a good dataset from which a near-optimal policy can be extracted. We present a theoretical analysis as well as numerical experiments on both synthetic and real-world datasets. 
\end{abstract}

\section{Introduction}
Designers in many application areas can now deploy personalized decision policies.  
For example, recent work used manual classification of students to deploy semi-personalized literacy tips text messages which can be done easily by leveraging popular text messaging platforms. This work found significant improvements in young children's literacy scores (\citet{doss2019more}), but also noted the substantial variation in when and what type of text messages were most effective for different families (\citet{cortes2019behavioral,doss2019more}).

This setting and many others might substantially benefit from data-driven contextualized decision policies that optimize the desired expected outcome (such as literacy).  Online machine learning methods like multi-armed bandits and reinforcement learning, that  adaptively change interventions in response to outcomes in a closed loop process (see Figure~\ref{fig:RL}),   
may not yet be practical for such settings due to the expertise and infrastructure needed. However running an experiment with a fixed decision policy is likely to be both simple logistically (since such organizations already often design such decision policies by hand) and palatable,  % EB pick new word
since many areas (education, healthcare, social sciences) commonly deploy experiments across a few conditions to find the best approach. For these reasons, a key opportunity is to design \emph{static} or \emph{non-reactive} policies that can be used to gather data to identify optimal contextualized decision policies.

More generally, a fixed data collection strategy is practically desirable (1) whenever multiple agents collect data asynchronously and communication to update their policy is difficult or impossible, and (2) whenever changing the policy requires a significant overhead either in the engineering infrastructure or in the training of human personnel. Indeed several works limit the number of policy switches with minimal sample complexity impact \citep{han2020sequential,ruan2020linear,ren2020batched,bai2019provably,wang2021provably}. In this work, motivated by the above settings, we go further and look for a \emph{single}, \emph{non-reactive} policy for data collection.

\paragraph{Setting and goal}
We consider the linear stochastic contextual bandit setting where each context $s \in \StateSpace$ is sampled from a distribution $\mu$ and a context-dependent action set $a \in \ActionSpace_s$ is made available to the learner. The bandit instance is defined by a feature extractor $\phi(s,a)$ and some unknown parameter $\thetas$. Upon choosing an action $a \in \ActionSpace_s$, the linear reward function $r(s,a) = \phi(s,a)^\top\thetas + \eta$ is revealed to the learner corrupted by mean zero $1$-subGaussian noise $\eta$. Our goal is to construct an exploration policy $\pi_e$ to gather a dataset, such that after that dataset is gathered, we can extract a near optimal policy $\widehat \pi$ from it. 

Perhaps surprisingly, there has been relatively little work on this setting. %, or even the issue of data collection to identify the best decision policy for contextual bandits.  
Prior work on exploration to quickly identify a near-optimal policy focuses on best-arm identification using adaptive policies that react to the observed rewards \citep{soare2014best,tao2018best,jedra2020optimal} or design of experiments that produces a non-reactive policy for data collection \citep{kiefer1960equivalence, esfandiari2019regret,lattimore2020learning}; both lines of work assume that a \emph{single}, repeated context with unchanging action set is presented to the learner. In contrast we are interested in identifying near-optimal context-specific decision policies. The closest related work~\citep{ruan2020linear} investigates our task as a subtask for online regret learning, but  requires an amount of data that scales as $\Omega(d^{16})$, which is impractical in applications with even moderate $d$.

Without any apriori information about the problem, no algorithm can do much better than deploying a random policy, which can require an amount of data that scales exponentially in $d$, see \cref{sec:HardInstance}.
However, in many common applications, \emph{prior data about the context distribution $\mu(s)$} and the state--action feature representation $\phi$ is available. For example, an organization will often know information about its customers and specify the feature representation used for state--action spaces in advance of trying out such actions.\footnote{In other words, given a set of previously observed states $s_1,\ldots,s_M$, and a known state--action representation $\phi$, for any potential action $a$, we can compute the resulting representation  $\phi(s,a)$.} The initially available state contexts are referred to as \emph{offline} (state) contexts data.

Our algorithm leverages  historical context data  to enable data efficient design of experiments for stochastic linear contextual bandits. It uses offline context data $\C$ to design a non-reactive policy $\pi_e$
to collect new, \emph{online} data where reward feedback is observed (see Figure~\ref{fig:OD}), and uses the resulting dataset $\D'$ to learn a near-optimal on average decision policy $\pihat$ for future use. We highlight that the algorithm does not get to adjust the exploratory policy $\pi_e$ while the online data is being collected. 
\begin{figure}[t]
%\centering
\begin{subfigure}[b]{0.33\textwidth}
         \centering
         \includegraphics[width=0.5\linewidth]{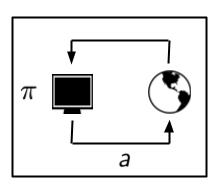} 
         \caption{Traditional online RL framework}
         \label{fig:RL}
     \end{subfigure}
     \hspace{1cm}
     \begin{subfigure}[b]{0.58\textwidth}
         \centering
     \includegraphics[width=\linewidth]{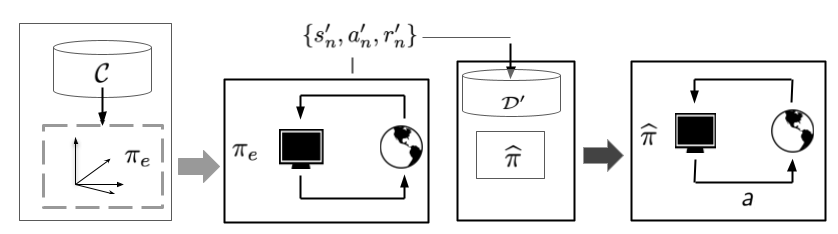} 
         \caption{Design of experiment with an online and offline component}
         \label{fig:OD}
     \end{subfigure}
\caption{Comparison between the traditional RL setting and design of experiments}
\label{fig::framework}
\vspace{-0.5cm}
\end{figure}

\paragraph{Contributions}
We make the following contributions.
\begin{itemize}[noitemsep,topsep=0pt,leftmargin=15pt]
    \item Using past state contexts only, we design a single, non-reactive policy to acquire online data that can be used to compute a context-dependent decision policy that is near-optimal in expectation across the contexts with high probability, for future use.   
    \item Our algorithm achieves the minimax lower bound $\Omega(\min\{d\log(|\StateSpace\times \ActionSpace |,d^2) \}/\epsilon^2)$ on the number of online samples (ignoring log and constants), while keeping the number of offline state contexts required polynomially small ($O(d^2/\epsilon^2)$ or $O(d^3/\epsilon^2)$). 
    \item Our experiments on a simulation and on a learning to rank Yahoo dataset show  our  strategic design of experiments approach can learn better decision policies with less exploration data compared to standard exploration strategies  that fail to exploit structure encoded in the historical contexts. 
\end{itemize}
\section{Related Work}
There is an extensive line of literature on linear bandits, with most papers focusing on online regret.  Research on  sample complexity results has focused primarily on adaptive algorithms for non-contextual bandits with a simple or vectored action space \citep{soare2014best,tao2018best,xu2018fully,fiez2019sequential,jedra2020optimal,degenne2020gamification}. 
Their sample complexity bounds depend on the suboptimality gap of the action set, and are therefore, instance-dependent. The design of experiments literature \citep{kiefer1960equivalence, esfandiari2019regret,lattimore2020learning,ruan2020linear} has also focused on non-contextual bandits, but in designing a non-adaptive policy such that the resulting dataset can be used to identify an optimal arm with high probability. 

A few recent papers consider learning the optimal policy for contextual bandits in the pure exploration setting \citep{deshmukh2018simple,ruan2020linear}. \citet{deshmukh2018simple} propose an algorithm that yields an online reactive policy for when the reward is generated by a RKHS function on the input space: in contrast we focus on non-adaptive policies for stochastic linear contextual bandits. 

The most related paper to this work is by  \citet{ruan2020linear}. The main focus of \citet{ruan2020linear} is to minimize the number of policy switches in the online phase while keeping the expected regret optimal. Directly applying Theorem 6 of \citet{ruan2020linear} yields a $\tildeO(d^2/\epsilon^2)$ online sample complexity and $\tildeO(d^{16})$ offline sample complexity. 
Note that the offline sample complexity is independent of $\epsilon$. As a trade-off, \citet{ruan2020linear} have a highly suboptimal dependence on $d$, which leads to burn-in phase too large to be practically useful; their algorithm also suffers from a high computational complexity. However, note their work is focused on a different objective than what we have here.
\section{Setup}
We consider the stochastic linear contextual bandit model with stochastic contexts. A bandit instance is characterized by a tuple $\left<\calS,\calA,\mu,r\right>$ where $\calS$ is the context space and $\mu$ is the context distribution. For a context $s\in\calS$, the action space is denoted by $\calA_s$. The feature map $\phi: (s,a) \mapsto \phi(s,a)\in\R^d$ is assumed to be known and defines the linear reward model $r(s,a)=\phi(s,a)^\top \thetas+\eta$ for some $\theta^\star \in \R^d$ parameter and some mean-zero  1-subgaussian random variable $\eta$.

We occasionally use the $\widetilde O$ notation to suppress $\polylog$ factors of the input parameters $d,\lambda,\frac{1}{\delta}$. We write $f\lesssim g$ if $f=O(g)$, and $f\lessapprox g$ if $f=\tildeO(g).$ For a positive semi-definite matrix $\Sigma\in\R^{d\times d}$ and a vector $x\in\R^{d}$, let $\norm{x}{\Sigma}=\sqrt{x^\top \Sigma x}$. For two symmetric matrices $A,B$, we say $A\mle B$ if $B-A$ is positive semi-definite. 
 
Our analysis consists of an offline and online component. We often add $'$ to indicate the online quantities, e.g., we denote with $s_1,s_2,\dots$ the offline contexts and with $s'_1,s'_2,\dots$ the online contexts. 
\subsection{Objective and Error Decomposition}
As depicted in Figure~\ref{fig::framework}, our approach is to leverage offline state contexts $\C = \{s_1,\dots,s_M \}$ where $s_m \sim \mu$ to design a single stochastic policy $\pi_e$ to acquire data from an online data stream $\C' = \{ s'_1,\dots,s'_N\}$. 
This generates a dataset $\D' = \{(s'_n,a'_n,r'_n)\}_{n=1,\dots,N}$ where $s'_n \sim \mu, a'_n \sim \pi_e(s_n')$ and $r'_n = r(s'_n,a'_n)$. Using $\D'$, the least-square predictor $\widehat \theta$ and the corresponding greedy policy $\widehat \pi$ can be extracted
\begin{align}
\label{main:eqn:thetahat}
	\widehat \theta = \big(\Sigma'_{N}\big)^{-1}\sum_{i=1}^N \phi(s'_n,a'_n)r'_n, \qquad \qquad \widehat \pi(s) = \argmax_{a \in \ActionSpace_s}\phi(s,a)^\top \widehat \theta
\end{align}
where $\Sigma'_{N} = \Big(\sum_{n=1}^{N} \phi(s'_n,a'_n)\phi(s'_n,a'_n)^\top + \lambda_{reg} I\Big)$ is the empirical cumulative covariance matrix with regularization level $\lambda_{reg}$.
The quality of the dataset $\D'$ (and of the whole two-step procedure) is measured by the suboptimality of the extracted policy $\widehat \pi$ obtained after data collection: 
\begin{align}
	\E_{s\sim \mu}[\max_a \phi(s,a)^\top \thetas -\phi(s,\widehat \pi(s))^\top \thetas].\label{eqn:regret}
\end{align}
Note that Eq.~\eqref{eqn:regret} measures the expectation over the contexts of the suboptimality between the resulting decision policy $\widehat{\pi}$ and the optimal policy. This is a looser criteria than a maximum norm bound which evaluates the error over any possible context $s$: in general this latter error may  not be easily reduced if certain directions in feature space are rarely available.
	
A related objective is to minimize the maximum prediction error on the linear bandit instance 
\begin{eqnarray}
	\E_{s\sim \mu} \max_{a}|\phi(s,a)^\top (\theta^\star - \widehat \theta )| &\leq& \E_{s\sim \mu} \max_{a} \| \phi(s,a) \|_{(\Sigma'_N)^{-1}} \|\theta^\star - \widehat \theta \|_{\Sigma'_N} \\
	& \leq& \sqrt{\beta}\E_{s\sim \mu} \max_{a}\| \phi(s,a) \|_{(\Sigma'_N)^{-1}}.
	\label{main:eqn:uncertainty_decomposition}
\end{eqnarray} 
In Eq.~\eqref{main:eqn:uncertainty_decomposition}, the least square error $\|\theta^\star - \widehat \theta \|_{\Sigma'_N} \leq \sqrt{\beta}$ comes from least square regression analyses \citep{lattimore2020bandit}: with probability at least $1-\delta$ we have
\begin{align}
\label{main:eqn:beta}
\| \widehat \theta - \theta^\star \|_{\Sigma'_N}\leq 	\sqrt{\beta}=\min\Big\{ \underbrace{\alphaone \vphantom{\alphatwo}}_{\text{Small $\StateSpace \times\ActionSpace$}},\underbrace{\alphatwo}_{\text{Large $\StateSpace \times\ActionSpace$}}
		\Big\} + \underbrace{\sqrt{\lambda_{reg}}\| \theta^\star\|_2 \vphantom{\alphatwo}}_{\text{Regularization Effect}}.
\end{align}
The above expression assumes that the state-action-rewards $(s_n',a_n',r_n')$ are drawn i.i.d.  from a fixed distribution. This is satisfied in our setting as the data-collection policy $\pi_e$ is non-reactive to the online data. The parameter $\beta$ governs the sample complexity as a function of the size of the state-action space and also highlights the impact of the regularization bias. 

Small predictive error (Eq.~\eqref{main:eqn:uncertainty_decomposition}) can be used to bound the suboptimal gap of the greedy policy (Eq.~\eqref{eqn:regret}). Therefore to obtain good performance, 
it is sufficient to bound  
$\E_{s\sim \mu} \max_{a}\| \phi(s,a) \|_{(\Sigma'_N)^{-1}}$.  This can be achieved by designing an appropriate sampling policy $\pi_e$ to yield a set of $(s_n',a_n',r_n')$ tuples whose  empirical cumulative covariance matrix $\Sigma'_N$ is as `large' as possible.
\section{Algorithms}
\paragraph{Reward-free \textsc{LinUCB}} First, assume it is acceptable to have an algorithm that updates its policy reactively. In order to reduce $\E_{s \sim \mu}\max_{a}\| \phi(s,a) \|_{(\Sigma'_n)^{-1}}$ we could use an algorithm that, every time a context $s \sim \mu$ is observed, chooses the action $\argmax_{a \in \ActionSpace_s} \| \phi(s,a)\|_{(\Sigma'_n)^{-1}}$ where the norm $\| \phi(s,a)\|_{(\Sigma'_n)^{-1}}$ that represents the uncertainty is highest (cf. \cref{main:eqn:uncertainty_decomposition}). This corresponds to running the \textsc{LinUCB} algorithm \citep{Abbasi11} with the empirical reward function set to zero. 
One can show that after $\approx d^2/\epsilon^2$ iterations the uncertainty $\sqrt{\beta}\E_{s \sim \mu}\max_{a}\| \phi(s,a) \|_{(\Sigma'_n)^{-1}} \leq \epsilon$; if the algorithm stores the observed reward $r(s,a)$ in every visited context $s$ and chosen action $a$, the greedy policy that can be extracted from this dataset (of size $\approx d^2/\epsilon^2$) is $\epsilon$-optimal (cf. Eqs.~\eqref{eqn:regret},\eqref{main:eqn:uncertainty_decomposition}), as desired. 

Unfortunately we cannot run this reward-free algorithm online, as its policy is \emph{reactive} to the online stream of observed online contexts $s$ and selected actions $a$, while we want a non-reactive policy. 

\begin{minipage}{0.55\textwidth}
\begin{algorithm}[H]
\floatname{algorithm}{Algorithm}
\begin{algorithmic}[1]
\caption{\footnotesize \textsc{Planner} (Reward-free \textsc{LinUCB})}
\label{alg:Planner}
\STATE \textbf{Input}: Contexts $\C = \{s_1,\dots,s_{M} \}$, reg. $\lambda_{reg}$
\STATE $\Sigma_1 = \lambda_{reg} I$
\STATE $m = 1$
\FOR{$m = 1,2,\dots M$}
\IF{$\det(\Sigma_m) > 2\det(\Sigma_{\underline m})$ or $m = 1$}
\STATE $\underline m \leftarrow  m$
\STATE $\Sigma_{\underline m} \leftarrow \Sigma_{m}$
\ENDIF
\STATE Define $\pi_m : s \mapsto \argmax_{a \in \ActionSpace_s} \| \phi(s,a)\|_{\Sigma^{-1}_{\underline m}}$
\STATE $\Sigma_{m+1} = \Sigma_{m} + \alpha\phi_m\phi_m^\top; \ \phi_m = \phi(s_m,\pi_m(s_m))$
\ENDFOR
\RETURN policy mixture $\pi_{mix}$ of $\{\pi_1,\dots,\pi_M\}$
\end{algorithmic}
\end{algorithm}
\end{minipage}
\begin{minipage}[H]{0.47\textwidth}
\begin{algorithm}[H]
%\floatname{algorithm}{Algorithm}
\begin{algorithmic}[1]
\caption{\footnotesize \textsc{Sampler}}
\label{alg:Sampler}
\STATE \textbf{Input}: $\pi_{mix} = \{\pi_1,\dots,\pi_{M} \}$, reg. $\lambda_{reg}$
\STATE Set $\mathcal D' = \emptyset$
\FOR{$n = 1,2,\dots N$}
\STATE Receive context $s'_n \sim \mu$
\STATE Sample $m \in [M]$ uniformly at random
\STATE Select action $a'_n = \pi_m(s'_n)$
\STATE Receive feedback reward $r'_n$
\STATE Store feedback $\mathcal D' = \mathcal D' \cup \{s'_n,a'_n,r'_n\}$ 
\ENDFOR
\RETURN dataset $\mathcal D'$
\end{algorithmic}
\end{algorithm}
\end{minipage}

Our algorithm leverages this idea and consists of two subroutines: 1) the \emph{planner} (Alg.~\ref{alg:Planner}) which operates on offline contexts and identifies a mixture policy $\pi_{mix}$ (this is the exploratory policy $\pi_e$ mentioned in the introduction) 2) the \emph{sampler} (Alg.~\ref{alg:Sampler}) which runs $\pi_{mix}$ online to finally gather the dataset. This way, $\pi_{mix}$ is non-reactive to the online data.

\paragraph{Planner}
The purpose of the planner is to use past contexts to compute the exploratory policy.
The planner runs the reward-free version of \textsc{LinUCB} on the offline context dataset $\C$ as described earlier in this section. This way, the planner selects the action $a$ in the current offline context $s_m$ that maximizes the uncertainty encoded in $\| \phi(s_m,a)\|_{\Sigma^{-1}_{\underline m}}$ where $\Sigma_{\underline m}$ is a scaled, regularized, cumulative covariance over the contexts parsed so far and the actions selected. Note this procedure is possible since the state--action function $\phi(s_m,a)$ is assumed to be known for any input $(s_m,a)$ pair, and no actual rewards are observed. 
The doubling schedule yields a short descriptor for the planner's policies $\{ \pi_1,\dots,\pi_M\}$. The variable $\underline m$ indicates the last doubling update before iteration $m$.

A key choice is the parameter $\alpha \in (0,1]$ in the cumulative covariance matrix update. The rationale is that when $\alpha < 1$ each  rank-one update $\phi_m\phi_m^\top$ to the cumulative covariance gets \emph{discounted}. The smaller $\alpha$ is, the more offline samples the planner needs to get to a sufficiently positive definite covariance matrix $\Sigma_M$.  This choice effectively averages the updates and increases the estimation accuracy of the planner's covariance with respect to its conditional expectation.

\paragraph{Sampler} Upon termination, the planner identifies a sequence of policies $\pi_1,\dots,\pi_M$. Now consider the average policy $\pi_{mix}$: every time an action is needed, $\pi_{mix}$ samples one index $m \in [M]$ uniformly at random and plays $\pi_m$. This is the policy that the sampler (Alg.~\ref{alg:Sampler}) implements for $N = \alpha M \leq M$ fresh online contexts. 

Upon playing $\pi_m$ in state $s'_n$, the corresponding reward is observed and the tuple ($\{s'_n,a'_n,r'_n\}$) is stored. Since $\pi_{mix}$ is the average policy played by the planner, we expect that running $\pi_{mix}$ on the online dataset produces a covariance matrix $\Sigma'_N$ similar to the planner's $\Sigma_M$. This means the sampler acquires the same information that the planner would have acquired in presence of a reward signal. Since the planner is the reward-free \textsc{LinUCB} algorithm, we expect its policy to efficiently reduce our uncertainty over the reward parameters and learn a near optimal policy; our analysis will make this intuition precise. Note the sampler's policy is non-reactive to the online stream of data.

\section{Main Result}

\good{AZ: need to do high-level checks on the below} 
\good{AZ: need to make the below more precise in terms of setting the parameters and the log dependencies}
%\knote{here I suggest we set $\lambda_{reg}\in(\ln(d/\delta),d].$}
\begin{theorem}
\label{thm:main}
Consider running Alg.~\ref{alg:Planner} for $M = \widetilde\Omega (\frac{d^2 \beta}{ \lambda_{reg}\epsilon^2})$ iterations and	Alg.~\ref{alg:Sampler} for $
N = \widetilde\Omega( \frac{d\beta}{\epsilon^2})$ iterations with regularization $\lambda_{reg}\in (\Omega(\ln (d/\delta)),d]$. Let $\widehat \theta, \widehat \pi$ be as in Eq.~\eqref{main:eqn:thetahat}. For any $\epsilon\le 1$, with probability at least $1-\delta$ the expected maximum uncertainty
\begin{align}
	\E_{s \sim \mu}\max_{a \in \ActionSpace_s} |\phi(s,a)^\top(\theta^\star - \widehat\theta)|\leq \epsilon
\end{align}
and the suboptimality of the greedy policy $\pihat$ satisfies
\begin{align}\label{equ:thm1-2}
\E_{s \sim \mu}\max_{a \in \ActionSpace_s}  \left( \phi(s,a)-\phi(s,\pihat(s)) \right)^\top\theta^\star \leq 2\epsilon.
\end{align}
\end{theorem}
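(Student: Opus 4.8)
The plan is to reduce both inequalities to the single scalar bound
\[
\sqrt{\beta}\;\E_{s\sim\mu}\max_{a\in\ActionSpace_s}\|\phi(s,a)\|_{(\Sigma'_N)^{-1}}\;\le\;\epsilon,
\]
and then to prove this bound. Granting it, the first display is immediate from the least‑squares decomposition \eqref{main:eqn:uncertainty_decomposition}--\eqref{main:eqn:beta} (applicable because $\pi_e=\pi_{mix}$ is non‑reactive, so the online tuples are i.i.d.), and the second display follows from an elementary comparison: fixing $s$ and $a^\star=\argmax_a\phi(s,a)^\top\thetas$ and inserting $\pm\phi(s,a^\star)^\top\widehat\theta$ and $\pm\phi(s,\widehat\pi(s))^\top\widehat\theta$, the greedy gap at $s$ equals the sum of two terms each bounded by $\max_a|\phi(s,a)^\top(\thetas-\widehat\theta)|$ plus $(\phi(s,a^\star)-\phi(s,\widehat\pi(s)))^\top\widehat\theta\le0$ (by the definition of $\widehat\pi$), hence is at most $2\max_a|\phi(s,a)^\top(\thetas-\widehat\theta)|$; taking $\E_{s\sim\mu}$ and using the first display yields the factor $2\epsilon$. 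So the whole content is the displayed bound, and I would spend the failure budget $\delta$ as a union over the $O(1)$ high‑probability events below.

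\textbf{The planner reaches a good covariance.} The reward‑free \textsc{LinUCB} recursion with the discounted update $\Sigma_{m+1}=\Sigma_m+\alpha\phi_m\phi_m^\top$ and the doubling rule obeys the usual log‑determinant (elliptical‑potential) bound; combined with $\Sigma_m\mle 2\Sigma_{\underline m}$ (since $\det\Sigma_m\le 2\det\Sigma_{\underline m}$ and $\Sigma_{\underline m}\mle\Sigma_m$, all eigenvalues of the relative matrix being $\ge 1$) and with $\lambda_{reg}\ge 1$ making the truncation inactive, this gives $\frac1M\sum_{m=1}^M\|\phi_m\|^2_{\Sigma_{\underline m}^{-1}}=\widetilde O(d/(\alpha M))=\widetilde O(d/N)$, where $\|\phi_m\|_{\Sigma_{\underline m}^{-1}}=\max_a\|\phi(s_m,a)\|_{\Sigma_{\underline m}^{-1}}$ by the choice of $\pi_m$. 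I then turn this \emph{running, empirical} quantity into a bound on the \emph{population} width at the \emph{final} covariance: monotonicity $\Sigma_{\underline m}\mle\Sigma_M$ gives $\max_a\|\phi(s,a)\|^2_{\Sigma_M^{-1}}\le\max_a\|\phi(s,a)\|^2_{\Sigma_{\underline m}^{-1}}$ pointwise in $s$, so $\E_{s\sim\mu}\max_a\|\phi(s,a)\|^2_{\Sigma_M^{-1}}\le\frac1M\sum_m\E_{s\sim\mu}\max_a\|\phi(s,a)\|^2_{\Sigma_{\underline m}^{-1}}$, and the summands on the right are exactly the conditional means (given the past, which determines $\Sigma_{\underline m}$) of the bounded quantities $\|\phi_m\|^2_{\Sigma_{\underline m}^{-1}}\le1/\lambda_{reg}$; a Freedman inequality for this scalar martingale transfers the empirical average to the population average with only lower‑order slack, yielding $\E_{s\sim\mu}\max_a\|\phi(s,a)\|^2_{\Sigma_M^{-1}}=\widetilde O(d/N)$ with high probability.

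\textbf{Transferring from the planner's covariance to the sampler's (the main obstacle).} Set $\Sigma_{\pi_m}=\E_{s\sim\mu}[\phi(s,\pi_m(s))\phi(s,\pi_m(s))^\top]$ and $\bar\Sigma:=\lambda_{reg}I+\alpha\sum_{m=1}^M\Sigma_{\pi_m}$. Since $N=\alpha M$, conditionally on the planner's output the sampler takes $N$ i.i.d.\ draws under $\pi_{mix}$, so $\E[\Sigma'_N\mid\pi_{mix}]=\lambda_{reg}I+N\,\E_{s\sim\mu,\,a\sim\pi_{mix}}[\phi\phi^\top]=\bar\Sigma$, while $\Sigma_M-\bar\Sigma=\alpha\sum_m(\phi_m\phi_m^\top-\Sigma_{\pi_m})$ is a matrix martingale. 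For the sampler I would use a matrix‑Chernoff lower tail — valid conditionally on $\bar\Sigma$ because those $N$ samples are conditionally i.i.d.\ and the whitened increments $\phi\phi^\top\bar\Sigma^{-1}$ have norm $\le 1/\lambda_{reg}$, so $\lambda_{reg}\gtrsim\ln(d/\delta)$ suffices — to get $\Sigma'_N\mge\tfrac12\bar\Sigma$. The hard direction is the planner's upper tail $\Sigma_M\mle O(1)\bar\Sigma$: here $\bar\Sigma$ is \emph{terminal‑measurable} (it involves all the $\Sigma_{\pi_m}$), so one cannot whiten the martingale by $\bar\Sigma^{-1/2}$. Instead I argue direction‑by‑direction — for a fixed unit $v$, $\alpha\sum_m\big((v^\top\phi_m)^2-\E_m(v^\top\phi_m)^2\big)$ is a scalar martingale with increments $\le\alpha$ and predictable variance $\le\alpha\,v^\top(\bar\Sigma-\lambda_{reg}I)v\le\alpha\,v^\top\bar\Sigma v$, so Freedman makes it exceed $v^\top\bar\Sigma v$ with probability at most $\exp(-\Omega(v^\top\bar\Sigma v/\alpha))\le\exp(-\Omega(\lambda_{reg}/\alpha))$ — and a union bound over a sufficiently fine net of the sphere (log‑cardinality $\widetilde O(d)$) upgrades this to $\Sigma_M\mle 2\bar\Sigma$, which forces $\lambda_{reg}/\alpha\gtrsim\widetilde O(d)+\ln(1/\delta)$, i.e.\ $\alpha=\widetilde O(\lambda_{reg}/d)$; this is precisely where the offline requirement $M=N/\alpha=\widetilde\Omega(d^2\beta/(\lambda_{reg}\epsilon^2))$ comes from, and I expect the careful handling of this data‑dependent reference matrix to be the technical heart of the proof. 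Chaining the three bounds, $(\Sigma'_N)^{-1}\mle 2\bar\Sigma^{-1}\mle 4\Sigma_M^{-1}$, so by Cauchy--Schwarz $\E_{s\sim\mu}\max_a\|\phi(s,a)\|_{(\Sigma'_N)^{-1}}\le 2\sqrt{\E_{s\sim\mu}\max_a\|\phi(s,a)\|^2_{\Sigma_M^{-1}}}=\widetilde O(\sqrt{d/N})$, hence $\sqrt\beta\,\E_{s\sim\mu}\max_a\|\phi(s,a)\|_{(\Sigma'_N)^{-1}}=\widetilde O(\sqrt{\beta d/N})\le\epsilon$ once $N=\widetilde\Omega(\beta d/\epsilon^2)$, which is the stated choice. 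A closing union bound over the constantly many events finishes the argument.
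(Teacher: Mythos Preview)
Your overall architecture matches the paper's: reduce to $\sqrt{\beta}\,\E_{s\sim\mu}\max_a\|\phi(s,a)\|_{(\Sigma'_N)^{-1}}\le\epsilon$, bound the planner's expected width via the elliptical potential lemma plus a reverse martingale concentration, transfer to the sampler through the intermediate $\overline\Sigma$, and finish the greedy gap with the standard two-sided comparison. The minor variations (squared versus unsquared norms, Freedman versus the paper's ``reverse Bernstein'' for the empirical-to-population step) are cosmetic.

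The one substantive technical difference is in the step you correctly flag as the heart of the argument: proving $\Sigma_M\preccurlyeq O(1)\,\overline\Sigma$ despite $\overline\Sigma$ being terminal-measurable. You propose a direction-by-direction scalar Freedman bound followed by a union over an $\epsilon$-net of the sphere, which does work (the net must have granularity $\sim\lambda_{reg}/N$ so that the $\|\Sigma_M\|,\|\overline\Sigma\|\le\lambda_{reg}+N$ remainder terms are dominated by the $\lambda_{reg}$ floor of $v^\top\overline\Sigma v$, giving log-cardinality $\widetilde O(d)$ and hence the constraint $\alpha\lesssim\lambda_{reg}/d$). The paper instead exploits the doubling schedule: it yields at most $K=\widetilde O(d)$ phases, within each of which the policy is fixed, so the samples are conditionally i.i.d.\ and a matrix upper-tail bound applies phase-by-phase (with a union over all possible phase lengths $\le M$), then the $K$ phase inequalities are summed. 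Both routes land on the same constraint $\alpha\lesssim\lambda_{reg}/K\approx\lambda_{reg}/d$; the paper's avoids the net entirely by leaning on the algorithm's structure, while yours would go through even if the planner switched policy at every step. One small imprecision in your sampler lower tail: whitening by $\overline\Sigma^{-1/2}$ and applying matrix Chernoff directly fails because $\mu_{\min}$ of the stochastic part can be zero in directions where $\overline\Sigma$ has eigenvalue exactly $\lambda_{reg}$; the paper handles this with an additive-$\lambda I$ matrix lower bound (its Lemma~\ref{lem:MatrixLowerBound}), which gives $9\Sigma'_N\succcurlyeq\overline\Sigma$ under $\lambda_{reg}\gtrsim\ln(d/\delta)$, the same condition you state.
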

In Table~\ref{main:table:MainTable} we  instantiate the bounds from Theorem~\ref{thm:main} in different settings, ignoring constants and log terms. This highlights different tradeoffs between the regularization $\lambda_{reg}$ and the number of offline contexts ($M = N /\alpha$) needed to achieve the online minimax sample complexity lower bound $\approx \min \{ d, \ln |\StateSpace \times \ActionSpace | \} \times d/\epsilon^2$ \citep{chu2011contextual,abbasi2011improved}.

\begin{figure}[tb]
	\centering
\begin{tabular}{ |p{2cm}|p{0.5cm}||p{2.5cm}|p{2.5cm}|p{2.5cm}|  }
 \hline
 \multicolumn{5}{|c|}{Sample Complexity Bounds} \\
 \hline
 & & Small $\StateSpace\times\ActionSpace$ & Large $\StateSpace\times\ActionSpace$ & Large $\StateSpace\times\ActionSpace$ \\
 \hline
Offline Data & $M$     & $d^3/\epsilon^2$& $d^3/\epsilon^2$  &$d^2/\epsilon^2$\\
Online Data & $N$  & $(d\ln|\StateSpace\times\ActionSpace |)/{\epsilon^2}$   & $d^2/\epsilon^2$ & ${d^2}/{\epsilon^2}$ \\
Regularization & $\lambda_{reg}$     & $1$ & $1$ & $d$ \\ 
\hline
\end{tabular}
\captionof{table}{Sample complexity bounds (ignoring constants and log terms) to obtain an  $\epsilon$-optimal  policy}
\label{main:table:MainTable}
\end{figure}
In the large action regime $\ln |\StateSpace \times \ActionSpace| \gtrapprox d$, a regularization level $\lambda_{reg} \approx d$ gives the optimal online sample complexity $N\approx d^2/\epsilon^2$ while requiring a context dataset of the same size ($M\approx d^2/\epsilon^2$, by choosing $\alpha = 1$). 

More often, a lower level of regularization can be preferable to introduce less bias. For example $\lambda_{reg} = 1$ is a common choice in linear bandits \citep{abbasi2011improved,chu2011contextual}.  When the cumulative covariance matrix is less regularized, we may need to compensate with additional offline data to ensure the covariance matrices are accurately estimated, which is achieved by setting $M = N/\alpha \approx dN$ (i.e., $\alpha \approx 1/d$). In this way, additional samples are collected by the planner to maintain its covariance estimation accuracy (and hence its planning accuracy) despite the lower regularization. Interestingly in our experiments the algorithm performed well numerically even when $\lambda_{reg} < 1$ and $\alpha = 1$.

More generally, the amount of regularization should be set following the classical bias-variance tradeoff; its choice is mostly a statistical learning question, and its optimal value is normally problem dependent. As a result, the problem dependent  tradeoff between different values of $\lambda_{reg}$ is not reflected in our minimax analyses but can be appreciated in our numerical experiments on the real world dataset. 

\section{Proof Sketch}

We give a brief proof sketch where we ignore constants and log factors; the full analysis can be found in the  appendix. First we introduce some notation. 
We define the scaled cumulative covariance $\Sigma_M$
 at step $m$ for the planner and the cumulative covariance $\Sigma'_n$ for the sampler at step $n$:
 \begin{align}
\label{main:eqn:covariances}
\Sigma_{m} = \alpha \sum_{j=1}^{m-1} \phi(s_j,a_j)\phi(s_j,a_j)^\top + \lambda_{reg} I,  \qquad\Sigma'_{n} & = \sum_{j=1}^{n-1} \phi(s'_j,a'_j)\phi(s'_j,a'_j)^\top + \lambda_{reg} I
\end{align}
where the planner's $j$ actions is  $a_j = \pi_{j}(s_j)$ and likewise the sampler's $j$ actions is $a'_j \sim \pi_{mix}(s'_j)$.

\paragraph{Uncertainty Stochastic Process}
We represent the value of the uncertainty though a stochastic process.
Let us define the filtration for the planner  $\F_{m} = \sigma(s_1,\dots,s_{m-1})$ at stage $m$ and for the sampler $\F'_{n} = \sigma(s_1,\dots,s_{M},s'_1,\dots,s'_{n-1})$ at stage $n$ to represent the amount of information available.
Let us also define the observed uncertainty $U_m$ for the planner and likewise $U'_n$ for the sampler:
\begin{align}
U'_m \defeq \max_{a \in \ActionSpace_{s_m}} \| \phi(s_m,a) \|_{(\Sigma_m)^{-1}}, \qquad 
U'_n \defeq \max_{a \in \ActionSpace_{s_n}} \| \phi(s_n,a) \|_{(\Sigma'_n)^{-1}}.
\end{align}
We ultimately want to bound $\E'_n U'_n = \E[U'_n \mid \F_n] = \E_{s\sim \mu} \max_{a}\| \phi(s,a) \|_{(\Sigma'_n)^{-1}}$ when $n = N$, i.e., the average uncertainty at the end, see Section~\ref{main:eqn:uncertainty_decomposition}. Likewise, let us define $\E_m U_m = \E[U_m \mid \F_m] = \E_{s\sim \mu} \max_{a}\| \phi(s,a) \|_{(\Sigma_m)^{-1}}$.

We show that $\E'_N U'_N$ is minimized in two steps: 1) we show that the planner's uncertainty $\E_M U_M$ can be bounded 2) since the sampler implements the planner's average policy, $\E'_N U'_N$ cannot be too far from $\E_M U_M$.

 We highlight that $M \geq N$ in general, i.e., the stochastic processes proceed a different speed. In fact, the planner needs more data than the sampler ($M \geq N$) as the planner's policy is reactive to the offline contexts; this forces us to use more data-intensive  concentration inequalities for the planner. 

\subsection{Offline Uncertainty}
The next lemma is formally presented in Lemma~\ref{lem:unbound} and Lemma~\ref{lem:SumOfUncertainties} in the appendix and examines the reduction in the planner's expected uncertainty $\E_M U_M$.
\begin{lemma}[Offline Expected Uncertainty] 
\label{main:lem:unbound}
With high probability we have
$$
\E_M U_M\leq \frac{1}{M}\sum_{m=1}^M \E_m U_m  \lessapprox \frac{1}{M} \sum_{m=1}^M U_m   \lessapprox \sqrt{\frac{d}{\alpha M}}.
$$
\end{lemma}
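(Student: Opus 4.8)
The plan is to establish the three inequalities in the chain from right to left. The rightmost estimate $\frac{1}{M}\sum_{m=1}^M U_m \lessapprox \sqrt{d/(\alpha M)}$ is a deterministic elliptical-potential bound: since the planner runs reward-free \textsc{LinUCB} with discount $\alpha$, each $U_m = \max_{a}\|\phi(s_m,a)\|_{\Sigma_m^{-1}}$ is controlled by $\|\phi_m\|_{\Sigma_{\underline m}^{-1}}$ up to the doubling factor $\sqrt 2$, because the policy $\pi_m$ is greedy for the uncertainty against the frozen covariance $\Sigma_{\underline m}$, and $\det(\Sigma_m)\le 2\det(\Sigma_{\underline m})$ guarantees $\Sigma_m^{-1}\preceq 2\Sigma_{\underline m}^{-1}$ in the relevant quadratic form. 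Then I would apply Cauchy–Schwarz, $\frac1M\sum_m U_m \le \sqrt{\frac1M\sum_m U_m^2}$, and the standard potential argument $\sum_m \min(1,\|\phi_m\|_{\Sigma_m^{-1}}^2) \lesssim \frac1\alpha\log\frac{\det \Sigma_{M+1}}{\det \Sigma_1} \lesssim \frac{d}{\alpha}\log(1+\alpha M/(d\lambda_{reg}))$ (the $1/\alpha$ appears because each rank-one update is scaled by $\alpha$, so one needs $1/\alpha$ as many "effective" steps). Using $\epsilon \le 1$ and boundedness of features to drop the truncation, this gives $\frac1M\sum_m U_m \lessapprox \sqrt{d/(\alpha M)}$, absorbing the log into $\lessapprox$.

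The two remaining inequalities are concentration statements, and this is where the main work lies. For the middle inequality $\frac1M\sum_m \E_m U_m \lessapprox \frac1M\sum_m U_m$, observe that $U_m - \E_m U_m$ is a martingale difference sequence with respect to $\{\F_m\}$ (since $U_m$ is $\F_{m+1}$-measurable and $\E_m U_m = \E[U_m\mid\F_m]$), and each $U_m \in [0, O(1/\sqrt{\lambda_{reg}})]$ is bounded because $\Sigma_m \succeq \lambda_{reg} I$. I would apply a Freedman- or Azuma-type inequality, but the key point — which is why the lemma is only a high-probability "$\lessapprox$" rather than an additive bound — is a self-bounding trick: the predictable quadratic variation $\sum_m \E_m[(U_m-\E_m U_m)^2] \lesssim \frac{1}{\lambda_{reg}}\sum_m \E_m U_m$ is itself controlled by the quantity we want to bound, so Freedman's inequality closes the loop and yields $\sum_m \E_m U_m \lesssim \sum_m U_m + \frac{1}{\lambda_{reg}}\log\frac1\delta$, and the additive term is lower-order given the choice $M = \widetilde\Omega(d^2\beta/(\lambda_{reg}\epsilon^2))$ and $\lambda_{reg} = \Omega(\log(d/\delta))$. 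The leftmost inequality $\E_M U_M \le \frac1M\sum_m \E_m U_m$ is monotonicity: since $\Sigma_m$ is increasing in $m$ (each update adds a PSD matrix), $\Sigma_M^{-1} \preceq \Sigma_m^{-1}$ for all $m\le M$, hence $U_M \le U_m$ pointwise over a common context, so $\E_M U_M = \E_{s\sim\mu}\max_a\|\phi(s,a)\|_{\Sigma_M^{-1}} \le \E_{s\sim\mu}\max_a\|\phi(s,a)\|_{\Sigma_m^{-1}} = \E_m U_m$ for every $m$, and averaging over $m$ preserves the inequality.

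The main obstacle is the self-bounding concentration step for the middle inequality: one must set up the martingale so that its variance proxy is a constant times the target sum, then invoke a version of Freedman's inequality that tolerates a data-dependent variance bound and solve the resulting quadratic in $\sum_m\E_m U_m$ to extract the "$\lessapprox$" with the log/constant factors hidden. Care is also needed that the doubling-schedule constant (the factor $2$, hence $\sqrt 2$ in the norm comparison) and the discount $\alpha$ propagate correctly through the potential bound; these are routine but must be tracked to confirm the stated $\sqrt{d/(\alpha M)}$ rate rather than something worse in $\alpha$.
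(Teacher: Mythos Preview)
Your proposal is correct and matches the paper's approach in all three steps: monotonicity of $u_m=\E_m U_m$ for the leftmost inequality, a self-bounding martingale concentration for the middle one, and Cauchy--Schwarz plus the elliptical potential lemma (with the doubling schedule and the $1/\alpha$ scaling) for the rightmost. The only cosmetic difference is that the paper packages your Freedman-plus-self-bounding step as a standalone ``reverse Bernstein'' inequality (\cref{thm:ReverseBernstein}): apply Bernstein to $\xi_m=\E_m U_m-U_m$, use $\E_m\xi_m^2\le \E_m U_m$ (valid since $U_m\in[0,1]$ when $\lambda_{reg}\ge 1$), and solve the resulting quadratic in $\sum_m\E_m U_m$---exactly the mechanism you describe.
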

\begin{proof}[Proof sketch]
We start by modifing a result from the classical bandit literature  (e.g., \cite{Abbasi11}) to bound the average realized uncertainty $U_m$ while accounting for the extra $\alpha$ scaling factor contained in the covariance matrix $\Sigma_M$:
$$ \frac{1}{M}\sum_{m=1}^M U_m = \frac{1}{M}\sum_{m=1}^M \|\phi(s_m,a_m) \|_{\Sigma^{-1}_m} \lessapprox \sqrt{\frac{d}{\alpha M}}. $$ 
The parameter $\alpha$ on the right hand side above arises from its inclusion in the  definition of cumulative covariance Eq.~\ref{main:eqn:covariances}. This justifies the last inequality in the lemma's statement.

While we bounded $\frac{1}{M}\sum_{m=1}^M U_m$, we need to bound the average conditional expectations $\frac{1}{M}\sum_{m=1}^M \E_m U_m$. Bernstein's inequality would bound the actual deviation $\sum_{m=1}^M U_m$ by the conditional variances (or expectations $\sum_{m=1}^M \E_m U_m$ given our boundness assumptions); here we need the opposite, and so we `reverse' the inequality in \fullref{thm:ReverseBernstein} to claim 
$\sum_{m=1}^{M} \E_m U_m \lessapprox \sum_{m=1}^{M} U_m $ with high probability to conclude.
\end{proof}

\subsection{Online Uncertainty}
In the prior section we showed that the planner would be successful in reducing its final uncertainty $\E_M U_M$ if it was collecting reward information; moreover,  its sequence of policies $\pi_1,\dots,\pi_M$ only depends on the observed contexts. Because of this, if the sampler runs the planner's average policy $\pi_{mix}$, we would expect a similar reduction in the uncertainty $\E'_N U'_N$ after $N = \alpha M$ iterations (the $\alpha$ factor simply arises because the planner's information are discounted by $\alpha$). The argument is formalized in \fullref{lem:OfflineOnline} in the appendix, which we preview here. We let  $K$ be the number of policy switches by the sampler which is bounded by $K \lessapprox d$ in \fullref{lem:Switches} in appendix.

\begin{lemma}[Relations between  Offline and Online Uncertainty]
If $\lambda_{reg} = \Omega(\ln\frac{d}{\delta})$ and $
	M = \Omega\(\frac{KN}{\lambda_{reg}} \ln\frac{dNK}{\lambda_{reg}\delta}\)
$, upon termination of  Alg.~\ref{alg:Planner} and \ref{alg:Sampler} it holds with high probability that
\begin{align*}
\E'_N U'_{N} \lesssim \E_M U_{M}.
\end{align*}
\end{lemma}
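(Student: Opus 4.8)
The plan is to compare the sampler's expected online uncertainty $\E'_N U'_N$ with the planner's expected offline uncertainty $\E_M U_M$ by passing through the inverse covariance matrices $(\Sigma'_N)^{-1}$ and $(\Sigma_M)^{-1}$. The core intuition is that $\pi_{mix}$ is exactly the uniform mixture of $\pi_1,\dots,\pi_M$, so the per-step expected rank-one update of the sampler's covariance, $\E_{s\sim\mu,\,m\sim \mathrm{Unif}[M]}[\phi(s,\pi_m(s))\phi(s,\pi_m(s))^\top]$, equals $\tfrac1M$ times the planner's expected \emph{cumulative} conditional update (up to the $\alpha$ discount baked into $\Sigma_M$). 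Consequently, in expectation after $N=\alpha M$ online steps the sampler's covariance should be $\gtrsim \Sigma_M$ in the PSD order, which by monotonicity of $x\mapsto \|x\|_{A^{-1}}$ in $A$ gives $U'_n$-type quantities controlled by $U_m$-type quantities.

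The steps I would carry out, in order: (1) Condition on the offline phase (i.e.\ on $\F_M$, which fixes $\pi_1,\dots,\pi_M$ and hence $\pi_{mix}$), so that online the tuples $(s'_n,a'_n)$ are i.i.d.; (2) write $\E'_N U'_N = \E_{s\sim\mu}\max_a \|\phi(s,a)\|_{(\Sigma'_N)^{-1}}$ and note that $\|\phi(s,a)\|_{A^{-1}}$ is decreasing in $A$ w.r.t.\ $\mle$, so it suffices to show $\Sigma'_N \mge c\,\Sigma_M$ with high probability for a constant $c$; (3) define the conditional-expectation matrix $\bar\Sigma = \E[\Sigma'_N \mid \F_M] = N\cdot \E_{s,m}[\phi(s,\pi_m(s))\phi(s,\pi_m(s))^\top] + \lambda_{reg} I$ and verify the key identity $\bar\Sigma - \lambda_{reg}I = \tfrac{N}{M}\sum_{m=1}^M \E_m[\phi(s,\pi_m(s))\phi(s,\pi_m(s))^\top]$, which, since $N=\alpha M$, matches the $\alpha$-scaled sum in the definition of $\Sigma_M$ — modulo that the planner's realized updates $\phi(s_m,a_m)\phi(s_m,a_m)^\top$ must themselves be compared with their conditional expectations; (4) apply a matrix Bernstein / matrix Freedman inequality (the bounded rank-one increments have norm $\le 1$) twice — once to show $\Sigma'_N \mge \tfrac12\bar\Sigma - O(\sqrt{\bar\Sigma\ln(d/\delta)} + \ln(d/\delta))I \mge c\,\bar\Sigma$ provided the smallest eigenvalue $\lambda_{reg}\gtrsim \ln(d/\delta)$, and once on the planner side to relate $\sum_m \E_m[\cdots]$ to $\sum_m \phi(s_m,a_m)\phi(s_m,a_m)^\top$; (5) use the policy-switch bound $K\lessapprox d$ from \fullref{lem:Switches} to control the number of distinct policies being mixed — this is what makes the i.i.d.\ concentration over the $M$-fold mixture affordable with only $M = \Omega\!\big(\tfrac{KN}{\lambda_{reg}}\ln\tfrac{dNK}{\lambda_{reg}\delta}\big)$ offline samples, since effectively only $K$ independent covariance "blocks" must concentrate, each requiring $\sim N/\lambda_{reg}$ resolution; (6) chain the PSD comparisons to get $(\Sigma'_N)^{-1}\mle C\,(\Sigma_M)^{-1}$ and take $\max_a$, then $\E_{s\sim\mu}$, to conclude $\E'_N U'_N \lesssim \E_M U_M$; finally integrate out the conditioning on $\F_M$, noting the failure probability is uniform.

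The main obstacle I expect is step (5) combined with the two-sided nature of step (4): we need $\Sigma'_N$ to be lower-bounded by (a constant times) $\Sigma_M$, but $\Sigma_M$ is itself a \emph{random} matrix built from the planner's reactive choices, so we cannot simply invoke a concentration bound around a fixed target — we must simultaneously argue the planner's realized covariance is close to its own conditional-expectation process. The doubling schedule helps here because it ensures the policy (and hence the conditional update distribution) is piecewise constant across only $K\lessapprox d$ epochs, so within each epoch we have genuinely i.i.d.\ increments and can apply a standard matrix concentration; the delicate part is that the epochs have data-dependent lengths, which forces either a union bound over a net of possible epoch boundaries or a stopping-time version of matrix Freedman. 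Managing the resulting $\ln(dNK/(\lambda_{reg}\delta))$ factor and checking that the eigenvalue floor $\lambda_{reg} = \Omega(\ln(d/\delta))$ is exactly what rescues the lower tail (so that $\tfrac12\bar\Sigma$ dominates the additive $O(\ln(d/\delta))I$ slack) is the technically fiddly core of the argument, but it is routine once the epoch structure is in place.
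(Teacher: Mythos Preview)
Your proposal is correct and follows essentially the same route as the paper: introduce the intermediate conditional-expectation matrix $\overline\Sigma$, use an i.i.d.\ matrix lower-tail bound (with the regularization floor $\lambda_{reg}\gtrsim\ln(d/\delta)$ absorbing the additive slack) to get $\Sigma'_N\succcurlyeq c\,\overline\Sigma$, decompose the planner side into the $K$ doubling epochs and union-bound over the random epoch lengths to get $\Sigma_M\preccurlyeq c'\,\overline\Sigma$, then chain. The only minor difference is that the paper obtains its multiplicative matrix bounds via matrix Chernoff (Tropp) combined with a whitening / eigenspace-projection trick rather than matrix Bernstein/Freedman, which is what delivers the stated conditions cleanly without a $\sqrt{N}$-type term; you should be prepared to swap in that tool.
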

\begin{proof}

Let $d_1,\dots,d_M$ be the conditional distributions of the feature vectors sampled at timesteps $1,\dots,M$ in Alg.~\ref{alg:Planner} after the algorithm has terminated.
Conditioned on $\mathcal F_M$, the $d_i$'s are non-random. Then the conditional expected covariance matrix given $\F_M$ can be defined as
\begin{align}
\overline \Sigma & = \alpha \sum_{i=1}^M \E_{\phi \sim d_i} \phi\phi^\top + \lambda_{reg} I
\end{align}
Now, our argument relies on some matrix concentration inequalities which hold if 
\begin{align}
\label{main:eqn:preconditions}
	\lambda_{reg} = \Omega(\ln\frac{d}{\delta}), \qquad \frac{1}{\alpha} = \Omega\(\frac{K}{\lambda_{reg}} \ln\frac{dNK}{\lambda_{reg}\delta}\).
\end{align}
Precisely, from \fullref{lem:SigmaOffline} and \fullref{lem:SigmaOnline} with high probability we have that
\begin{align}
\label{main:eqn:covok}
\forall x, \| x \|_2\leq 1: \qquad \qquad  
	\| x \|_{(\Sigma_N')^{-1}} \lesssim\;   \| x \|_{\overline \Sigma^{-1}}
	\qquad \text{and} \qquad \| x \|_{\overline \Sigma^{-1}} \lesssim \;   \| x \|_{\Sigma^{-1}_M}.
\end{align}
In words, we can relate the planner's (random) scaled covariance $\Sigma_M$ to its conditional expectation $\overline \Sigma$, with the guarantee that it won't be very different from the sampler's covariance $\Sigma'_{N}$ (which implements the planner's policy). These concentration inequalities are key to the proof  and are proved in the appendix. 

Define the policy maximizing the online uncertainty 
$
\pi'_{n}(s) = \argmax_{a \in \ActionSpace_s} \| \phi(s,a) \|_{(\Sigma'_n)^{-1}}.
$
Under the event in Eq.~\eqref{main:eqn:covok}  
we can write 
\begin{align*}
\E'_N U'_N & \defeq \E_{s \sim \mu} \max_{a \in \ActionSpace_s} \| \phi(s,a) \|_{(\Sigma'_N)^{-1}} = \E_{s \sim \mu}\| \phi(s,\pi'_N(s)) \|_{(\Sigma'_N)^{-1}}  \lesssim \E_{s \sim \mu}\| \phi(s,\pi'_N(s)) \|_{\overline \Sigma^{-1}} \\
& \lesssim \E_{s \sim \mu}\| \phi(s,\pi'_N(s)) \|_{\Sigma_M^{-1}}  \leq \E_{s \sim \mu} \max_{a} \| \phi(s,a) \|_{\Sigma_{M}^{-1}}  = \E_M U_M.
\end{align*}
\end{proof}

\subsection{Conclusion and Tradeoffs}
We can now tune the parameters $\alpha,\lambda_{reg}$. Starting from
Eq.~\eqref{main:eqn:uncertainty_decomposition},
we can write
\begin{align}
	\E_{s\sim \mu}\max_a\phi(s,a)^\top (\theta^\star-\widehat\theta) \leq \sqrt{\beta} \E'_NU'_N \lesssim\sqrt{\beta} \E_M U_M \lesssim \sqrt{\frac{\beta d}{\alpha M}} = \sqrt{\frac{\beta d}{N}}.
\end{align}
Thus, the rate-optimal online sample complexity $N \approx \frac{\beta d}{\epsilon^2}$ always suffices. However, we need a different number of offline contexts depending on the regularization that we want to adopt. Since $K \lessapprox d$, the reader can verify that if $\lambda_{reg} \approx d$ then the preconditions in Eq.~\eqref{main:eqn:preconditions} are verified with $M \approx N$, so $\alpha \approx 1$. If a lower value for the regularization $\lambda \approx 1$ is desired, $M \approx d N$ would be required (thus $\alpha \approx \frac{1}{d}$).

The reason why we might need $M \geq N$ (i.e., $\alpha \leq 1$) is that the planner's policy is adaptive to the observed offline contexts. This means that the planner's cumulative covariance $\Sigma_M$ is not a sum of i.i.d. rank-one random matrices; in this case, we need either a more data-hungry concentration inequality or heavier regularization, leading to the the precondtion on $\alpha$ in Eq.~\eqref{main:eqn:preconditions}.

\section{Experiments}

In this section, we study the empirical properties of the planner and sampler in both a synthetic dataset and a real-world dataset from the Yahoo! Learning to Rank challenge \citep{chapelle2011yahoo}. The objective of these experiments is to compare the performance of the planner-sampler pair with alternatives and analyze its sensitivity to regularization.

\subsection{Synthetic dataset}

\begin{figure}[h]
\vspace{-0.5cm}
\begin{subfigure}{.5\textwidth}
\centering
\includegraphics[width=5cm]{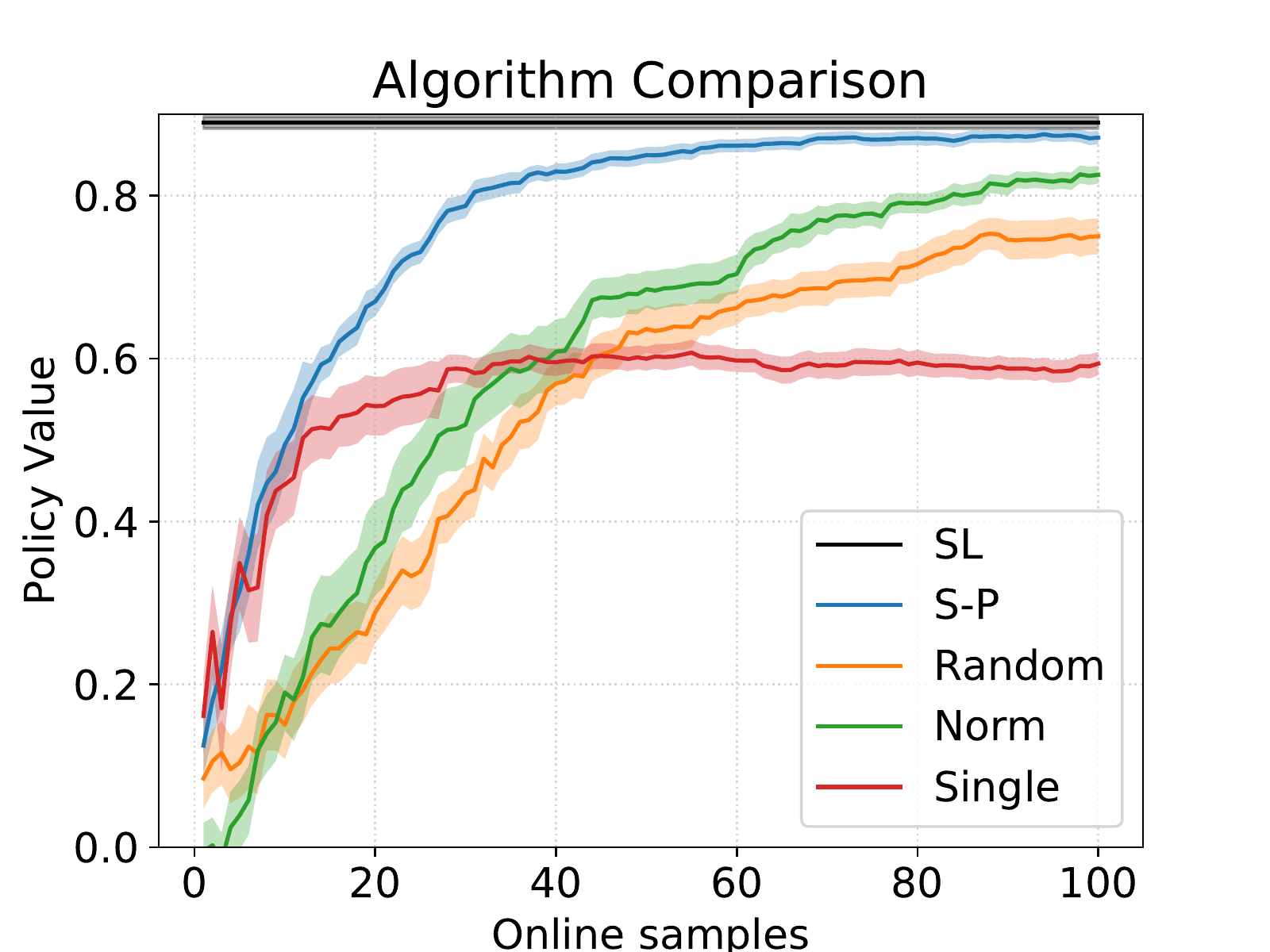} 
\caption{}
\end{subfigure}
\begin{subfigure}{.5\textwidth}
\centering
\includegraphics[width=5cm]{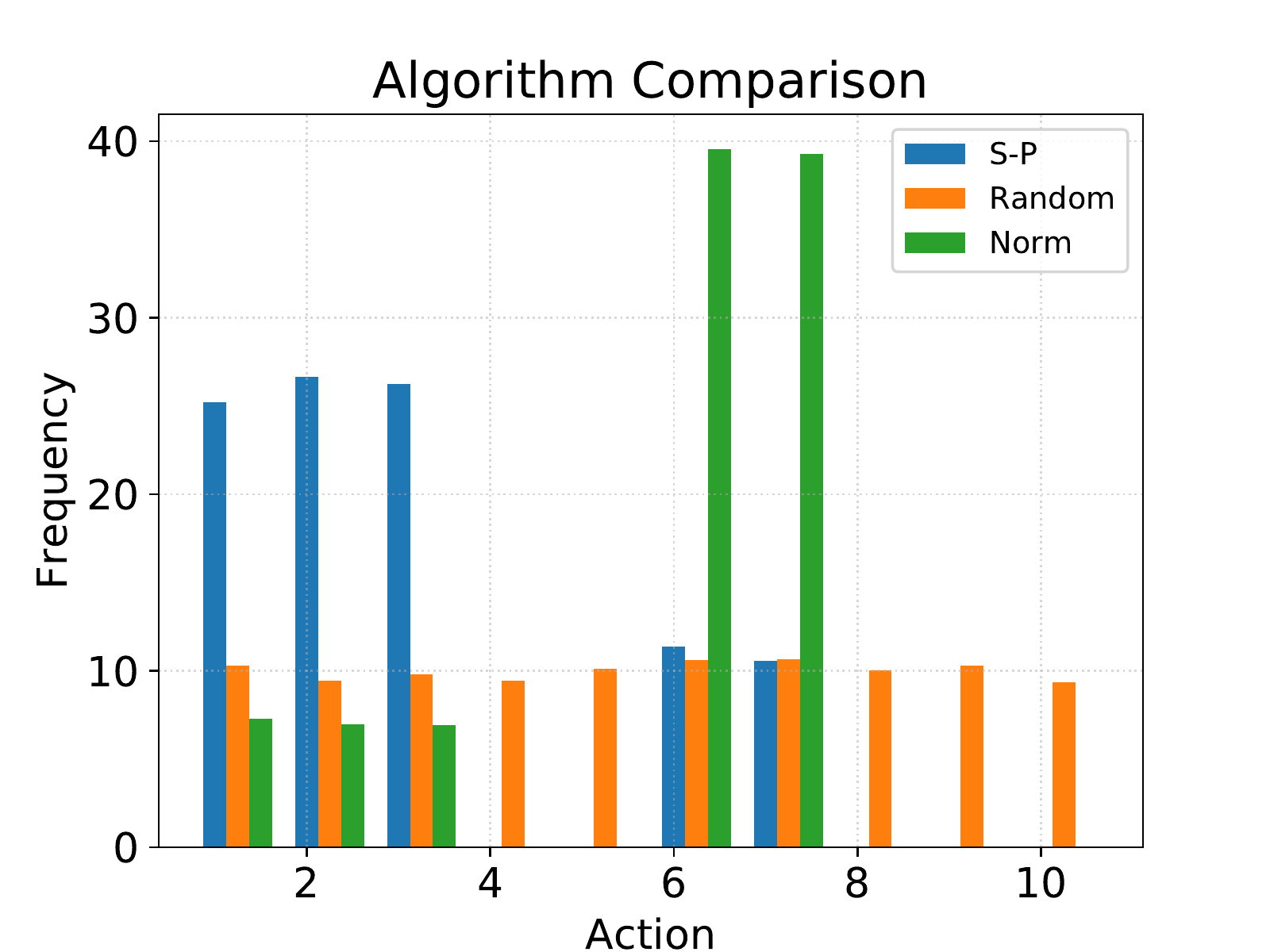}
\caption{}
\end{subfigure}%
    \caption{Synthetic Setting. In the left figure, the policy value performance of the sampler-planner (S-P) is compared with a random algorithm (Random), a largest norm algorithm (Norm), and an algorithm that always chooses the same single action (Single). Supervised learning (SL) represents an approximate upper bound; however, we don't expect the algorithms to be able to reach the SL performance as they all receive bandit feedback compared to the full feedback available to SL (10,000 samples). In the right figure, the empirical distributions over the actions chosen by the algorithms are compared, with the exception of Single, which always chooses action 1. }\label{fig::synth}
    \vspace{-3mm}
\end{figure}

We first conduct a synthetic validation experiment in order to verify that the algorithms behave consistently with the intuition and theory developed in the previous sections. Towards this end, we constructed a simple linear contextual bandit problem with $d = 20$ and $\mathcal A = \{ 1,\ldots, 10\}$. Each context $s \in \mathcal S$ is associated with features vectors $\{ \phi(x, a)\}_{a \in \mathcal A}$. 
Each context belongs to one of three discrete categories with equal probability. In category $i$, the action $a = i$ has features distributed as $\phi(X, a) \sim \mathcal N(0, \Sigma_1)$ where $\Sigma_1 = \diag(1, 10^{-9},\ldots, 10^{-9})$. 
There are also two other actions with variance $1$ at unique coordinates on the diagonal for each context. All contexts also share two actions $a = 5, 6$ such that $\phi(X, 5), \phi(X, 6) \sim \mathcal N(0, \diag(10^{-9}, \ldots, 10^{-9}, 5))$ independently. All remaining actions lead to identically zero features. 
The first $d - 1$ coordinates of $\theta$ were chosen randomly from $\{-1, 1\}^{d - 1}$. The last coordinate is always zero.

We considered four algorithms: (1) a random algorithm that always chooses actions uniformly from $\mathcal A$, (2) a largest norm algorithm that always chooses the feature with the largest norm, (3) an algorithm that always chooses action $1$ and (4) the sampler-planner algorithms proposed in this work. The planner is first run on an independent dataset of the same size as the one to be used for training in order to experience contexts from the same distribution that generates the online contexts. All algorithms are then run in the online phase with reward feedback.  The algorithms are then evaluated based on their resulting policy values during the online phase on a held-out test set. All algorithms used $\lambda  = 1$ and the planner used $\alpha = 1$ as these worked well empirically, though we further explore the impact of $\lambda$ in the next section. 

The synthetic dataset is designed in such a way that most actions are uninformative and that it is often most prudent to get information about the first and last coordinates since there is the most variance in those directions. Thus a random algorithm should perform poorly, spending time on irrelevant actions. The largest norm algorithm also fails: while much is learned about the last coordinate of $\theta$, little is learned about the others which are necessary for making good decisions. Finally, the single action algorithm fails to account for the switching modes of the contexts. Figure~\ref{fig::synth} shows the policy values plotted over the number of samples taken and also the action distributions of the algorithms. Each line represents the mean of $20$ trials and the error bars represent the standard error.

\subsection{Learning to rank dataset}

We evaluate the planner-sampler performance on real-world data from the Yahoo! Learning to Rank challenge \citep{chapelle2011yahoo}, which has been used previously \citep{foster2018practical}.

The ranking dataset is structured as follows. Each datapoint (row) is a $700$-dimensional feature vector associated with a ranking relevance score that takes on values in $\{ 0, 1, 2, 3, 4\}$. Each datapoint is also associated with a single context, termed a query. Multiple datapoints matched with the same query correspond to actions (documents) that the learner can choose when presented with that query. The dataset is already divided into training, validation, and testing data.

The objective is to choose actions, among those presented for a given query, that maximize the relevance scores. Unlike the previous synthetic dataset, the reward function here is misspecified, i.e., it does not exactly follow a linear model. 
To reduce the computational burden, we randomly subsampled coordinates of the original $700$-dimensional features down to $300$-dimensional features. We then normalized them to ensure the norms are at most $1$. Similar to prior work, we also limit the number actions (documents) available at any given context to $K = 20$.

\begin{wrapfigure}{r}{0.45\textwidth}
\vspace{-0.75cm}
    \centering
\includegraphics[width=\linewidth]{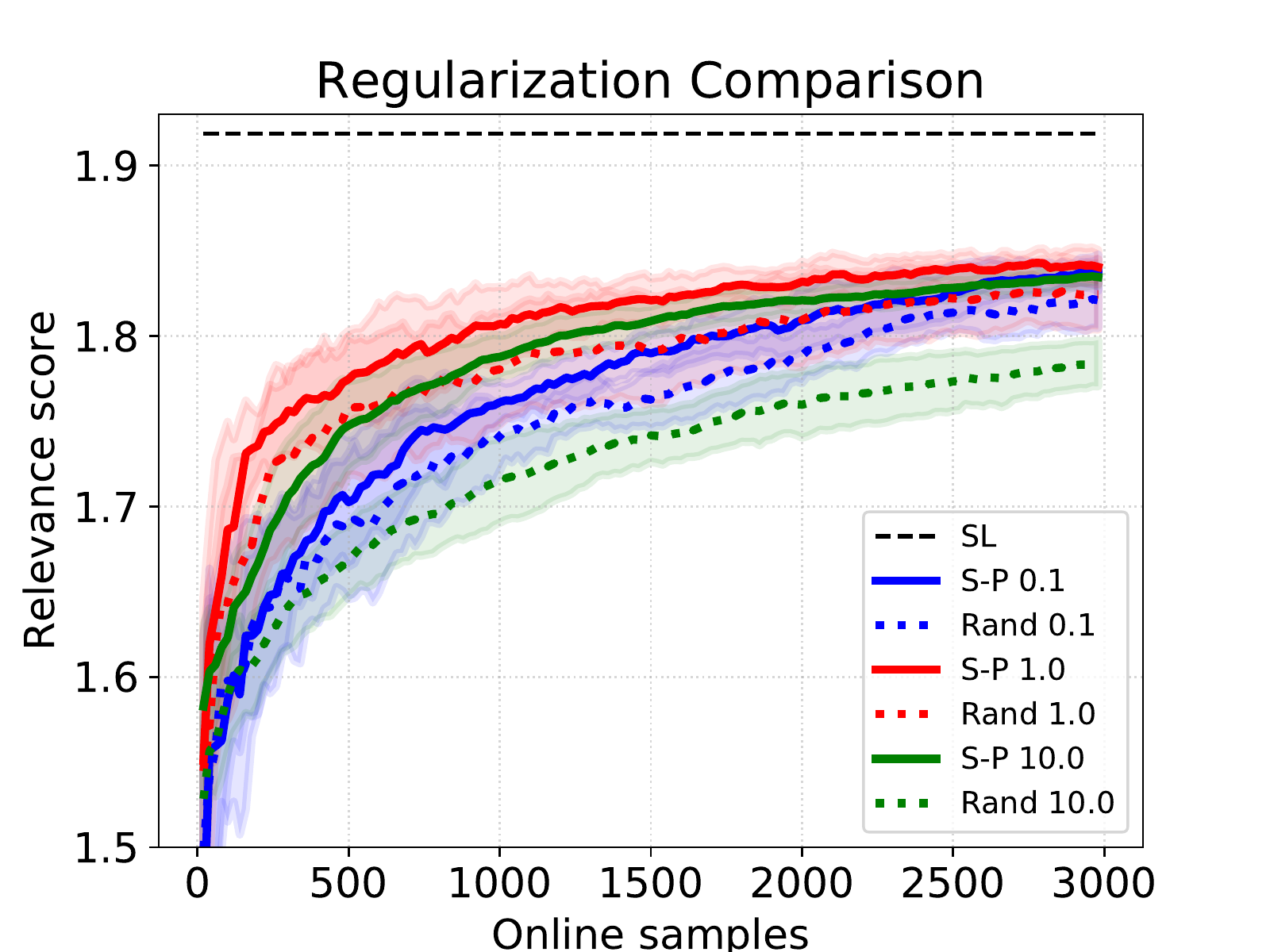} 
  \caption{The sampler-planner (S-P) pair is compared with a random sampler (Rand) with varying levels of regularization $\lambda$. The supervised learning oracle (SL), which observes state-action pairs in the training set, is shown as baseline.}\label{fig::real-reg}
  \vspace{-0.5cm}
\end{wrapfigure}
As before, we compare the planner-sampler pair to  a random algorithm and a supervised learning oracle that observes all the context-action features and relevance scores, which is 200,000 training samples in total. To simulate streaming the data online to the algorithms, we sequentially iterate through the contexts and present the associated features to the learner as actions.
During the offline phase, we run the planner on the validation set where none of the relevance scores are observed in order to generate the sampling policies. 
During the online phase, we run both the sampler and the random algorithm on the training data such that they observe the same contexts but may take different actions. The maximum number of distinct contexts we can iterate over is about $3000$.  The resulting policies are evaluated on the full test set after every 20 training observations. This experiment was repeated with varying values of the regularization parameter $\lambda \in \{ 0.1, 1, 10\}$. The planner uses $\alpha = 1$; we did not find significant improvements with $\alpha < 1$. Additional values for $\lambda$ are reported in the supplementary material.

Figure~\ref{fig::real-reg} depicts the results. Each point on the graph represents the mean of $10$ independent trials and the error bands represent the standard deviation. The randomness is due only to internal randomness in the algorithms and the randomized order in which the contexts are revealed. The sampler-planner consistently outperforms the random algorithm in most regimes, including $\lambda =  1$ where the pinnacle performance of both algorithms is achieved. The gap between the algorithms is widened for larger choices of $\lambda$. We notice that for $\lambda = 0.1$ the curves appear to catch up as more data is gathered; it is possible that they  would outperform larger value of regularization if more samples  were available. The optimal choice for $\lambda$ depends on the bias-variance tradeoff for such problem. Note that despite misspecification, our sampler-planner approach consistently matches or significantly exceeds the performance of the baseline approach, particularly when the number of samples is small.

\section{Conclusion}
Assuming access to a set of offline contexts, this work proposes an algorithm to find a single stochastic policy to acquire data strategically. The procedure is sound as validated by a theoretical analysis and by experiments even when the reward model is not linear. 

Our work opens several interesting directions: (1) can we devise a procedure that works across all values of regularization without severely impacting the amount of offline data we need? (2) how does the performance of the algorithm degrade if the online context distribution differs from the offline one? (3) can we incorporate prior reward data to reduce the sample complexity, and how is the benefit of such data impacted by the size of the  state-action space, and the decision policy used to gather the prior reward data, which might be an adversarial procedure?

Our work presents little direct  negative potential societal impacts though whether contextualized decision policies are beneficial or harmful depends on the setting and the optimization (reward function) chosen by a system designer. 

\begin{small}
\bibliographystyle{apalike}
\bibliography{rl}
\end{small}

\newpage
\appendix
\good{AZ: check that the lazy update is handled correctly in appendix at this stage}

\tableofcontents

\section{Proof of Theorem \ref{thm:main}}
\begin{proof}
We start by invoking \fullref{prop:LinRegUnionBound}. Define a shorthand for the RHS of \cref{prop:LinRegUnionBound}
\begin{align}
	\sqrt{\beta}\defeq \min\{ \alpha_1,\alpha_2\} +\sqrt{\lambda_{reg}}\| \theta^\star\|_2
\end{align}
With a rescaling on $\delta$ we write
\begin{align}
	\Pro\(  \underbrace{\forall (s,a): \quad  |\phi(s,a)^\top(\theta^\star - \widehat\theta)| \lesssim \| \phi(s,a)\|_{(\Sigma'_N)^{-1}}\ln(N)\sqrt{\beta}}_{\Eps_1}\) \geq 1-\frac{\delta}{4}.
\end{align}
Notice that under the event $\Eps_1$ above
$$\E_{s \sim \mu}\max_{a \in \ActionSpace_s} |\phi(s,a)^\top(\theta^\star - \widehat\theta)| \leq \ln(N)\sqrt{\beta}\E_{s \sim \mu} \max_{a \in \ActionSpace_s} \| \phi(s,a)\|_{(\Sigma'_N)^{-1}} = \ln(N)\sqrt{\beta}u'_N$$ 
Recall that $K$ is the number of policy switches by the sampler and Lemma~\ref{lem:Switches} gives $K=\tildeO(d)$. We set 
\begin{align}
	\lambda_{reg} & = \Omega(\ln (d/\delta)), \\
	M &= \Omega\(\frac{KN}{\lambda_{reg}} \ln\frac{d^2N}{\lambda_{reg}\delta}\),
	\label{eqn:preconditions}
\end{align}
then \fullref{lem:OfflineOnline} relates the expected online uncertainty $u'_N$ with the expected offline uncertainty $u_M$
\begin{align}
	\Pro\( \underbrace{u'_{N} \lesssim u_{M}}_{\Eps_2}\) \geq 1-\frac{\delta}{2}
\end{align}
Finally, the expected offline uncertainty is bounded by in \fullref{lem:unbound} chained with \fullref{lem:SumOfUncertainties}
\begin{align}
	\Pro\(\underbrace{u_{M} \lesssim \frac{1}{M}\ln \frac{1}{\delta} + \sqrt{\frac{1}{\alpha M}  d\ln\(\frac{d\lambda_{reg}+ M}{d}\)}}_{\Eps_3} \) \geq 1-\frac{\delta}{4}.
\end{align}
A union bound over the events $\Eps_1,\Eps_2,\Eps_3$ and chaining the statements now produces with probability at least $1-\delta$
\begin{align}
	\E_{s \sim \mu}\max_{a \in \ActionSpace_s} |\phi(s,a)^\top(\theta^\star - \widehat\theta)| & \lesssim \frac{\ln(N)\sqrt{\beta}}{M}\ln \frac{1}{\delta} + \ln(N)\sqrt{\frac{\beta}{\alpha M}  d\ln\(\frac{d\lambda_{reg}+ M}{d}\)} \\
	 & \lesssim \frac{\ln(N)\sqrt{\beta}}{M}\ln \frac{1}{\delta} + \ln(N)\sqrt{\frac{\beta}{N}  d\ln\(\frac{d\lambda_{reg}+ M}{d}\)} \label{eqn:boundphi}
\end{align}
after recalling $\alpha = N/M$. As a result, we set
$$
N = \Omega\( \frac{d\beta}{\epsilon^2} \ln^3 \pbra{\lambda_{reg}+\frac{d^2\beta}{\lambda_{reg}\epsilon\delta}} \)
$$
and
$$
M = \Omega \(\frac{K d \beta}{ \lambda_{reg}\epsilon^2}\ln^4 \pbra{\lambda_{reg}+\frac{d^2\beta}{\lambda_{reg}\epsilon \delta}} \).
$$
As a result, we satisfy the preconditions in Eq.~\eqref{eqn:preconditions} because 
\begin{align*}
	&\frac{KN}{\lambda_{reg}} \ln\frac{d^2N}{\lambda_{reg}\delta} \\
	=\;& \frac{K \frac{d\beta}{\epsilon^2} \ln^3 \pbra{\lambda_{reg}+\frac{d^2\beta}{\lambda_{reg}\epsilon\delta}}}{\lambda_{reg}} \ln \(\frac{d^2}{\lambda_{reg}\delta} \frac{d\beta}{\epsilon^2} \ln^3 \pbra{\lambda_{reg}+\frac{d^2\beta}{\lambda_{reg}\epsilon\delta}} \) \\
	\lesssim \;& \frac{K d \beta}{ \lambda_{reg}\epsilon^2}\ln^3 \pbra{\lambda_{reg}+\frac{d^2\beta}{\lambda_{reg}\epsilon \delta}}3\ln \(\frac{d}{\lambda_{reg}\delta} \frac{d\beta}{\epsilon} \ln \pbra{\lambda_{reg}+\frac{d^2\beta}{\lambda_{reg}\epsilon\delta}} \)\tag{Note that $\frac{d\beta}{\lambda_{reg}\delta}\ge 1$}\\
	\lesssim \;& \frac{K d \beta}{ \lambda_{reg}\epsilon^2}\ln^4 \pbra{\lambda_{reg}+\frac{d^2\beta}{\lambda_{reg}\epsilon \delta}}.
\end{align*}

Note that $\epsilon\le Kd \sqrt{\beta}/\lambda_{reg}$. Then 
\begin{align}
\frac{ \ln(N)\sqrt{\beta}}{M}\ln \frac{1}{\delta} & \lesssim    \frac{\epsilon \ln(N)}{\ln^4 \pbra{\lambda_{reg}+\frac{d^2\beta}{\lambda_{reg}\epsilon \delta}} }\ln \frac{1}{\delta}\label{equ:pf-thm1-6}
% & = 
% \frac{\epsilon \ln \( 
%\frac{d\beta}{\epsilon^2} \ln^3 \pbra{\lambda_{reg}+\frac{d^2\beta}{\lambda_{reg}\epsilon\delta}}   \) }{\ln^4 \pbra{\lambda_{reg}+\frac{d^2\beta}{\lambda_{reg}\epsilon \delta}} }\ln \frac{1}{\delta}.
\end{align}
By the facts $\ln(x\ln(y))\le \ln(xy)\le 2\ln(\max(x,y))$ and $\ln(x^k)=k\ln(x),$ we have
\begin{align}
&\ln(N)=\ln \(\frac{d\beta}{\epsilon^2} \ln^3 \pbra{\lambda_{reg}+\frac{d^2\beta}{\lambda_{reg}\epsilon\delta}}\)
\le 3\ln \(\frac{d\beta}{\epsilon} \ln \pbra{\lambda_{reg}+\frac{d^2\beta}{\lambda_{reg}\epsilon\delta}}\)\\
\le\; & 6\ln \(\max\(\frac{d\beta}{\epsilon},\lambda_{reg}+\frac{d^2\beta}{\lambda_{reg}\epsilon\delta}\)\)\le 6\ln \(\lambda_{reg}+\frac{d^2\beta}{\lambda_{reg}\epsilon\delta}\).\label{equ:pf-thm1-lnn}, 
\end{align}
since $\lambda$ is at most $d$ so $\frac{d}{\lambda \delta} \geq 1.$
Continuing Eq.~\eqref{equ:pf-thm1-6}, we get
\begin{align}
	\frac{ \ln(N)\sqrt{\beta}}{M}\ln \frac{1}{\delta}\lesssim    \frac{\epsilon \ln(N)}{\ln^4 \pbra{\lambda_{reg}+\frac{d^2\beta}{\lambda_{reg}\epsilon \delta}} }\ln \frac{1}{\delta}\lesssim \epsilon.
\end{align}
%$\le \epsilon/2$. 
Note also that under this choice for $N$  the right most term of Eq.~\eqref{eqn:boundphi} can be upper bounded by 
\begin{align}
&\ln(N)\sqrt{\frac{\beta}{N}  d\ln\(\frac{d\lambda_{reg}+ M}{d}\) } \lesssim \epsilon \ln(N)  \sqrt{  \ln^{-3} \pbra{\lambda_{reg}+\frac{d^2\beta}{\lambda_{reg}\epsilon\delta}} \ln \( \frac{d\lambda_{reg}+ M}{d}\) }.\label{equ:pf-thm1-7}
%\lesssim \;& \epsilon  \ln \(\lambda_{reg}+\frac{Kd^2\beta}{\lambda_{reg}\epsilon\delta} \)\sqrt{\ln^{-2} \(\lambda_{reg}+\frac{Kd^2\beta}{\lambda_{reg}\epsilon\delta} \)}\\
%\lesssim \;& \epsilon.
\end{align}
Recall that $K=\tildeO(d)\le O(d^2).$ The logarithmic term can be upper bounded by
\begin{align}
	&\ln \( \frac{d\lambda_{reg}+ M}{d}\)=\ln\(\lambda_{reg}+\frac{K\beta}{\lambda_{reg}\epsilon^2}\ln^4 \pbra{\lambda_{reg}+\frac{d^2\beta}{\lambda_{reg}\epsilon \delta}}\)\\
	\lesssim\;& \ln\(\lambda_{reg}+\frac{d^2\beta}{\lambda_{reg}\epsilon^2}\ln^4 \pbra{\lambda_{reg}+\frac{d^2\beta}{\lambda_{reg}\epsilon \delta}}\)\\
	\le \;& 4\ln\(\lambda_{reg}+\frac{d^2\beta}{\lambda_{reg}\epsilon}\ln \pbra{\lambda_{reg}+\frac{d^2\beta}{\lambda_{reg}\epsilon \delta}}\)\\
	\le \;& 8\ln\(\lambda_{reg}+\frac{d^2\beta}{\lambda_{reg}\epsilon\delta}\).
\end{align}
Now we continue with Eq.~\eqref{equ:pf-thm1-7}
\begin{align*}
	& \epsilon \ln(N) \sqrt{  \ln^{-3} \pbra{\lambda_{reg}+\frac{d^2\beta}{\lambda_{reg}\epsilon\delta}} \ln \( \frac{d\lambda_{reg}+ M}{d}\) }\\
	\lesssim \;& \epsilon \ln(N) \sqrt{  \ln^{-2} \pbra{\lambda_{reg}+\frac{d^2\beta}{\lambda_{reg}\epsilon\delta}}}\\
	\lesssim \;& \epsilon \ln \(\lambda_{reg}+\frac{d^2\beta}{\lambda_{reg}\epsilon\delta}\)\sqrt{  \ln^{-2} \pbra{\lambda_{reg}+\frac{d^2\beta}{\lambda_{reg}\epsilon\delta}}} \tag{By Eq.~\eqref{equ:pf-thm1-lnn}}
	\lesssim \epsilon.
\end{align*}

Rescaling $\epsilon$ by a constant, we ensure
\begin{align}\label{equ:pf-thm1-0}
	\Pro\( \E_{s \sim \mu}\max_{a \in \ActionSpace_s} |\phi(s,a)^\top(\theta^\star - \widehat\theta)| \leq \epsilon \) \geq 1-\delta.
\end{align}

Now we prove Eq.~\eqref{equ:thm1-2}. Let $\pi^\star(s)\defeq \argmax_{a\in\calA_s}\phi(s,a)^\top \thetas.$ Define  $\Delta(s)\defeq \max_{a \in \ActionSpace_s} |\phi(s,a)^\top(\theta^\star - \widehat\theta)|$ for shorthand. It follows immediately that 
\begin{align}
	{\phi(s,\widehat\pi(a))}^\top{\thetas}&\ge {\phi(s,\widehat\pi(a))}^\top{\widehat\theta}-\Delta(s),\label{equ:pf-thm1-1}\\
	{\phi(s,\pi^\star(a))}^\top{\thetas}&\le {\phi(s,\pi^\star(a))}^\top{\widehat\theta}+\Delta(s).\label{equ:pf-thm1-2}
\end{align}
By the definition of $\widehat\pi$, we have
\begin{align}
	{\phi(s,\widehat\pi(a))}^\top{\widehat\theta}\ge{\phi(s,\pi^\star(a))}^\top{\widehat\theta}.\label{equ:pf-thm1-3}
\end{align}
Combining Eqs.~\eqref{equ:pf-thm1-1},~\eqref{equ:pf-thm1-2}, and~\eqref{equ:pf-thm1-3} we get
\begin{align}
	{\phi(s,\widehat\pi(a))}^\top{\thetas}\ge{\phi(s,\pi^\star(a))}^\top{\thetas}-2\Delta(s).
\end{align}
Consequently, 
\begin{align}\label{equ:pf-thm1-4}
	\E_{s\sim \mu}\bbra{\pbra{\phi(s,\pi^\star(a))-\phi(s,\pi(a))}^\top{\thetas}}\le 2\E_{s\sim \mu}\bbra{\Delta(s)}.
\end{align}
As a result, under the same event indicated by Eq.~\eqref{equ:pf-thm1-0}, we get
\begin{align}\label{equ:pf-thm1-5}
	\E_{s \sim \mu}\bbra{ \( \phi(s,\pi^\star(s))-\phi(s,\pihat(s)) \)^\top\theta^\star} \leq 2\epsilon,
\end{align}
which is exactly Eq.~\eqref{equ:thm1-2}.
\end{proof}

\section{Linear Regression}
\begin{proposition}[Linear Regression Non-Adaptive Setting]
\label{prop:LinReg}
Consider drawing $n$ i.i.d. copies of $\phi_i$ from some fixed distribution, and define
	$$\widehat \theta = \( \sum_{i=1}^n \phi_i\phi_i^\top +\lambda_{reg} I\)^{-1} \sum_{i=1}^n \phi_i y_i	$$
	where 
	\begin{align}
		y_i = \phi_i^\top \theta^\star + \eta_i
	\end{align}
	for some fixed $\theta^\star$ 
	and $\eta_i$ is mean zero $1$-subgaussian conditioned on $\phi_i$.
	Then for any fixed vector $x$
	$$
	\Pro\( x^\top (\theta^\star - \widehat \theta )  \leq \| x\|_{\Sigma^{-1}}\( \sqrt{2\ln\frac{2}{\delta}} + \sqrt{\lambda_{reg}}\|\theta^\star\|_{2}\)\) \geq 1-\delta.
	$$
\end{proposition}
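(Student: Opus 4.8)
The plan is to decompose the estimation error $x^\top(\thetas - \widehat\theta)$ into a bias term (from regularization) and a stochastic term (from the subgaussian noise), then bound each separately. Write $\Sigma = \sum_{i=1}^n \phi_i\phi_i^\top + \lambda_{reg} I$. Substituting $y_i = \phi_i^\top\thetas + \eta_i$ into the definition of $\widehat\theta$ gives
\begin{align*}
\widehat\theta = \Sigma^{-1}\Big(\sum_{i=1}^n \phi_i\phi_i^\top\Big)\thetas + \Sigma^{-1}\sum_{i=1}^n \phi_i\eta_i = \thetas - \lambda_{reg}\Sigma^{-1}\thetas + \Sigma^{-1}\sum_{i=1}^n \phi_i\eta_i,
\end{align*}
where I used $\Sigma^{-1}\big(\sum_i \phi_i\phi_i^\top\big) = \Sigma^{-1}(\Sigma - \lambda_{reg} I) = I - \lambda_{reg}\Sigma^{-1}$. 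Hence
\begin{align*}
x^\top(\thetas - \widehat\theta) = \lambda_{reg}\, x^\top \Sigma^{-1}\thetas - x^\top\Sigma^{-1}\sum_{i=1}^n \phi_i\eta_i.
\end{align*}

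For the bias term, I would apply Cauchy--Schwarz in the $\Sigma^{-1}$ inner product: $\lambda_{reg}\, x^\top\Sigma^{-1}\thetas \le \lambda_{reg}\|x\|_{\Sigma^{-1}}\|\thetas\|_{\Sigma^{-1}}$, and then use $\Sigma \succeq \lambda_{reg} I$, i.e. $\Sigma^{-1} \preceq \lambda_{reg}^{-1} I$, to get $\|\thetas\|_{\Sigma^{-1}} \le \lambda_{reg}^{-1/2}\|\thetas\|_2$. This yields a bound of $\sqrt{\lambda_{reg}}\|x\|_{\Sigma^{-1}}\|\thetas\|_2$, matching the second term in the claim.

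For the stochastic term, condition on $\phi_1,\dots,\phi_n$ (which makes $\Sigma$ and the vector $w \defeq \Sigma^{-1}x$ deterministic) and write $-x^\top\Sigma^{-1}\sum_i \phi_i\eta_i = -\sum_i (w^\top\phi_i)\eta_i$, a weighted sum of conditionally mean-zero $1$-subgaussian random variables; it is therefore $\sigma$-subgaussian with $\sigma^2 = \sum_i (w^\top\phi_i)^2 = w^\top\big(\sum_i \phi_i\phi_i^\top\big)w \le w^\top\Sigma w = x^\top\Sigma^{-1}x = \|x\|_{\Sigma^{-1}}^2$. A standard subgaussian tail bound then gives that this term is at most $\|x\|_{\Sigma^{-1}}\sqrt{2\ln(1/\delta)}$ with probability at least $1-\delta$ (the factor $2/\delta$ rather than $1/\delta$ in the statement simply reflects that one could take a two-sided bound, or is harmless slack). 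Since the conditional bound does not depend on the $\phi_i$, it holds unconditionally; combining with the deterministic bias bound and a union-free single application of the tail inequality completes the proof. The only mild subtlety — not really an obstacle — is being careful that $\Sigma^{-1}$ exists (guaranteed by $\lambda_{reg} > 0$) and that the subgaussianity of $\eta_i$ is conditional on $\phi_i$, so that freezing the $\phi_i$'s is legitimate; everything else is routine linear algebra and a Chernoff bound.
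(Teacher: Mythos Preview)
Your proposal is correct and follows essentially the same route as the paper: the same bias/noise decomposition via $\Sigma^{-1}(\Sigma-\lambda_{reg}I)=I-\lambda_{reg}\Sigma^{-1}$, the same Cauchy--Schwarz bound $\lambda_{reg}\|x\|_{\Sigma^{-1}}\|\thetas\|_{\Sigma^{-1}}\le\sqrt{\lambda_{reg}}\|x\|_{\Sigma^{-1}}\|\thetas\|_2$, and the same conditional-on-$\phi_i$ Hoeffding/subgaussian tail bound with variance proxy $\sum_i(x^\top\Sigma^{-1}\phi_i)^2\le\|x\|_{\Sigma^{-1}}^2$. The paper uses the two-sided Hoeffding bound, which accounts for the $2/\delta$ you flagged.
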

% %|\sum_{i=1}^nx^\top \Sigma^{-1}\phi_i\eta_i| \leq \| x\|_{\Sigma^{-1}}\(
\begin{proof}
\begin{align}
	x^\top (\theta^\star - \widehat \theta ) & = x^\top \theta^\star - x^\top \( \sum_{i=1}^n \phi_i\phi_i^\top +\lambda_{reg} I\)^{-1} \( \sum_{i=1}^n \phi_i \(\phi_i^\top \theta^\star + \eta_n \) + \lambda_{reg} \theta^\star  -\lambda_{reg} \theta^\star  \) \\
	& = - x^\top \Sigma^{-1}\sum_{i=1}^n\phi_i\eta_i + \lambda_{reg} x^\top \Sigma^{-1}\theta^\star.
\end{align}	
Using Cauchy-Schwartz we have
\begin{align}
	\lambda_{reg} x^\top \Sigma^{-1}\theta^\star \leq  \lambda_{reg} \| x\|_{\Sigma^{-1}}\|\theta^\star\|_{\Sigma^{-1}}.
\end{align}
We have that $\lambda_{reg} \|\theta^\star\|_{\Sigma^{-1}} \leq \sqrt{\lambda_{reg}}\|\theta^\star\|_{2}$. Since the feature vectors $\phi_i$'s are sampled from a fixed distribution, conditioned on the sampled state-actions $\phi_i$, both the covariance matrix $\Sigma$ is fixed and the noise $\eta_i$ is independent and $1$ sub-Gaussian. Define $v_i = x^T \Sigma^{-1} \phi_i \eta_i$. Then conditioned on the sampled state-actions $\phi_i$, the $v$s are independent random $(x^T \Sigma^{-1} \phi_i)^2$ sub-Gaussian random variables and we can apply Hoeffding's inequality, conditioned on the observed state-action features $\phi_i$:
%Using Hoeffding inequality for independent scaled ($x^T \Sigma^{-1} \phi_i) sub-Gaussian random variables we get
\begin{align}
	\Pro\(|\sum_{i=1}^nx^\top \Sigma^{-1}\phi_i\eta_i| \leq \sqrt{2\sum_{i=1}^n (x^\top \Sigma^{-1}\phi_i)^2 \ln\frac{2}{\delta}} = \| x\|_{\Sigma^{-1}}\sqrt{2\ln\frac{2}{\delta}} \) \geq 1-\delta.
\end{align}
Combining with the regularization part, we conclude.
\end{proof}

\begin{proposition}[Linear Regression with Union Bound]
\label{prop:LinRegUnionBound}
In the same setting as \cref{prop:LinReg} we have that
	$$
	\Pro\(\forall (s,a): 
	\phi(s,a)^\top (\theta^\star - \widehat \theta )  \leq  \| \phi(s,a) \|_{\Sigma^{-1}}\( \min\{ \alpha_1,\alpha_2\} +\sqrt{\lambda_{reg}}\| \theta^\star\|_2 \) \) \geq 1-\delta.
	$$
	where
	\begin{align}
	\alpha_1 & \defeq \alphaone\\
	 \alpha_2 & \defeq \alphatwo
	\end{align}
\end{proposition}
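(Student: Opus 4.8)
The plan is to upgrade the fixed-vector tail bound of \cref{prop:LinReg} to a uniform-over-$(s,a)$ guarantee, and to do so by two different covering/union-bound arguments corresponding to the two candidate values $\alpha_1$ and $\alpha_2$; taking the better of the two then yields the claimed $\min\{\alpha_1,\alpha_2\}$. Recall from the proof of \cref{prop:LinReg} that, conditioned on the sampled features $\phi_i$, we have the exact decomposition
\begin{align*}
\phi(s,a)^\top(\theta^\star - \widehat\theta) = -\,\phi(s,a)^\top \Sigma^{-1}\sum_{i=1}^n \phi_i\eta_i \;+\; \lambda_{reg}\,\phi(s,a)^\top \Sigma^{-1}\theta^\star,
\end{align*}
and the second term is controlled deterministically by $\|\phi(s,a)\|_{\Sigma^{-1}}\sqrt{\lambda_{reg}}\|\theta^\star\|_2$ via Cauchy--Schwarz, exactly as before. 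So the whole task reduces to a uniform high-probability bound on the random noise term $Z(s,a) \defeq \phi(s,a)^\top \Sigma^{-1}\sum_{i=1}^n \phi_i\eta_i$, normalized by $\|\phi(s,a)\|_{\Sigma^{-1}}$.

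For the $\alpha_1$ bound (the ``small $\StateSpace\times\ActionSpace$'' regime), I would simply take a union bound of the Hoeffding estimate from \cref{prop:LinReg} over all $|\StateSpace\times\ActionSpace|$ pairs: replacing $\delta$ by $\delta/|\StateSpace\times\ActionSpace|$ in $\sqrt{2\ln(2/\delta)}$ gives $\sqrt{2\ln(2|\StateSpace\times\ActionSpace|/\delta)} = \sqrt{2\ln 2\,|\StateSpace\times\ActionSpace| + \ln(1/\delta)}$ up to the way the macro groups terms, which matches $\alpha_1 = \alphaone$. For the $\alpha_2$ bound (the ``large $\StateSpace\times\ActionSpace$'' regime, where we cannot afford a $\ln|\StateSpace\times\ActionSpace|$ factor), I would instead write the noise term as $\langle x, \, \xi\rangle$ where $x = \Sigma^{-1/2}\phi(s,a)/\|\phi(s,a)\|_{\Sigma^{-1}}$ is a unit vector in $\R^d$ and $\xi = \Sigma^{-1/2}\sum_i \phi_i\eta_i$ is a fixed (conditionally) sub-Gaussian vector, so that $\sup_{(s,a)} Z(s,a)/\|\phi(s,a)\|_{\Sigma^{-1}} \le \|\xi\|_2$, and then bound $\|\xi\|_2$ by a standard $\epsilon$-net argument over the unit sphere $\mathbb{S}^{d-1}$: a $1/2$-net has size at most $6^d$, each net direction contributes a sub-Gaussian tail via Hoeffding (noting $\xi$ conditioned on the $\phi_i$ has sub-Gaussian parameter $\le 1$ in every unit direction since $\|\Sigma^{-1/2}\phi_i\|_2^2$ sums to at most... actually to $\mathrm{tr}(\Sigma^{-1}\sum_i\phi_i\phi_i^\top)\le d$, giving the needed scaling), and the net-to-sphere slack costs the usual constant factor. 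This yields a bound of order $\sqrt{d\ln 6 + \ln(1/\delta)}$, and absorbing the constant $2$ gives $\alpha_2 = \alphatwo$.

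Combining: on the intersection of the two events (or, more cleanly, just on whichever event gives the smaller bound for the given instance — but since both hold with probability $\ge 1-\delta$ individually, a single application of the appropriate one suffices), we get for all $(s,a)$
\begin{align*}
\phi(s,a)^\top(\theta^\star-\widehat\theta) \le \|\phi(s,a)\|_{\Sigma^{-1}}\Big(\min\{\alpha_1,\alpha_2\} + \sqrt{\lambda_{reg}}\|\theta^\star\|_2\Big),
\end{align*}
which is the claim. The main obstacle is the $\alpha_2$ branch: getting the constants in the net argument right (net cardinality $6^d$, the $1/2$-net slack factor, and verifying that the effective sub-Gaussian norm of $\Sigma^{-1/2}\sum_i\phi_i\eta_i$ in a fixed direction is bounded by a constant times the quadratic form that telescopes to $\mathrm{tr}(\Sigma^{-1}(\Sigma - \lambda_{reg}I))\le d$) so that everything collapses exactly into the stated $2\sqrt{2d\ln 6 + \ln(1/\delta)}$. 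The $\alpha_1$ branch is routine. I would also remark that all probabilities are conditional on the $\phi_i$'s but the bounds are uniform in them, so the conditioning can be removed by taking expectations.
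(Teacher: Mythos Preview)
Your approach matches the paper's: the $\alpha_1$ bound via a union bound of \cref{prop:LinReg} over the $|\StateSpace\times\ActionSpace|$ pairs, and the $\alpha_2$ bound via an $\epsilon$-net (the paper's own proof is just a two-line sketch that cites equation~20.3 of \citet{lattimore2020bandit} for the discretization step).

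One point to tighten in your $\alpha_2$ branch: the per-direction sub-Gaussian variance proxy of $\xi = \Sigma^{-1/2}\sum_i\phi_i\eta_i$ is indeed $\le 1$, but not for the reason you give. For a fixed unit $v$, the proxy is
\[
\sum_i (v^\top\Sigma^{-1/2}\phi_i)^2 \;=\; v^\top\Sigma^{-1/2}\Big(\sum_i\phi_i\phi_i^\top\Big)\Sigma^{-1/2}v \;=\; v^\top\big(I - \lambda_{reg}\Sigma^{-1}\big)v \;\le\; \|v\|_2^2 = 1.
\]
The trace you compute, $\mathrm{tr}\big(\Sigma^{-1}(\Sigma-\lambda_{reg}I)\big)\le d$, bounds $\E\|\xi\|_2^2$, which is a different quantity and not what the covering argument uses. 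The factor $d$ inside $\alpha_2$ comes entirely from the net cardinality $6^d$, not from this trace; with the per-direction proxy $\le 1$, Hoeffding plus the union bound over the $6^d$ net points and the $1/2$-net slack factor of $2$ give the stated form.
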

% 	\quad |\sum_{i=1}^n\phi(s,a)^\top \Sigma^{-1}\phi_i\eta_i|
\begin{proof}
	The proof essentially follows from taking the sharper of two results obtained as follows:
	\begin{enumerate}
		\item Invoke \cref{prop:LinReg} with a union bound over the state-action space (this gives the bound with the $\alpha_1$ term)
		\item This follows from a discretization argument; see e.g., \citep{lattimore2020bandit}, in particular, their equation 20.3.
	\end{enumerate}
\end{proof}

\section{Bounding the Uncertainty}
\underline{We use the notation $u_m = \E_m U_m$ and $u'_n = \E'_n U'_n$ for short.}
\begin{lemma}[Relations between  Offline and Online Uncertainty]
\label{lem:OfflineOnline}
If $\lambda_{reg} = \Omega(\ln\frac{d}{\delta})$ and $
	M = \Omega\(\frac{KN}{\lambda_{reg}} \ln\frac{dNK}{\lambda_{reg}\delta}\)
$, upon termination of  \cref{alg:Planner,alg:Sampler} it holds that
\begin{align*}
\Pro\( u'_{N} \lesssim u_{M}\) \geq 1-\frac{\delta}{2}
\end{align*}
where  the probability is over the offline context dataset $\C = \{s_1,\dots,s_M\}$ and the online  context dataset $\C' = \{s'_1,\dots,s'_N\}$.
\end{lemma}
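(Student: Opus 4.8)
The plan is to compare the sampler's covariance $\Sigma'_N$ and the planner's covariance $\Sigma_M$ through a common reference matrix that both concentrate around, namely the covariance the planner \emph{expects} to accumulate under its already-committed (hence $\F_M$-measurable) policy sequence $\pi_1,\dots,\pi_M$. Writing $d_m$ for the law of the feature $\phi(s_m,\pi_m(s_m))$ when $s_m\sim\mu$ --- a fixed distribution once $\pi_m$ is fixed --- set
\[
\overline\Sigma \;=\; \alpha\sum_{m=1}^M \E_{\phi\sim d_m}\!\big[\phi\phi^\top\big] \;+\; \lambda_{reg} I .
\]
This is the ``predictable'' version of $\Sigma_M$ (replace each realized rank-one update $\alpha\,\phi_m\phi_m^\top$ by its conditional expectation); and since the sampler replays $\pi_{mix}$ (the uniform mixture of $\pi_1,\dots,\pi_M$) on $N=\alpha M$ i.i.d.\ fresh contexts, $\overline\Sigma$ is also exactly $\E[\Sigma'_N\mid\F_M]$, because $N\cdot\tfrac1M\sum_m\E_{\phi\sim d_m}[\phi\phi^\top]=\alpha\sum_m\E_{\phi\sim d_m}[\phi\phi^\top]$. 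So $\overline\Sigma$ is the natural bridge.

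The substance is two one-sided matrix-concentration bounds, which I would state as PSD comparisons for an absolute constant $c$: (i) the sampler did not undershoot, $\Sigma'_N \succeq \tfrac1c\overline\Sigma$; and (ii) the planner did not overshoot, $\Sigma_M \preceq c\,\overline\Sigma$. By order-reversal of the PSD order these are, up to constants, the Mahalanobis bounds $\|x\|_{(\Sigma'_N)^{-1}}\lesssim\|x\|_{\overline\Sigma^{-1}}$ and $\|x\|_{\overline\Sigma^{-1}}\lesssim\|x\|_{\Sigma_M^{-1}}$ for all $x$ (in particular $x=\phi(s,a)$, of norm at most one). Bound (i) is \fullref{lem:SigmaOnline}: $\Sigma'_N$ is $\lambda_{reg}I$ plus a sum of $N$ i.i.d.\ rank-one terms each $\preceq I$, so a relative matrix Chernoff estimate gives it as soon as $\lambda_{reg}=\Omega(\ln(d/\delta))$ (the regularization supplying the ``effective sample size''). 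Bound (ii) is the delicate half, \fullref{lem:SigmaOffline}: the planner picks actions adaptively, so $\Sigma_M-\overline\Sigma=\alpha\sum_m\big(\phi_m\phi_m^\top-\E_{\phi\sim d_m}[\phi\phi^\top]\big)$ is a matrix \emph{martingale}, not an i.i.d.\ sum; I would control it with a Freedman-type matrix inequality in relative form, where the $\alpha$-discount shrinks both the per-step jump and the predictable quadratic variation, $\lambda_{reg}=\Omega(\ln(d/\delta))$ absorbs the residual additive fluctuation, and decomposing the run into $K$ epochs of constant policy costs a $\ln K$ factor --- exactly where the precondition $M=\Omega\!\big(\tfrac{KN}{\lambda_{reg}}\ln\tfrac{dNK}{\lambda_{reg}\delta}\big)$, i.e.\ $1/\alpha=\Omega\!\big(\tfrac{K}{\lambda_{reg}}\ln\tfrac{dNK}{\lambda_{reg}\delta}\big)$, is spent. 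A union bound makes (i) and (ii) hold simultaneously with probability at least $1-\delta/2$.

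Granting (i)--(ii), the conclusion is a short chain. Let $\pi'_N(s)=\argmax_{a\in\ActionSpace_s}\|\phi(s,a)\|_{(\Sigma'_N)^{-1}}$, so that $u'_N=\E_{s\sim\mu}\|\phi(s,\pi'_N(s))\|_{(\Sigma'_N)^{-1}}$; then
\[
u'_N \;\lesssim\; \E_{s\sim\mu}\big\|\phi(s,\pi'_N(s))\big\|_{\overline\Sigma^{-1}}
\;\lesssim\; \E_{s\sim\mu}\big\|\phi(s,\pi'_N(s))\big\|_{\Sigma_M^{-1}}
\;\le\; \E_{s\sim\mu}\max_{a\in\ActionSpace_s}\|\phi(s,a)\|_{\Sigma_M^{-1}} \;=\; u_M ,
\]
applying (i), then (ii), then the fact that the maximizing action dominates the particular choice $\pi'_N(s)$, and finally taking the expectation over $s\sim\mu$. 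The hard part is bound (ii): extracting a \emph{multiplicative} comparison with $\overline\Sigma$ --- not merely a crude additive operator-norm deviation --- from an adaptively built covariance is what forces the discount $\alpha$ and the lower bound on $\lambda_{reg}$, and it consumes all the quantitative hypotheses of the lemma; the i.i.d.\ sampler bound and the chaining are routine by comparison.
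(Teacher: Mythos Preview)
Your proposal is correct and follows the paper's proof essentially line for line: introduce the bridge $\overline\Sigma$, invoke \fullref{lem:SigmaOffline} and \fullref{lem:SigmaOnline} for the two one-sided PSD comparisons under the stated preconditions, union-bound to probability $1-\delta/2$, and chain the Mahalanobis norms through $\pi'_N$ exactly as you wrote. The only cosmetic deviation is in your sketch of the \emph{internals} of \fullref{lem:SigmaOffline}: the paper does not use a matrix-Freedman martingale bound but instead exploits the $K$-epoch structure directly, applying an i.i.d.\ matrix-Chernoff upper bound within each fixed-policy epoch (with a uniform-in-$m$ union bound to handle the random epoch lengths), and it is the splitting of the regularization budget as $\lambda_{reg}/K$ per epoch---not merely a union bound---that produces the factor $K$ (rather than $\ln K$) in the precondition on $M$.
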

\begin{proof}

Let $d_1,\dots,d_M$ be the conditional distributions of the feature vectors sampled at timesteps $1,\dots,n$ in \cref{alg:Planner} after the algorithm has terminated.
Conditioned on $\mathcal F_M = \sigma(s_1,\dots,s_M)$, the $d_i$'s are non-random; let \good{make sure the def below is consistent everywhere}
\begin{align}
\overline \Sigma & = \alpha \sum_{i=1}^M \E_{\phi \sim d_i} \phi\phi^\top + \lambda_{reg} I
\end{align}
be the conditional expectation of the cumulative covariance matrix and let 
\begin{align}
\Sigma'_n & = \sum_{i=1}^n  (\phi'_i)(\phi'_i)^\top + \lambda_{reg} I
\end{align}
be the cumulative covariance matrix experienced in \cref{alg:Sampler}
where $\phi'_i \defeq \phi(s'_n,a'_n)$ is the sampled feature during the execution of \cref{alg:Planner}.
If $\lambda_{reg} = \Omega(\ln\frac{d}{\delta})$ and $
	M = \Omega\(\frac{KN}{\lambda_{reg}} \ln\frac{dNK}{\lambda_{reg}\delta}\)
$
 we obtain from \fullref{lem:SigmaOffline} and \fullref{lem:SigmaOnline}
\begin{align}
\Pro\(\forall x, \| x \|_2\leq 1:   \| x \|_{\overline \Sigma^{-1}} \le 2\;   \| x \|_{\Sigma^{-1}_M}  \) \geq 1-\frac{\delta}{4} \\
\Pro\(\forall x, \| x \|_2\leq 1:   \| x \|_{(\Sigma_N')^{-1}} \le 9\;   \| x \|_{\overline \Sigma^{-1}}  \) \geq 1-\frac{\delta}{4}.
\end{align}
Define the policy maximizing the online uncertainty 
\begin{align}
\pi'_{n}(s) & = \argmax_{a \in \ActionSpace_s} \| \phi(s,a) \|_{(\Sigma'_n)^{-1}}.
\end{align}
Under the two above events 
we can write 
\begin{align}
u'_N & \defeq \E_{s \sim \mu} \max_{a \in \ActionSpace_s} \| \phi(s,a) \|_{(\Sigma'_N)^{-1}} \\
& = \E_{s \sim \mu}\| \phi(s,\pi'_N(s)) \|_{(\Sigma'_N)^{-1}}  \\
& \leq 9\E_{s \sim \mu}\| \phi(s,\pi'_N(s)) \|_{\overline \Sigma^{-1}} \\
& \leq 18\E_{s \sim \mu}\| \phi(s,\pi'_N(s)) \|_{\Sigma_M^{-1}} \\
& \leq 18\E_{s \sim \mu} \max_{a} \| \phi(s,a) \|_{\Sigma_{M}^{-1}} \\
& \leq  18\E_{s \sim \mu}\max_{a} \| \phi(s,a) \|_{\Sigma_{\underline M}^{-1}} \\
& = 18u_M.
\end{align}
\end{proof}

\begin{lemma}[Offline Expected Uncertainty] 
\label{lem:unbound}
We have that
$$
\Pro\(u_{M} \leq \frac{1}{M}\sum_{m=1}^M u_m  \lesssim \frac{1}{M} \Big[\ln\frac{1}{\delta} + \sum_{m=1}^M U_m \Big]  \defeq  \frac{\mathcal R(M,\frac{\delta}{4})}{M} \) \geq 1-\frac{\delta}{4}.
$$
\end{lemma}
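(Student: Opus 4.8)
The plan is to prove the three linked inequalities from left to right. The leftmost one, $u_M \le \frac1M\sum_{m=1}^M u_m$, is the easy step, and it is really the pointwise fact $u_M \le u_m$ for every $m\le M$: the scaled cumulative covariance is monotone nondecreasing, $\Sigma_m \mle \Sigma_M$, because each update $\alpha\,\phi(s_j,a_j)\phi(s_j,a_j)^\top$ is PSD; hence $\Sigma_M^{-1}\mle\Sigma_m^{-1}$, so $\max_a\|\phi(s,a)\|_{\Sigma_M^{-1}}\le \max_a\|\phi(s,a)\|_{\Sigma_m^{-1}}$ for every fixed context $s$, and integrating against $\mu$ gives $u_M\le u_m$ almost surely. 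Averaging over $m=1,\dots,M$ yields the first inequality.

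The substantive step is the middle inequality $\frac1M\sum_m u_m \lesssim \frac1M\big(\ln\frac1\delta+\sum_m U_m\big)$, which is a \emph{reverse} concentration bound: we must dominate the predictable averages $u_m=\E[U_m\mid\F_m]$ by the realized values $U_m$. First I record the uniform bound $0\le U_m\le \lambda_{reg}^{-1/2}\le 1$, which follows from $\Sigma_m\mge\lambda_{reg}I$ and $\|\phi(s,a)\|_2\le 1$. Then I consider the martingale differences $D_m \defeq u_m - U_m$ with respect to the natural filtration: $\E[D_m\mid\F_m]=0$, since $\Sigma_m$, and hence $u_m$, is $\F_m$-measurable while $s_m\sim\mu$ is fresh; $|D_m|\le 1$; and, crucially, the self-bounding property of nonnegative $[0,1]$-valued variables gives $\mathrm{Var}(D_m\mid\F_m)=\mathrm{Var}(U_m\mid\F_m)\le \E[U_m^2\mid\F_m]\le \E[U_m\mid\F_m]=u_m$, so $\sum_m\mathrm{Var}(D_m\mid\F_m)\le\sum_m u_m$.

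Next I apply the reverse Bernstein/Freedman inequality (\fullref{thm:ReverseBernstein}) to $\sum_{m=1}^M D_m$ with this variance proxy: with probability at least $1-\delta/4$,
\[
\sum_{m=1}^M u_m-\sum_{m=1}^M U_m \;\lesssim\; \sqrt{\ln\tfrac1\delta\cdot\sum_{m=1}^M u_m}\;+\;\ln\tfrac1\delta .
\]
Writing $S=\sum_m u_m$ and $T=\sum_m U_m$, this reads $S\lesssim T+\sqrt{S\ln\frac1\delta}+\ln\frac1\delta$; an AM--GM split $\sqrt{S\ln\frac1\delta}\le \tfrac12 S+O(\ln\frac1\delta)$ absorbs the square-root term into the left-hand side and gives $S\lesssim T+\ln\frac1\delta$, i.e. $\sum_m u_m\lesssim \sum_m U_m+\ln\frac1\delta$. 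Dividing by $M$ and chaining with the monotonicity step yields $u_M\le \frac1M\sum_m u_m\lesssim \frac1M\big(\ln\frac1\delta+\sum_m U_m\big)=\frac{\mathcal R(M,\delta/4)}{M}$, as claimed.

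The only non-routine point — and hence the main obstacle — is getting the \emph{direction} of concentration right: ordinary Bernstein/Freedman controls $|\sum_m D_m|$ in terms of the predictable variance $\sum_m u_m$, which is exactly the quantity we wish to bound, so the argument closes only because (i) we use the reversed form of the tail bound and (ii) the conditional variance of a nonnegative bounded variable is dominated by its conditional mean, which lets the self-bounding/quadratic-solving step eliminate $\sum_m u_m$ from the right-hand side. A minor bookkeeping issue is tracking the boundedness constant $\lambda_{reg}^{-1/2}$ (which is $\le 1$ under the theorem's assumption $\lambda_{reg}=\Omega(\ln(d/\delta))$) inside the $\lesssim$.
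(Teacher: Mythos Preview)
Your proposal is correct and follows essentially the same route as the paper: the paper invokes \fullref{lem:DecreasingUncertainty} for the monotonicity step $u_M\le\frac1M\sum_m u_m$ (which you prove inline via $\Sigma_m\mle\Sigma_M$), and then applies \fullref{thm:ReverseBernstein} directly to the sequence $\{U_m\}$, whose statement already packages the self-bounding variance estimate and the quadratic solve you spell out; the paper then simplifies the resulting squared expression by Cauchy--Schwarz where you use AM--GM, arriving at the same $\sum_m u_m\lesssim \ln\frac1\delta+\sum_m U_m$.
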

\begin{proof}
Consider the event that the sum of the predictable means $\sum_{m=1}^{M} u_m = \sum_{m=1}^{M} \E [U_m \mid \F_m] $ does not deviate significantly from $\sum_{m=1}^{M} U_m$:
\begin{align}
\mathcal E(\delta) \defeq \Bigg\{\sum_{n=1}^M u_m \leq \frac{1}{4}\( c_1(\delta) + \sqrt{c_1^2(\delta)+4\(\sum_{m=1}^M U_m +c_2(\delta)\)}\)^2  \defeq \mathcal Rhs(M,\delta) \Bigg\}.
\end{align}

Using \fullref{thm:ReverseBernstein} (which also defines $c_1,c_2$) we obtain
\begin{align}
\Pro\( \mathcal E(\delta/4) \) \geq 1-\frac{\delta}{4}.
\end{align}
%Define $D \defeq \sum_{m=1}^{M} U_m  =  \sum_{m=1}^{M} \max_{a \in \ActionSpace_{S_m}} \| \phi(S_m,A_m) \|_{\Sigma_{\underline m}^{-1}}$ and 
From \fullref{lem:DecreasingUncertainty} we know that the sequence $\{u_n\}_{n=1}^{N+1}$ is surely decreasing, which means that the last element must be less than the average:
\begin{align}
u_{M} \leq \frac{1}{M} \sum_{n=1}^M u_m = \frac{\mathcal Rhs(M,\delta/4)}{M} 
\end{align}
where in particular the equality holds under the event $\mathcal E(\delta)$. Finally using Cauchy-Schwartz we conclude that under the same event
\begin{align}
\sum_{n=1}^M u_m \lesssim c^2_1(\delta) + c_2(\delta) + \sum_{m=1}^M U_m \defeq \mathcal R(M,\delta).\end{align}
\end{proof}

\begin{lemma}[Decreasing Uncertainty] 
\label{lem:DecreasingUncertainty}
For every $n$ it holds that	
$$
u_{n+1} \leq u_{n}.
$$
\end{lemma}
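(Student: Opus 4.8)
The plan is to show that the sequence $u_n = \E_n U_n = \E_{s\sim\mu}\max_{a\in\ActionSpace_s}\|\phi(s,a)\|_{\Sigma_n^{-1}}$ is non-increasing purely because the planner's covariance matrix $\Sigma_n$ is monotonically non-decreasing in the Loewner order, which makes the inverse non-increasing, which in turn shrinks every quadratic form $\|\phi(s,a)\|_{\Sigma_n^{-1}}$. The first step is to observe that from the update rule in Eq.~\eqref{main:eqn:covariances} (equivalently line~10 of Alg.~\ref{alg:Planner}), $\Sigma_{n+1} = \Sigma_n + \alpha\,\phi(s_n,a_n)\phi(s_n,a_n)^\top \mge \Sigma_n$, since $\alpha > 0$ and $\phi\phi^\top$ is positive semidefinite. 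Here I am using the scaled cumulative covariance $\Sigma_n$ that appears in the definition of $U_n$; the doubling/lazy update only affects which $\Sigma_{\underline m}$ is used to define the \emph{policy}, not the matrix inside the uncertainty functional, so this monotonicity is clean.

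The second step is the standard fact that $A \mle B$ with $A,B$ positive definite implies $B^{-1} \mle A^{-1}$, hence $x^\top B^{-1} x \le x^\top A^{-1} x$ for every $x$, i.e. $\|x\|_{B^{-1}} \le \|x\|_{A^{-1}}$. Applying this with $A = \Sigma_n \mle \Sigma_{n+1} = B$ gives, for every fixed context $s$ and every action $a\in\ActionSpace_s$,
\begin{align*}
\|\phi(s,a)\|_{\Sigma_{n+1}^{-1}} \le \|\phi(s,a)\|_{\Sigma_n^{-1}}.
\end{align*}
Taking the maximum over $a\in\ActionSpace_s$ preserves the inequality, so $\max_a\|\phi(s,a)\|_{\Sigma_{n+1}^{-1}} \le \max_a\|\phi(s,a)\|_{\Sigma_n^{-1}}$ pointwise in $s$. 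Finally, taking expectations over $s\sim\mu$ (note that $\Sigma_n$ is measurable with respect to $\F_n$, so this is a genuine pointwise-in-$s$ inequality between the two conditional expectations) yields $u_{n+1} = \E_{n+1}U_{n+1} \le \E_n U_n = u_n$, which is the claim.

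The one subtlety worth being careful about — and the closest thing to an obstacle — is matching the indexing: $u_n$ is defined as a conditional expectation given $\F_n$, and $\Sigma_n$ depends on $s_1,\dots,s_{n-1}$ but not on $s_n$, so the inequality $\|\phi(s,a)\|_{\Sigma_{n+1}^{-1}}\le\|\phi(s,a)\|_{\Sigma_n^{-1}}$ must be read as holding for every realization of $\phi(s_n,a_n)$ and every test context $s$; integrating out the fresh context $s\sim\mu$ in each term then gives the comparison between $u_{n+1}$ and $u_n$ as (random) functions of the respective filtrations, but since both sides are being compared as almost-sure inequalities this causes no difficulty. Everything else is elementary linear algebra.
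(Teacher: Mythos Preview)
Your argument is correct and is essentially the same as the paper's: Loewner monotonicity of the planner's covariance gives reverse monotonicity of the inverse, hence pointwise (in $s$) monotonicity of $\max_a\|\phi(s,a)\|_{(\cdot)^{-1}}$, and taking $\E_{s\sim\mu}$ finishes. The only discrepancy is that the paper's appendix actually defines $u_n$ via the lazy matrix $\Sigma_{\underline n}$ rather than $\Sigma_n$ (contrary to your parenthetical remark), but since $\underline{n+1}\ge\underline{n}$ we still have $\Sigma_{\underline{n+1}}\succeq\Sigma_{\underline n}$ and your proof goes through verbatim with that substitution.
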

\begin{proof}
By linear algebra, we must have 
$$\Sigma_{\underline{n+1}} \succeq \Sigma_{\underline n} \qquad \longrightarrow \quad \Sigma_{\underline{n+1}}^{-1} \preceq \Sigma_{\underline n}^{-1}$$ 
Using the definitions we have:
\begin{align}
u_{n+1} & = \E_{s \sim \mu} \max_{a \sim \ActionSpace_s}\|\phi(s,a) \|_{\Sigma_{\underline {n+1}}^{-1}} \\ 
& \leq \E_{s \sim \mu} \max_{a \sim \ActionSpace_s}\|\phi(s,a) \|_{\Sigma_{\underline n}^{-1}} \\
& = u_{n}.
\end{align}
\end{proof}

\begin{lemma}[Sum of Observed Uncertainties]
\label{lem:SumOfUncertainties}
\good{AZ: This lemma is tricky with all the scaling; make sure we get this right cause extra factors might pop up from this}
If $\lambda_{reg} \geq 1$ and and $\alpha \leq 1$ then 
\begin{align}
	D & \defeq \sum_{n=1}^{M} U_m \lesssim \sqrt{\frac{M}{\alpha}  d\ln\(\frac{d \lambda_{reg}+ M}{d}\)}. 
\end{align} 
\end{lemma}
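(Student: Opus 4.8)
The plan is to run the classical elliptical-potential (log-determinant) argument underlying the ``sum of confidence widths'' bound for \textsc{LinUCB} \citep{Abbasi11}, but tracking carefully the two places where the discount $\alpha$ enters the covariance recursion and bridging the gap between $\Sigma_m$ and the lazily-updated matrix $\Sigma_{\underline m}$ that the planner's policy actually uses. Throughout I use the standing normalization $\|\phi(s,a)\|_2 \le 1$.

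\textbf{Step 1: reduce $U_m$ to the played feature.} Let $\phi_m = \phi(s_m,\pi_m(s_m))$ be the feature added to the covariance at step $m$. Since $\pi_m$ maximizes $\|\phi(s_m,\cdot)\|_{\Sigma_{\underline m}^{-1}}$ and $\Sigma_m \succeq \Sigma_{\underline m}$ (so $\Sigma_m^{-1} \preceq \Sigma_{\underline m}^{-1}$), I get $U_m \le \max_{a}\|\phi(s_m,a)\|_{\Sigma_{\underline m}^{-1}} = \|\phi_m\|_{\Sigma_{\underline m}^{-1}}$. The doubling rule guarantees $\det(\Sigma_m) \le 2\det(\Sigma_{\underline m})$ whenever $\pi_m$ is defined, so a short linear-algebra fact — if $A \succeq B \succ 0$ and $\det(A) \le c\det(B)$ then $x^\top B^{-1} x \le c\,x^\top A^{-1} x$ for all $x$; write $A = B^{1/2}(I+D)B^{1/2}$ with $D \succeq 0$, note $\det(A)/\det(B) = \prod_i(1+\lambda_i)$ with $\lambda_i\ge 0$ the eigenvalues of $D$, so each factor $1+\lambda_i \le c$, and conclude in the eigenbasis of $D$ — upgrades this to $\|\phi_m\|_{\Sigma_{\underline m}^{-1}} \le \sqrt 2\,\|\phi_m\|_{\Sigma_m^{-1}}$. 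Hence $U_m \le \sqrt 2\,\|\phi_m\|_{\Sigma_m^{-1}}$, and after Cauchy--Schwarz it suffices to bound $\sum_{m=1}^M \|\phi_m\|_{\Sigma_m^{-1}}^2$.

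\textbf{Step 2: the potential bound with the discount.} Set $\psi_m = \sqrt\alpha\,\phi_m$, so the update in Alg.~\ref{alg:Planner} becomes the standard $\Sigma_{m+1} = \Sigma_m + \psi_m\psi_m^\top$ with $\Sigma_1 = \lambda_{reg}I$. Here the hypotheses are used exactly once: $\lambda_{reg}\ge 1$ forces $\Sigma_m \succeq I$, and with $\alpha \le 1$, $\|\phi_m\|_2 \le 1$ this gives $\|\psi_m\|_{\Sigma_m^{-1}}^2 \le \|\psi_m\|_2^2 = \alpha\|\phi_m\|_2^2 \le 1$, so the usual truncation is vacuous and $\alpha\sum_{m=1}^M\|\phi_m\|_{\Sigma_m^{-1}}^2 = \sum_m\|\psi_m\|_{\Sigma_m^{-1}}^2 \le \tfrac{1}{\ln 2}\ln\tfrac{\det(\Sigma_{M+1})}{\det(\Sigma_1)} \le 2\ln\tfrac{\det(\Sigma_{M+1})}{\det(\Sigma_1)}$. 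I finish by trace/AM--GM: $\det(\Sigma_1) = \lambda_{reg}^d$ and $\mathrm{tr}(\Sigma_{M+1}) = d\lambda_{reg} + \alpha\sum_m\|\phi_m\|_2^2 \le d\lambda_{reg}+M$, so $\ln\tfrac{\det(\Sigma_{M+1})}{\det(\Sigma_1)} \le d\ln\tfrac{d\lambda_{reg}+M}{d\lambda_{reg}} \le d\ln\tfrac{d\lambda_{reg}+M}{d}$. Hence $\sum_m\|\phi_m\|_{\Sigma_m^{-1}}^2 \le \tfrac{2d}{\alpha}\ln\tfrac{d\lambda_{reg}+M}{d}$, and $\sum_{m=1}^M U_m \le \sqrt2\sqrt{M\sum_m\|\phi_m\|_{\Sigma_m^{-1}}^2} \le 2\sqrt{\tfrac{M}{\alpha}\,d\ln\tfrac{d\lambda_{reg}+M}{d}}$, which is the claim.

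\textbf{Main obstacle.} There is no deep difficulty; as the source comment flags, the delicate part is the scaling bookkeeping. The discount $\alpha$ appears twice — the covariance grows by only $\alpha\phi\phi^\top$ per step, and the trace bound picks up only $\alpha M$ — and one must verify these combine to leave exactly a $1/\alpha$ (rather than, say, $1/\alpha^2$) inside the square root. The other place to be careful is Step 1: the detour through the lazily-updated $\Sigma_{\underline m}$ must cost only a constant ($\sqrt2$), not an extra $\log M$ or dimension factor, and this is precisely what the determinant-doubling schedule buys.
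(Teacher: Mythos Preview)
Your proof is correct and follows essentially the same route as the paper: Cauchy--Schwarz followed by the elliptical-potential/log-determinant argument applied to the rescaled features $\sqrt\alpha\,\phi_m$, with the trace/AM--GM bound on $\det(\Sigma_{M+1})$. The only cosmetic difference is how the lazy update is absorbed: the paper bundles the passage from $\Sigma_{\underline m}$ to $\Sigma_m$ into a cited ``elliptical potential with doubling'' lemma (Lemmas~\ref{lem:potential} and~\ref{lem:max_det_ratio}, both from \cite{zanette2021cautiously}), whereas you prove the needed determinant-ratio fact ($A\succeq B,\ \det A\le c\det B\Rightarrow x^\top B^{-1}x\le c\,x^\top A^{-1}x$) inline and pick up an explicit $\sqrt2$---either way the cost is an $O(1)$ constant hidden in $\lesssim$.
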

\begin{proof}
Let $\{\phi_m\}$ be the feature vectors experienced during the offline phase; we have
\begin{align}
\sum_{m=1}^{M} U_m & = \sum_{m=1}^{M} \| \phi_m \|_{\Sigma_{\underline m}^{-1}} \\
	& = \sum_{m=1}^{M} \| \phi_m \|_{\(\alpha\sum_{j=1}^{\underline m} \phi_j\phi_j + \lambda_{reg} I\)^{-1}} \\
	& \leq \sqrt{M \sum_{m=1}^{M} \| \phi_m \|^2_{\(\alpha\sum_{j=1}^{\underline m} \phi_j\phi_j + \lambda_{reg} I\)^{-1}}} \\
		& = \sqrt{\frac{M}{\alpha} \sum_{m=1}^{M} \| \sqrt{\alpha}\phi_m \|^2_{\(\sum_{j=1}^{\underline m} \sqrt{\alpha} \phi_j\sqrt{\alpha} \phi_j + \lambda_{reg} I\)^{-1}}} \\
	& \leq  \sqrt{\frac{M}{\alpha} \times 3\times \underbrace{ \ln \frac{\det \(  \sum_{j=1}^{M} \sqrt{\alpha}\phi_j\sqrt{\alpha}\phi_j^\top + \lambda_{reg} I  \)}{\det \(\lambda_{reg} I  \)}}_{\defeq \mathcal I}}
	 \end{align}
where the first inequality follows from Cauchy-Schwartz. The  second inequality follows from \fullref{lem:potential}, where the precondition for \fullref{lem:potential} is satisfied by \fullref{lem:max_det_ratio} since $\alpha \leq 1$.
%e.g., Lemma 29 (Potential Argument) in \cite{zanette2021cautiously} with the precondition being satisfied by their Lemma 27 (Maximum Determinant Ratio)\footnote{Note that our $m$ and $\underline{m}$ corresponds to those authors' $n$ and $\underline{n}$ respectively.} \good{AZ: should make this reference self contained but it's not a priority now}.
Finally, to bound the information gain $\mathcal I$, note $||\sqrt{\alpha} \phi_i||_2 \leq 1$ since $\alpha \leq 1$. Then  \citep{abbasi2011improved}'s Lemma 11  
\begin{align*}
	   &\ln \det \(  \sum_{j=1}^M \sqrt{\alpha} \phi_j \sqrt{\alpha} \phi_j^T + \lambda_{reg} I \)
	   - \ln \det \(  \lambda_{reg} I) \)  \\
	 \leq\;&  d \ln ((\Tr( \lambda_{reg} I) + M) / d) - \ln \det (\lambda_{reg} I)
\end{align*}
Since 
\begin{align}
	 \ln \det \( \lambda_{reg} I  \) & = d \ln  \( \lambda_{reg}  \) \geq 1.
\end{align}
Then 
\begin{align}
	 \ln \det \(  \sum_{j=1}^{M} \sqrt{\alpha}\phi_j\sqrt{\alpha}\phi^\top_j + \lambda_{reg} I  \) & \leq d\ln\(\frac{d \lambda_{reg} + M}{d}\)
\end{align}
\end{proof}

\begin{lemma}[Elliptical Potential Argument Lemma with Doubling](see \cite[Lemma 36]{zanette2021cautiously})
\label{lem:potential}
	Let $x_1,\cdots,x_M$ be a sequence of vectors such that $\normtwo{x_i}\le 1.$ Define $\Sigma_m=\lambda_{reg}I+\sum_{i=1}^{m-1}x_ix_i^\top.$
	Suppose $\underline{m}\le m$ satisfies $\det(\Sigma_m)\le 4\det(\Sigma_{\underline m}).$ Then we have
	\begin{align}
		\sum_{m=1}^{M}\norm{x_i}{\Sigma_{\underline m}^{-1}}\le 3\ln\frac{\det(\Sigma_{M+1})}{\det(\lambda_{reg} I)}.
	\end{align}
\end{lemma}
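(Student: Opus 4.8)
The plan is to run the standard elliptical-potential (information-gain) argument, following \cite{zanette2021cautiously}, with the bookkeeping adapted to the lazy/doubling schedule. (I read the claim with the squared norm, $\sum_{m=1}^{M}\|x_m\|_{\Sigma_{\underline m}^{-1}}^2\le 3\ln\frac{\det\Sigma_{M+1}}{\det(\lambda_{reg}I)}$, which is the form used in \cref{lem:SumOfUncertainties}; the unsquared sum of $M$ nearly-constant terms cannot be bounded by an $O(d\log M)$ quantity in general.) First I would partition $\{1,\dots,M\}$ into \emph{epochs} delimited by the updates of $\underline m$: let $1=t_0<t_1<\dots<t_L$ be the update indices and set $t_{L+1}=M+1$, so that $\Sigma_{\underline m}=\Sigma_{t_j}$ for every $m$ with $t_j\le m<t_{j+1}$. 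The key observation is that on one epoch the quadratic form uses a single fixed matrix, so the epoch's contribution telescopes into a trace:
\begin{align*}
\sum_{m=t_j}^{t_{j+1}-1}\|x_m\|_{\Sigma_{t_j}^{-1}}^2
=\operatorname{Tr}\!\Big(\Sigma_{t_j}^{-1}\sum_{m=t_j}^{t_{j+1}-1}x_mx_m^\top\Big)
=\operatorname{Tr}\!\big(\Sigma_{t_j}^{-1}\Sigma_{t_{j+1}}\big)-d .
\end{align*}

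Next I would bound this trace by a log-determinant increment. Consider $N_j:=\Sigma_{t_j}^{-1/2}\Sigma_{t_{j+1}}\Sigma_{t_j}^{-1/2}$: it satisfies $N_j\succeq I$ (we only add PSD terms), and $\det N_j=\det\Sigma_{t_{j+1}}/\det\Sigma_{t_j}\le 4$ — this is exactly the stated precondition, and it is consistent with the threshold-$2$ doubling rule, since the update fires at $t_{j+1}$ only after $\det\Sigma_{t_{j+1}-1}\le 2\det\Sigma_{t_j}$, while one more rank-one update with $\|x_m\|_2\le 1$ and $\lambda_{reg}\ge 1$ multiplies the determinant by at most $1+1/\lambda_{reg}\le 2$. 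Writing $\mu_1,\dots,\mu_d\in[1,4]$ for the eigenvalues of $N_j$, we have $\operatorname{Tr}(N_j)-d=\sum_i(\mu_i-1)$ and $\ln\det N_j=\sum_i\ln\mu_i$, so it suffices to invoke the elementary scalar inequality $t-1\le 3\ln t$ on $[1,4]$ (the function $3\ln t-t+1$ is concave, vanishes at $t=1$, and is positive at $t=4$, hence nonnegative throughout). This yields $\operatorname{Tr}(\Sigma_{t_j}^{-1}\Sigma_{t_{j+1}})-d\le 3\ln(\det\Sigma_{t_{j+1}}/\det\Sigma_{t_j})$ for each epoch.

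Finally I would sum over $j=0,\dots,L$: the left-hand sides reassemble to $\sum_{m=1}^{M}\|x_m\|_{\Sigma_{\underline m}^{-1}}^2$, and the right-hand sides telescope to $3\ln(\det\Sigma_{M+1}/\det\Sigma_{t_0})=3\ln(\det\Sigma_{M+1}/\det(\lambda_{reg}I))$ since $\Sigma_{t_0}=\Sigma_1=\lambda_{reg}I$, which is the claim. I expect the only delicate point to be precisely this epoch bookkeeping — in particular verifying the per-epoch determinant ratio is at most $4$, which is where both hypotheses $\|x_i\|_2\le 1$ and $\lambda_{reg}\ge 1$ enter (they ensure no single update overshoots the doubling threshold). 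A shortcut that sidesteps the epoch decomposition is to note $\Sigma_{\underline m}\preceq\Sigma_m\preceq 4\Sigma_{\underline m}$ and then apply the textbook potential bound to $\sum_m\|x_m\|_{\Sigma_m^{-1}}^2\le \tfrac{1}{\ln 2}\ln(\det\Sigma_{M+1}/\det(\lambda_{reg}I))$, but this loses a constant factor and does not recover the stated constant $3$.
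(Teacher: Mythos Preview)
The paper does not prove this lemma in-text; it simply imports \cite[Lemma~36]{zanette2021cautiously}, so there is no in-paper argument to compare against. Your proposal is a correct and complete reconstruction of the standard proof: the epoch decomposition, the trace-to-log-determinant step via the scalar inequality $t-1\le 3\ln t$ on $[1,4]$ (valid because $N_j\succeq I$ together with $\det N_j\le 4$ forces every eigenvalue of $N_j$ into $[1,4]$), and the telescoping are exactly the intended ingredients. Your reading of the claim with the \emph{squared} norm is correct and repairs a typo in the paper's statement --- the unsquared version cannot hold, and Lemma~\ref{lem:SumOfUncertainties} indeed applies the result to $\sum_m\|x_m\|^2_{\Sigma_{\underline m}^{-1}}$. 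Your side remark that the per-epoch ratio $\det\Sigma_{t_{j+1}}/\det\Sigma_{t_j}\le 4$ requires $\|x_i\|_2\le 1$ and $\lambda_{reg}\ge 1$ is also right; this is precisely the content of Lemma~\ref{lem:max_det_ratio}, which the paper invokes separately to supply the hypothesis of Lemma~\ref{lem:potential}.
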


\begin{lemma}[Maximum Determinant Ratio](see \cite[Lemma 34]{zanette2021cautiously})
\label{lem:max_det_ratio} 
	Let $x_1,\cdots,x_M$ be a sequence of vectors such that $\normtwo{x_i}\le 1.$ and assume $\lambda \geq 1$.  Define $\Sigma_m=\lambda_{reg}I+\sum_{i=1}^{m-1}x_ix_i^\top.$
	Then for $\underline{m}\le m$ we have $\det(\Sigma_m)\le 4\det(\Sigma_{\underline m}).$
\end{lemma}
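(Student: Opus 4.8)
The plan is to reduce everything to the elementary fact that a single rank-one update $x_ix_i^\top$ with $\normtwo{x_i}\le 1$ multiplies the determinant of a covariance matrix by at most $2$, and then to combine this with the bookkeeping of the doubling rule that defines $\underline m$ in Alg.~\ref{alg:Planner}.

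First I would record the monotonicity input. Since $\lambda_{reg}\ge 1$, every matrix $\Sigma_m=\lambda_{reg}I+\sum_{i=1}^{m-1}x_ix_i^\top$ satisfies $\Sigma_m\succeq I$, hence $\Sigma_m^{-1}\preceq I$ and $x^\top\Sigma_m^{-1}x\le\normtwo{x}^2$ for all $x$. By the matrix determinant lemma,
\[
\det(\Sigma_{m+1})=\det\!\big(\Sigma_m+x_mx_m^\top\big)=\det(\Sigma_m)\,\big(1+x_m^\top\Sigma_m^{-1}x_m\big)\le 2\det(\Sigma_m),
\]
using $\normtwo{x_m}\le 1$. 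Thus the determinant is non-decreasing in $m$ and grows by a factor of at most $2$ per step; in particular $\Sigma_{\underline m}\preceq\Sigma_m$ whenever $\underline m\le m$.

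Second, I would unpack the definition of $\underline m$: it is the most recent iteration index $\le m$ at which the running covariance's determinant first exceeded twice its last recorded value (or $m=1$), i.e.\ the iteration at which the doubling condition of Alg.~\ref{alg:Planner} held. If that condition did not hold at iteration $m$, then by construction $\det(\Sigma_m)\le 2\det(\Sigma_{\underline m})$; if it held at iteration $m$, then $\underline m=m$ and the inequality is an equality. Either way $\det(\Sigma_m)\le 2\det(\Sigma_{\underline m})\le 4\det(\Sigma_{\underline m})$. The extra factor $2$ in the stated constant is deliberate slack: it absorbs any off-by-one between the matrix denoted $\Sigma_m$ here and the matrix against which the doubling test is actually evaluated in the algorithm, since differing by one rank-one term costs at most another factor $2$ by the first step.

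The only delicate point — already flagged by the margin note in Lemma~\ref{lem:SumOfUncertainties} — is keeping the indexing and the $\alpha$-scaling straight when the lemma is applied: one first rescales the planner's features to $x_i=\sqrt{\alpha}\,\phi_i$ (legitimate because $\alpha\le 1$ forces $\normtwo{\sqrt{\alpha}\,\phi_i}\le 1$) and then must confirm that the ``current'' covariance appearing in the elliptical-potential bound exceeds the recorded $\Sigma_{\underline m}$ by at most one rank-one term. I expect this bookkeeping, rather than any inequality, to be the main obstacle; the constant $4$ is comfortably loose enough that a tight value is never needed downstream.
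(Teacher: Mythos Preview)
Your argument is correct and is the standard one: the matrix determinant lemma together with $\lambda_{reg}\ge 1$ gives $\det(\Sigma_{m+1})\le 2\det(\Sigma_m)$, and the doubling rule in Alg.~\ref{alg:Planner} guarantees that immediately after the check at step $m$ one has $\det(\Sigma_m)\le 2\det(\Sigma_{\underline m})$, so the stated bound with constant $4$ holds with room to spare. The paper does not supply its own proof of this lemma---it simply cites \cite[Lemma~34]{zanette2021cautiously}---so there is nothing to compare against beyond noting that your reasoning matches the intended mechanism, and your remark about the extra factor of $2$ absorbing an indexing off-by-one is exactly the right way to reconcile the constant $4$ with the tighter $2$ that the doubling invariant alone delivers.
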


\section{Matrix Concentration Inequalities}
In this section we present matrix concentration inequalities used in our proof.

\subsection{Known Matrix Concentration Inequalities}

The following result about eigenvalues lower and upper bounds is well known.
\begin{lemma}[Theorem 1.1 of \citet{tropp2012user}]\label{lem:matrix-chernoff}
	Let $X_1,\cdots,X_n$ be a sequence of independent, positive semi-definite, self-adjoint matrices with dimension $d$. Suppose $\lmax(X_k)\le R$ almost surely. Define $\mu_{\rm min}=\lmin(\sum_{k}\E X_k)$ and $\mu_{\rm max}=\lmax(\sum_{k}\E X_k).$ Then
	\begin{align}
	&\Pr\pbra{\lmin\pbra{\sum_{k=1}^{n} X_k}\le (1-\delta)\mu_{\rm min}}\le d\pbra{\frac{e^{-\delta}}{(1-\delta)^{1-\delta}}}^{\mu_{\rm min}/R}\text{ for }\delta\in[0,1),\\
		&\Pr\pbra{\lmax\pbra{\sum_{k=1}^{n} X_k}\ge (1+\delta)\mu_{\rm max}}\le d\pbra{\frac{e^{\delta}}{(1+\delta)^{1+\delta}}}^{\mu_{\rm max}/R}\text{ for }\delta \geq 0.
	\end{align}
\end{lemma}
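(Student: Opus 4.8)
The plan is to prove this by the matrix Laplace transform (matrix Cramér–Chernoff) method, the standard route for eigenvalue tail bounds of sums of independent random matrices. The first step is a matrix Markov inequality: since $\mathrm{tr}\exp(\theta Y)\ge e^{\theta\lmax(Y)}$ for every symmetric $Y$, for any $\theta>0$ one has $\Pr[\lmax(\sum_k X_k)\ge t]\le e^{-\theta t}\,\E\,\mathrm{tr}\exp(\theta\sum_k X_k)$, and symmetrically, for any $\theta<0$, $\Pr[\lmin(\sum_k X_k)\le t]\le e^{-\theta t}\,\E\,\mathrm{tr}\exp(\theta\sum_k X_k)$. So both tails reduce to bounding the matrix mgf $\E\,\mathrm{tr}\exp(\theta\sum_k X_k)$ and then optimizing the free parameter $\theta$.

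Next I would bound this matrix mgf. The key structural ingredient is subadditivity of the matrix cumulant generating function: using Lieb's concavity theorem (concavity of $A\mapsto\mathrm{tr}\exp(H+\log A)$ on positive definite $A$), Jensen's inequality, and independence of the $X_k$, one gets $\E\,\mathrm{tr}\exp(\theta\sum_k X_k)\le\mathrm{tr}\exp(\sum_k\log\E\,e^{\theta X_k})$. For each summand I would then invoke the hypotheses that $X_k$ is PSD with $\lmax(X_k)\le R$: the scalar bound $e^{\theta x}\le 1+\tfrac{e^{\theta R}-1}{R}x$ on $[0,R]$ transfers to operators, yielding $\E\,e^{\theta X_k}\preceq I+\tfrac{e^{\theta R}-1}{R}\E X_k\preceq\exp\!\big(\tfrac{e^{\theta R}-1}{R}\E X_k\big)$, hence $\sum_k\log\E\,e^{\theta X_k}\preceq\tfrac{e^{\theta R}-1}{R}\sum_k\E X_k$. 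Taking the trace of the exponential gives $\E\,\mathrm{tr}\exp(\theta\sum_k X_k)\le d\exp\!\big(\tfrac{e^{\theta R}-1}{R}\mu_{\max}\big)$ when $\theta>0$ (the coefficient $\tfrac{e^{\theta R}-1}{R}$ is positive, so the top eigenvalue of $\sum_k\E X_k$ governs), and $\le d\exp\!\big(\tfrac{e^{\theta R}-1}{R}\mu_{\min}\big)$ when $\theta<0$ (the coefficient is negative, so $\mu_{\min}$ governs).

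Finally I would optimize over $\theta$. For the upper tail, set $t=(1+\delta)\mu_{\max}$ and minimize $-\theta(1+\delta)\mu_{\max}+\tfrac{e^{\theta R}-1}{R}\mu_{\max}$ over $\theta>0$; the minimizer $\theta=\tfrac1R\log(1+\delta)$ yields exactly $d\big(\tfrac{e^{\delta}}{(1+\delta)^{1+\delta}}\big)^{\mu_{\max}/R}$. For the lower tail, set $t=(1-\delta)\mu_{\min}$ and take $\theta=\tfrac1R\log(1-\delta)<0$ (valid since $\delta\in[0,1)$), giving $d\big(\tfrac{e^{-\delta}}{(1-\delta)^{1-\delta}}\big)^{\mu_{\min}/R}$.

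The main obstacle is the second step — establishing subadditivity of the matrix cumulant generating function — which is where Lieb's concavity theorem does the real work; everything else is manipulation of operator-monotone and operator-convex scalar functions plus a one-dimensional calculus optimization. If one wished to avoid Lieb, the Ahlswede–Winter argument iterating the Golden–Thompson inequality $\mathrm{tr}\,e^{A+B}\le\mathrm{tr}(e^Ae^B)$ is an alternative, but it loses constants and would not reproduce the stated bound exactly. Since the lemma is quoted as Theorem 1.1 of \citet{tropp2012user}, the honest move in the paper is simply to cite it, and the sketch above just records why it holds.
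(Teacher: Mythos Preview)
Your sketch is correct and follows exactly the argument in \citet{tropp2012user}; the paper itself does not prove this lemma but simply cites it as Theorem~1.1 of that reference, which you already anticipated in your final sentence.
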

We loosen the above inequalities to make them more amenable to a direct use.
\begin{corollary}\label{cor:matrix-chernoff}
	In the setting of Lemma~\ref{lem:matrix-chernoff}, with $\delta\in[0,1]$
	\begin{align}
		&\Pr\pbra{\lmin\pbra{\sum_{k=1}^{n} X_k}\le (1-\delta)\mu_{\rm min}}\le d\pbra{1-\frac{\delta^2}{2}}^{\mu_{\rm min}/R},\\
		&\Pr\pbra{\lmax\pbra{\sum_{k=1}^{n} X_k}\ge (1+\delta)\mu_{\rm max}}\le 	d\pbra{1-\frac{\delta^2}{4}}^{\mu_{\rm max}/R}.
	\end{align}
	In addition, for any $\mu\ge \mu_{\rm max},$
	\begin{align}\label{equ:cor1-3}
		&\Pr\pbra{\lmax\pbra{\sum_{k=1}^{n} X_k}\ge 2\mu}\le 
		d\exp\pbra{-\mu/(4R)}.
	\end{align}
\end{corollary}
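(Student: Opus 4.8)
The corollary is a routine ``softening'' of Lemma~\ref{lem:matrix-chernoff}: each exponential-type tail function there is to be replaced by a simple polynomial upper bound, and the last inequality additionally needs a change of variables. So the plan is to establish two elementary analytic facts and then assemble.

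First, for the lower-tail bound I would reduce to showing $\frac{e^{-\delta}}{(1-\delta)^{1-\delta}}\le 1-\frac{\delta^2}{2}$ for $\delta\in[0,1)$ (the endpoint $\delta=1$ then following by continuity of the tail function, or simply by noting the claimed bound there is $d(1/2)^{\mu_{\rm min}/R}$ which dominates the $\delta\to1^-$ limit). Taking logarithms, set $h(\delta)=\ln(1-\delta^2/2)+\delta+(1-\delta)\ln(1-\delta)$; then $h(0)=0$ and $h'(\delta)=-\frac{\delta}{1-\delta^2/2}-\ln(1-\delta)$, so it suffices to prove $-\ln(1-\delta)\ge\frac{\delta}{1-\delta^2/2}=\frac{2\delta}{2-\delta^2}$ on $[0,1)$. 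I would chain this through $-\ln(1-\delta)\ge\frac{2\delta}{2-\delta}$ (after differentiating both sides and clearing denominators this reduces to $\delta^2\ge0$) and $\frac{2\delta}{2-\delta}\ge\frac{2\delta}{2-\delta^2}$ (which is exactly $\delta\le1$). Hence $h\ge0$, giving the first inequality, and multiplying the exponent by $\mu_{\rm min}/R$ in Lemma~\ref{lem:matrix-chernoff} yields the stated form.

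Second, for the upper-tail bound I would analogously reduce to $\frac{e^{\delta}}{(1+\delta)^{1+\delta}}\le 1-\frac{\delta^2}{4}$ for $\delta\in[0,1]$. With $H(\delta)=\ln(1-\delta^2/4)-\delta+(1+\delta)\ln(1+\delta)$ one has $H(0)=0$ and $H'(\delta)=\ln(1+\delta)-\frac{2\delta}{4-\delta^2}$, so it suffices to show $\ln(1+\delta)\ge\frac{2\delta}{4-\delta^2}$. I would use $\ln(1+\delta)\ge\frac{2\delta}{2+\delta}$ (again a derivative comparison collapsing to $\delta^2\ge0$) together with $\frac{2\delta}{2+\delta}\ge\frac{2\delta}{(2-\delta)(2+\delta)}=\frac{2\delta}{4-\delta^2}$, valid precisely for $\delta\le1$. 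Thus $H\ge0$, and the bound follows from Lemma~\ref{lem:matrix-chernoff} with exponent $\mu_{\rm max}/R$.

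Finally, for \eqref{equ:cor1-3}, given $\mu\ge\mu_{\rm max}$ I would write $2\mu=(1+\delta)\mu_{\rm max}$ with $\delta=\frac{2\mu}{\mu_{\rm max}}-1\ge1\ge0$, apply the (unsoftened) second bound of Lemma~\ref{lem:matrix-chernoff} at this $\delta$, and then use $\mu_{\rm max}/R=\frac{2\mu}{(1+\delta)R}$ to rewrite the resulting bound as $d\bigl(\tfrac{e^{\delta/(1+\delta)}}{1+\delta}\bigr)^{2\mu/R}$. It then suffices to check $\bigl(\tfrac{e^{\delta/(1+\delta)}}{1+\delta}\bigr)^{2}\le e^{-1/4}$ for all $\delta\ge1$; substituting $t=1+\delta\ge2$ this becomes $2\ln t\ge\tfrac94-\tfrac2t$, which holds at $t=2$ since $2\ln2>\tfrac54$, and for all larger $t$ because $2\ln t-\tfrac94+\tfrac2t$ has derivative $\tfrac{2(t-1)}{t^2}>0$. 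This gives the $e^{-\mu/(4R)}$ tail.

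\textbf{Main obstacle.} There is no deep difficulty here; the only real care is in verifying that the polynomial upper bounds for the two Chernoff tail functions genuinely hold over the \emph{entire} stated ranges of $\delta$ — the estimates are tightest near $\delta=1$, which is where the elementary chains above have to be arranged so that the final reduction is to $\delta\le1$ rather than something strictly stronger. The secondary subtlety is the mild but necessary trick in the last part of re-expressing $\mu_{\rm max}/R$ through $\delta$, which is what converts the $\mu_{\rm max}$-dependent exponent into the desired $\mu$-dependent exponential tail.
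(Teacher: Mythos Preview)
Your proposal is correct and follows essentially the same approach as the paper: for the first two bounds, both the paper and you reduce to the elementary inequalities $\frac{e^{-\delta}}{(1-\delta)^{1-\delta}}\le 1-\frac{\delta^2}{2}$ and $\frac{e^{\delta}}{(1+\delta)^{1+\delta}}\le 1-\frac{\delta^2}{4}$ on $[0,1]$, the only difference being that the paper simply asserts these ``by basic algebra'' while you supply full verifications. For \eqref{equ:cor1-3} the two routes diverge cosmetically: the paper introduces $r=\mu/\mu_{\rm max}$, uses $r+r\delta\ge 1+r\delta$ to land on the Lemma at parameter $r\delta$, sets $\delta=1$, and invokes $\frac{e^{x}}{(1+x)^{1+x}}\le e^{-x/4}$ for $x\ge 1$; you instead choose $\delta$ directly so that $(1+\delta)\mu_{\rm max}=2\mu$ and then verify the equivalent one-variable inequality after substituting $\mu_{\rm max}/R=\tfrac{2\mu}{(1+\delta)R}$. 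Both arrive at the same conclusion with the same constants, so there is no substantive difference.
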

\begin{proof}
	The first two inequalities follows from the fact that $\forall \delta \in [0,1]$
	\begin{align}
		\frac{e^\delta}{(1+\delta)^{1+\delta}} & \leq 1-\frac{\delta^2}{4} \\
		\frac{e^{-\delta}}{(1-\delta)^{1-\delta}} & \leq 1-\frac{\delta^2}{2}.
	\end{align}
	Now we prove Eq.~\eqref{equ:cor1-3}. Let $r=\mu/\mu_{\rm max}$. Since $r\ge 1$ we have
	\begin{align}
		&\Pr\pbra{\lmax\pbra{\sum_{k=1}^{n} X_k}\ge (1+\delta)\mu}= \Pr\pbra{\lmax\pbra{\sum_{k=1}^{n} X_k}\ge (r+r\delta)\mu_{\rm max}}\\
		\le\;&\Pr\pbra{\lmax\pbra{\sum_{k=1}^{n} X_k}\ge (1+r\delta)\mu_{\rm max}}\le d\pbra{\frac{e^{r\delta}}{(1+r\delta)^{1+r\delta}}}^{\mu_{\rm max}/R},
	\end{align}
	where the last inequality is due to Lemma~\ref{lem:matrix-chernoff}.

	By basic algebra we have $\frac{e^{x}}{(1+x)^{1+x}}\le e^{-x/4}$ for all $x\ge 1.$ As a result, let $\delta=1$ we have
	\begin{align}
		d\pbra{\frac{e^{r}}{(1+r)^{1+r}}}^{\mu_{\rm max}/R}\le d\exp\pbra{-\frac{r\mu_{\rm max}}{4R}}=d\exp\pbra{-\frac{\mu}{4R}}.
	\end{align}
	Therefore, 
	\begin{align}
		\Pr\pbra{\lmax\pbra{\sum_{k=1}^{n} X_k}\ge 2\mu}\le d\exp\pbra{-\frac{\mu}{4R}}.
	\end{align}
	which completes the proof.
\end{proof}

\subsection{Matrix Concentration Inequalities in All Directions}
In the following development we need to `sandwich' the cumulative matrix around its expectation; as a step towards this, we first derive concentration inequalities as a function of the minimum eigenvalue. 

For the following lemma, see also Lemma 20 \citep{ruan2020linear}. 

\begin{lemma}[Matrix Upper and Lower Bound with Minimum Eigenvalue]\label{lem:matrix-chernoff-reg}
	Let $X_1,\cdots,X_n\sim \calD$ be i.i.d. samples from $\calD$ where $X_k\in \R^{d\times d}.$ Suppose $X_k$ is positive semi-definite for all $k\in [n]$ and $\lmax(X_k)\le 1$ almost surely. Let $\lambda=\lmin(\E_{X\sim \calD}X)>0.$ Then for $\delta\in[0,1]$ we have
	\begin{align}
		&\Pr\pbra{\frac{1}{n}\sum_{k=1}^{n}X_k \preccurlyeq (1+\delta)\E[X]}\ge 1-d\( 1-\frac{\delta^2}{4}\)^{n\lambda} \\
		&\Pr\pbra{\frac{1}{n}\sum_{k=1}^{n}X_k \succcurlyeq (1-\delta)\E[X]}\ge 1-d\( 1-\frac{\delta^2}{2}\)^{n\lambda}.
	\end{align}
\end{lemma}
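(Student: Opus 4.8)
The plan is to reduce the statement to the spectral matrix Chernoff bound already recorded in Corollary~\ref{cor:matrix-chernoff} via a whitening (congruence) transformation. Write $\Lambda \defeq \E_{X\sim\calD} X$; since $\lmin(\Lambda) = \lambda > 0$, the matrix $\Lambda$ is positive definite and $\Lambda^{-1/2}$ is well defined. Set $Y_k \defeq \Lambda^{-1/2} X_k \Lambda^{-1/2}$ for $k=1,\dots,n$. These are i.i.d., symmetric, and positive semi-definite, and $\E Y_k = \Lambda^{-1/2}(\E X_k)\Lambda^{-1/2} = I$, so $\sum_{k=1}^n \E Y_k = nI$ and thus $\mu_{\min}=\mu_{\max}=n$ in the notation of Lemma~\ref{lem:matrix-chernoff}.

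Next I would check the operator-norm bound needed to invoke Corollary~\ref{cor:matrix-chernoff} on $\{Y_k\}$: for any unit vector $v$,
\[
v^\top Y_k v = (\Lambda^{-1/2}v)^\top X_k (\Lambda^{-1/2}v) \le \lmax(X_k)\,\|\Lambda^{-1/2}v\|_2^2 \le 1\cdot \lmax(\Lambda^{-1}) = \frac{1}{\lambda},
\]
so $\lmax(Y_k)\le 1/\lambda$ almost surely; this is the step that produces the exponent $n\lambda$. Applying Corollary~\ref{cor:matrix-chernoff} with $R=1/\lambda$ then yields, for $\delta\in[0,1]$,
\[
\Pr\!\left(\lmax\!\left(\textstyle\sum_{k=1}^n Y_k\right)\ge (1+\delta)n\right)\le d\left(1-\frac{\delta^2}{4}\right)^{n\lambda},\qquad
\Pr\!\left(\lmin\!\left(\textstyle\sum_{k=1}^n Y_k\right)\le (1-\delta)n\right)\le d\left(1-\frac{\delta^2}{2}\right)^{n\lambda}.
\]

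Finally I would translate these spectral events back into Loewner-order statements about $\frac1n\sum_k X_k$, using that for a symmetric invertible $C$ one has $A\preccurlyeq B \iff CAC\preccurlyeq CBC$, together with $\lmax(M)\le t \iff M\preccurlyeq tI$ and $\lmin(M)\ge t \iff M\succcurlyeq tI$. Conjugating by $\Lambda^{1/2}$ gives $\lmax(\sum_k Y_k)\le (1+\delta)n \iff \frac1n\sum_k X_k \preccurlyeq (1+\delta)\E[X]$ and $\lmin(\sum_k Y_k)\ge (1-\delta)n \iff \frac1n\sum_k X_k \succcurlyeq (1-\delta)\E[X]$, which combined with the two tail bounds above gives the claim. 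I do not expect a genuine obstacle here: the argument is essentially bookkeeping, and the only points requiring mild care are the operator-norm estimate $\lmax(Y_k)\le 1/\lambda$ and the observation that congruence by $\Lambda^{\pm 1/2}$ preserves the semidefinite order in both directions.
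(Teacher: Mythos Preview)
Your proposal is correct and follows essentially the same route as the paper: whiten by $\Sigma^{-1/2}$ (your $\Lambda^{-1/2}$), observe $\E Y_k=I$ and $\lmax(Y_k)\le 1/\lambda$, apply Corollary~\ref{cor:matrix-chernoff} with $\mu_{\min}=\mu_{\max}=n$ and $R=1/\lambda$, then undo the congruence to recover the Loewner-order statements. The only cosmetic difference is that the paper bounds $\normop{Y_k}$ submultiplicatively while you verify it via a Rayleigh-quotient computation; both are equivalent.
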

\begin{proof}
We prove the first inequality first.
	Let $\Sigma=\E[X]$ and define $Y=\Sigma^{-1/2}X\Sigma^{-1/2}$. Then using linearity of expectation we have 
	\begin{align}
		\label{eqn:expecti}
		\E[Y_k] = \E[\Sigma^{-1/2}X_k\Sigma^{-1/2}] =\Sigma^{-1/2}\( \E X_k\)\Sigma^{-1/2}  = \Sigma^{-1/2}\Sigma\Sigma^{-1/2}=I.
	\end{align}
  In addition, 
	$$\normop{Y_k}\le \normop{\Sigma^{-1/2}}\normop{X_k}\normop{\Sigma^{-1/2}}\le 1/\lambda$$ almost surely. As a result, $\lmax(Y_k^\top)\le 1/\lambda$. Now \cref{cor:matrix-chernoff} gives (with $\mu_{max} = \mu_{min} = n$)
\begin{align}
\Pr\pbra{\lmax\pbra{\sum_{k=1}^{n} Y_k} \leq (1-\delta)n} & \leq d\(1-\frac{\delta^2}{2}\)^{n\lambda},\label{equ:reg-1}  \\
\Pr\pbra{\lmax\pbra{\sum_{k=1}^{n} Y_k} \geq (1+\delta)n} & \leq d\(1-\frac{\delta^2}{4}\)^{n\lambda}.\label{equ:reg-2}
\end{align}
	Now, to derive the result for $\delta \in [0,1]$ consider %$(a)$ pre and post-multiplying each side by inside the probability by $\Sigma^{-1}$ and $(b)$  multyplying by $n$ each side and 
	\begin{align*}
		&\Pr\pbra{\frac{1}{n}\sum_{k=1}^{n} X_k\preccurlyeq (1+\delta)\E[X]} = 
		\Pr\pbra{\frac{1}{n}\sum_{k=1}^{n} Y_k\preccurlyeq (1+\delta)I} \tag{By Eq.\eqref{eqn:expecti}}
		\\ =& \Pr\pbra{\lmax\pbra{\sum_{k=1}^{n} Y_k}\le (1+\delta)n} \\
		=& 1- \Pr\pbra{\lmax\pbra{\sum_{k=1}^{n} Y_k}> (1+\delta)n} \\
		\geq& 1-d\( 1-\frac{\delta^2}{4}\)^{n\lambda}.\tag{By Eq.~\eqref{equ:reg-1}}
	\end{align*}
	Finally, to derive the other statement we proceed similarly.
	We have
	\begin{align*}
		&\Pr\pbra{\frac{1}{n}\sum_{k=1}^{n} X_k\succcurlyeq (1-\delta)\E[X]} = 
		\Pr\pbra{\frac{1}{n}\sum_{k=1}^{n} Y_k\succcurlyeq (1-\delta)I} \tag{By Eq.\eqref{eqn:expecti}}
		\\  = &\Pr\pbra{\lmax\pbra{\sum_{k=1}^{n} Y_k}\ge (1-\delta)n} \\
		  =& 1- \Pr\pbra{\lmax\pbra{\sum_{k=1}^{n} Y_k} < (1-\delta)n} \\
		 \geq& 1-d\(1-\frac{\delta^2}{2}\)^{n\lambda}.\tag{By Eq.~\eqref{equ:reg-2}}
	\end{align*}
\end{proof}

Using the lemma just derived, we can derive a matrix upper bound (in all directions) that does not depend on the minimum eigenvalue.
\begin{lemma}[Matrix Upper Bound]
\label{lem:MatrixUpperBound}
	Let $X_1,\cdots,X_n$ be i.i.d. samples from $\calD$ where $X_k\in \R^{d\times d}$. Suppose $X_k$ is positive semi-definite and $\normop{X_k}\le 1$ for all $k\in [n]$ almost surely. For any fixed $\lambda> 0,$ 
	\begin{align}
		\Pr\( \frac{1}{n}\sum_{k=1}^{n}X_k \mle 6\lambda I+2\E_{X\sim \calD}[X] \) \geq 1-2d\exp\pbra{-\frac{n\lambda}{4}}.
	\end{align}
\end{lemma}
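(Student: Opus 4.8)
The plan is to reduce to Corollary~\ref{cor:matrix-chernoff} via a whitening (change of variables) argument. We cannot apply Lemma~\ref{lem:matrix-chernoff-reg} as a black box because $\E_{X\sim\calD}[X]$ may be singular, so its minimum eigenvalue need not be positive; the additive $\lambda I$ term appearing in the conclusion is exactly what compensates for this degeneracy.

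First I would set $\Sigma \defeq \E_{X\sim\calD}[X] + \lambda I$, which is positive definite with $\lmin(\Sigma) \ge \lambda$, and define the whitened matrices $Y_k \defeq \Sigma^{-1/2} X_k \Sigma^{-1/2}$. These are i.i.d., positive semi-definite, and satisfy $Y_k \mle \normop{X_k}\,\Sigma^{-1} \mle \Sigma^{-1}$, hence $\lmax(Y_k) = \normop{Y_k} \le 1/\lambda$ almost surely. By linearity of expectation, $\E[Y_k] = \Sigma^{-1/2}(\E X_k)\Sigma^{-1/2} = I - \lambda\Sigma^{-1} \mle I$, so $\mu_{\max} \defeq \lmax\pbrasm{\sum_{k=1}^n \E Y_k} \le n$.

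Then I would apply Eq.~\eqref{equ:cor1-3} of Corollary~\ref{cor:matrix-chernoff} to the sequence $\{Y_k\}_{k=1}^n$ with $R = 1/\lambda$ and $\mu = n \ge \mu_{\max}$, which gives $\Pr\pbrasm{\lmax(\sum_{k=1}^n Y_k) \ge 2n} \le d\exp(-n\lambda/4)$. On the complementary event we have $\sum_{k=1}^n Y_k \mle 2nI$, and conjugating back by $\Sigma^{1/2}$ this reads $\sum_{k=1}^n X_k \mle 2n\Sigma = 2n(\E X + \lambda I)$, i.e. $\tfrac1n\sum_{k=1}^n X_k \mle 2\E X + 2\lambda I \mle 6\lambda I + 2\E X$. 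Since $d\exp(-n\lambda/4) \le 2d\exp(-n\lambda/4)$, this is the claimed bound (in fact with constants to spare).

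There is no genuinely hard step here; the point to be careful about is getting the operator-norm bound $\lmax(Y_k) \le 1/\lambda$ correct, so that the exponent coming out of Corollary~\ref{cor:matrix-chernoff} is $n\lambda/4$ rather than something $\lambda$-independent, and checking that $\mu = n$ is a valid upper bound on $\mu_{\max}$ so that Eq.~\eqref{equ:cor1-3} is applicable. If one wished to quote Lemma~\ref{lem:matrix-chernoff-reg} directly instead of reopening Corollary~\ref{cor:matrix-chernoff}, that would only work when $\lambda \ge 1$ (so that $\lmax(Y_k) \le 1$); for arbitrary $\lambda > 0$ the route through Eq.~\eqref{equ:cor1-3} above is the clean one.
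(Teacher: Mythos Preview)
Your argument is correct, and it is genuinely different from (and cleaner than) the paper's. The paper does not regularize $\E X$ by $\lambda I$ before whitening; instead it performs a spectral split of $\Sigma=\E X$ into the eigenspace $P_+$ where the eigenvalues are $\ge\lambda$ and its complement $P_-$. On the $P_+$ block it applies Lemma~\ref{lem:matrix-chernoff-reg} (valid there because the restricted minimum eigenvalue is $\ge\lambda$) to get $\tfrac1n\sum P_+^\top X_kP_+\mle 2\E X$, and for the cross/residual piece $Y_k=P_+X_kP_-^\top+P_-X_kP_+^\top+P_-X_kP_-^\top$ it first shows $\lmax(\E Y_k)\le 3\lambda$ and then invokes Eq.~\eqref{equ:cor1-3} to obtain $\tfrac1n\sum Y_k\mle 6\lambda I$; the union bound over the two events explains both the factor $2$ in front of $d$ and the constant $6$. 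Your regularize-then-whiten trick bypasses the spectral decomposition and the cross-term accounting entirely, uses a single application of Eq.~\eqref{equ:cor1-3}, and in fact delivers the sharper conclusion $\tfrac1n\sum X_k\mle 2\lambda I+2\E X$ with failure probability $d\exp(-n\lambda/4)$. The paper's decomposition buys nothing extra here; your route is simply shorter.
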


\begin{proof}
	Let $\Sigma=\E_{x\sim \calD}[X].$ Consider the spectrum decomposition of $\Sigma$, denoted by $\Sigma=\sum_{k=1}^{d}\lambda_k v_kv_k^\top$. Without loss of generality, we assume $\lambda_1\ge \lambda_2\ge\cdots\ge\lambda_d$. Let $R=\sup\{k:\lambda_k\ge \lambda\}.$ 
	Define $P_+=\sum_{k=1}^{R}v_kv_k^\top$, $P_-=\sum_{k=R+1}^{d}v_kv_k^\top$ and $Q=\sum_{k=1}^{R}v_ke_k^\top\in\R^{d\times R}$ where $e_k\in\R^{R}$ is the $k$-th basis for $R$-dimensional space.
	
	Since $\{v_k\}$ is a set of orthogonal basis, we have $P_++P_-=I.$ By algebraic manipulation we get
	\begin{align}
		&\frac{1}{n}\sum_{k=1}^{n}X_k=(P_++P_-)^\top\pbra{\frac{1}{n}\sum_{k=1}^{n}X_k}(P_++P_-)\\
		=&\frac{1}{n}\sum_{k=1}^{n}P_+^\top X_k P_++\frac{1}{n}\sum_{k=1}^{n}\pbra{P_+^\top X_kP_- + P_-^\top X_kP_+P_-^\top X_kP_-}.\label{equ:lem9-1}
	\end{align}
	Note that for any $u\in \R^{R}$ we have 
	$u^\top Q^\top\Sigma Q u=\sum_{k=1}^{R}\lambda_k\dotp{u}{e_k}^2\ge \lambda \normtwo{u}^2.$
	As a result, $\lmin(Q^\top \Sigma Q)=\lmin(\E_{x\sim \calD}Q^\top X Q)\ge \lambda.$ Consequently, applying Lemma~\ref{lem:matrix-chernoff-reg} we get 	
	\begin{align*}\label{equ:lem8-1}
		&\Pr\( \frac{1}{n}\sum_{k=1}^{n}P_+^\top X_k P_+\mle 2\E_{X\sim \calD}[P_+^\top X P_+] \)\\
		=\;&\Pr\( \frac{1}{n}\sum_{k=1}^{n}QQ^\top X_k QQ^\top\mle 2\E_{X\sim \calD}[QQ^\top X QQ^\top] \)\tag{By the definition of $Q$.}\\
		\ge\;&\Pr\( \frac{1}{n}\sum_{k=1}^{n}Q^\top X_k Q\mle 2\E_{X\sim \calD}[Q^\top X Q] \)\\
		\ge\;& 1-d\( 1-\frac{1}{4}\)^{n\lambda}\ge 1-d\exp\pbra{-\frac{n\lambda}{4}}.
	\end{align*}
	[Note we cannot directly apply Lemma~\ref{lem:matrix-chernoff-reg} to the top of the above sequence because that lemma requires a minimum eigenvalue greater than 0 and $P_+^\top X_k P_+$ is not full rank.] 
	
	Next, by the linearity of expectation we get $\E_{X\sim \calD}[P_+XP_+^\top]=P_+(\E_{X\sim \calD}[X])P_+^\top=P_+\Sigma P_+^\top.$ Recall that the spectrum decomposition gives $\Sigma=\sum_{k=1}^{d}\lambda_kv_kv_k^\top$, where $\dotp{v_k}{v_j}=\ind{j=k}$. As a result,
	\begin{align}
		P_+^\top\Sigma P_+&=\pbra{\sum_{k=1}^{R}v_kv_k^\top}\pbra{\sum_{k=1}^{d}\lambda_k v_kv_k^\top}\pbra{\sum_{k=1}^{R}v_kv_k^\top}\\
		&=\pbra{\sum_{k=1}^{R}\lambda_k v_kv_k^\top}\mle \pbra{\sum_{k=1}^{d}\lambda_k v_kv_k^\top}=\Sigma.
	\end{align}
	Consequently we get $\E_{X\sim \calD}[P_+X^\top P_+]= P_+^\top \Sigma P_+\mle \Sigma=\E_{X\sim \calD}[X].$ Therefore we have
	\begin{align}\label{equ:lem8-3}
		\Pr\( \frac{1}{n}\sum_{k=1}^{n}P_+^\top X_kP_+\mle 2\E_{X\sim \calD}[X] \) \ge 1-d\exp\pbra{-\frac{n\lambda}{4}}.
	\end{align}

	On the other hand, we upper bound the second term in Eq.~\eqref{equ:lem9-1} by Lemma~\ref{lem:matrix-chernoff}.
	Let $Y_k=\pbra{P_+X_kP_-^\top + P_-X_kP_+^\top+P_-X_kP_-^\top}.$ Then we have $\normop{Y_k}\le \normop{Y_k+P_+X_kP_+^\top}\le \normop{X_k}\le 1.$ In addition, 
	\begin{align}
		\E[Y_k]=P_+\Sigma P_-^\top + P_-\Sigma P_+^\top+P_-\Sigma P_-^\top.
	\end{align}
	We claim that $\lmax(\E[Y_k])\le 3\lambda.$ Indeed, for any $v\in S^{d-1}$ we have
	\begin{align}
		v^\top \E[Y_k] v=v^\top P_+\Sigma P_-^\top v+ v^\top P_-\Sigma P_+^\top v+v^\top P_-\Sigma P_-^\top v.
	\end{align}
	Recall that $P_-=\sum_{k=1}^{d}\ind{\lambda_k<\lambda}v_kv_k^\top$ where $\{\lambda_k\},\{v_k\}$ are the eigen-vectors and eigen-values of $\Sigma$. For any $v\in S^{d-1}$, $P_-^\top v$ lies in the linear space spanned by $\{v_i:\lambda_i< \lambda\}$. As a result, $\normtwo{\Sigma P_-^\top v}\le \lambda\normtwo{v}.$ 
	In addition, $\normop{P_+}\le 1$, which leads to $\normtwo{P_+^\top v}\le \normtwo{v}.$ Similarly, $\normtwo{P_-^\top v}\le \normtwo{v}.$
	Consequently, 
	\begin{align}
		v^\top \E[Y_k] v&\le \normtwo{P_+^\top v}\normtwo{\Sigma P_-^\top v}+ \normtwo{\Sigma P_-^\top v} \normtwo{P_+^\top v}+\normtwo{ P_-^\top v}\normtwo{\Sigma P_-^\top v}\\
		&\le 3\lambda\normtwo{v}^2.
	\end{align} for all $v\in S^{d-1}.$ Therefore we prove $\lmax(\E[Y_k])\le 3\lambda.$
	
	Now apply \cref{cor:matrix-chernoff} on $Y_k$ and we get
	\begin{align}\label{equ:lem8-2}
		\Pr\(\frac{1}{n}\sum_{k=1}^{n}Y_k\mle 6\lambda I\)\ge 1-d\exp\pbra{-\frac{3\lambda n}{4}}.
	\end{align}
	Combining Eq.~\eqref{equ:lem8-3} and Eq.~\eqref{equ:lem8-2} we prove the desired result.
\end{proof}

Likewise, we can easily obtain the following matrix lower bound without any dependence on the minimum eigenvalue (see also Lemma 21 from \citep{ruan2020linear}).
\begin{lemma}[Matrix Lower Bound]
\label{lem:MatrixLowerBound}
Let $X_1,\cdots,X_n$ be i.i.d. samples from $\calD$ where $X_k\in \R^{d\times d}$. Suppose $X_k$ is positive semi-definite and rank one and $\normop{X_k}\le 1$ for all $k\in [n]$ almost surely. For any fixed $\lambda \geq 0$ 
	\begin{align}
		\Pr\( 3\lambda I + \frac{1}{n}\sum_{k=1}^{n}X_k \succcurlyeq \frac{1}{8}\E_{X\sim \calD}[X] \) \geq 1-2d\exp\pbra{-\frac{n\lambda}{8}}.
	\end{align}	
\end{lemma}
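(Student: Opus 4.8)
The plan is to mirror the proof of \cref{lem:MatrixUpperBound} (and Lemma~21 of \citet{ruan2020linear}), replacing the ``upper sandwich'' by the ``lower sandwich''. Write $\Sigma=\E_{X\sim\calD}[X]=\sum_{k=1}^d\lambda_k v_kv_k^\top$ with $\lambda_1\ge\cdots\ge\lambda_d\ge0$, fix the threshold $\lambda$, let $R=|\{k:\lambda_k\ge\lambda\}|$, and set $P_+=\sum_{k\le R}v_kv_k^\top$, $P_-=I-P_+$, with $Q\in\R^{d\times R}$ the isometry satisfying $QQ^\top=P_+$, $Q^\top Q=I_R$. Because $P_\pm$ are spectral projections of $\Sigma$, the cross blocks vanish in expectation: $P_+\Sigma P_-=0$, so $\Sigma=P_+\Sigma P_++P_-\Sigma P_-$ with $P_-\Sigma P_-\mle\lambda I$. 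As in \cref{lem:MatrixUpperBound} it suffices to treat small $\lambda$, say $\lambda<1/24$: otherwise $3\lambda I\mge\tfrac18 I\mge\tfrac18\Sigma$ and $\tfrac1n\sum_k X_k\mge0$, so the claim is immediate. Also $\lambda=0$ makes the bound vacuous.

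First I would control the ``large'' directions. The matrices $Q^\top X_kQ$ are i.i.d., positive semidefinite, with $\normop{Q^\top X_kQ}\le\normop{X_k}\le1$ and $\lmin(\E[Q^\top X_kQ])=\lmin(Q^\top\Sigma Q)\ge\lambda$. Applying the lower‑bound half of \cref{lem:matrix-chernoff-reg} with $\delta=1/2$ gives, with probability at least $1-d(7/8)^{n\lambda}\ge1-de^{-n\lambda/8}$ (using $7/8<e^{-1/8}$),
\[
\tfrac1n\textstyle\sum_{k=1}^nQ^\top X_kQ\ \mge\ \tfrac12\,Q^\top\Sigma Q,
\]
and conjugating by $Q$ (which preserves $\mge$) yields $\tfrac1n\sum_k P_+X_kP_+\mge\tfrac12P_+\Sigma P_+$ on this event.

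Next I would control the remainder $Y_k\defeq X_k-P_+X_kP_+=P_+X_kP_-+P_-X_kP_++P_-X_kP_-$, whose expectation is $\E[Y_k]=\Sigma-P_+\Sigma P_+=P_-\Sigma P_-\mle\lambda I$. Writing $X_k=x_kx_k^\top$ with $\normtwo{x_k}\le1$ and $u=P_+x_k$, $v=P_-x_k$, one has $Y_k=uv^\top+vu^\top+vv^\top$, whose eigenvalues on $\mathrm{span}\{u,v\}$ are those of $\bigl(\begin{smallmatrix}0&\normtwo{u}\normtwo{v}\\\normtwo{u}\normtwo{v}&\normtwo{v}^2\end{smallmatrix}\bigr)$; in particular $-\tfrac12 I\mle Y_k\mle I$ surely (this is where the rank‑one assumption enters). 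One then shows, with probability at least $1-de^{-n\lambda/8}$,
\[
\tfrac1n\textstyle\sum_{k=1}^nY_k\ \mge\ -\tfrac{23}{8}\lambda I .
\]
Combining the two events by a union bound and using $P_-\Sigma P_-\mle\lambda I$ together with $3-\tfrac{23}{8}=\tfrac18$,
\[
3\lambda I+\tfrac1n\textstyle\sum_{k=1}^nX_k\ \mge\ \tfrac12P_+\Sigma P_++\tfrac18\lambda I\ \mge\ \tfrac18P_+\Sigma P_++\tfrac18P_-\Sigma P_-\ =\ \tfrac18\Sigma,
\]
with probability at least $1-2de^{-n\lambda/8}$, which is the claim.

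The main obstacle is the remainder bound displayed above. Unlike the large‑direction term, $Y_k$ — equivalently the cross block $P_+X_kP_-$ — is \emph{not} positive semidefinite, so one cannot directly invoke the multiplicative matrix Chernoff inequality with its favorable exponent, and the naive shift $Y_k+\tfrac12 I\mge0$ converts the required $O(\lambda)$ additive precision into a \emph{relative} precision of order $\lambda$ around a mean of order $1$, which a priori only yields an $e^{-\Omega(n\lambda^2)}$ rate. Obtaining the stated rate requires exploiting that $\E[P_+X_kP_-]=0$ (again because $P_\pm$ are spectral projections of $\Sigma$) together with a careful, anisotropy‑aware bound on the matrix variance of the $Y_k$'s — for instance $\E\bigl[\normtwo{P_-x_k}^2(P_-x_k)(P_-x_k)^\top\bigr]\mle\lambda I$ and an analogous control of the companion term $\E\bigl[\normtwo{P_-x_k}^2(P_+x_k)(P_+x_k)^\top\bigr]$ — so that a matrix Bernstein estimate closes the gap; this is exactly the step handled by Lemma~21 of \citet{ruan2020linear}. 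Everything else is a routine combination of \cref{lem:matrix-chernoff,cor:matrix-chernoff,lem:matrix-chernoff-reg}.
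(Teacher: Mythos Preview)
The paper does not actually prove this lemma; it writes ``Likewise, we can easily obtain\ldots'' and points to Lemma~21 of \citet{ruan2020linear}. Your large-direction step via \cref{lem:matrix-chernoff-reg} is correct and is the natural analogue of the corresponding step in \cref{lem:MatrixUpperBound}. The gap is in the remainder. You try to lower-bound $\tfrac1n\sum_kY_k$ with $Y_k=X_k-P_+X_kP_+$, correctly observe that the naive shift only yields an $e^{-\Omega(n\lambda^2)}$ rate, and then appeal to matrix Bernstein. But the variance proxy you would need, $\normop{\E[Y_k^2]}$ (or $\normop{\E[C_k^2]}$ for the cross part $C_k=uv^\top+vu^\top$), contains precisely the ``companion term'' $\E\bigl[\normtwo{P_-x}^2(P_+x)(P_+x)^\top\bigr]$ you mention, and this term is only trivially bounded by $\min\{1,\mathrm{tr}(P_-\Sigma)\}$, not by $O(\lambda)$; so matrix Bernstein as stated does not close the gap. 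The remainder bound $\tfrac1n\sum_kY_k\succcurlyeq -\tfrac{23}{8}\lambda I$ you display is therefore not established.

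The difficulty disappears if you do not try to lower-bound the indefinite $Y_k$ at all. Keep your event $P_+SP_+\succcurlyeq\tfrac12P_+\Sigma P_+$ (probability $\ge1-de^{-n\lambda/8}$), and pair it with an \emph{upper} bound on the small block: apply \cref{cor:matrix-chernoff} (Eq.~\eqref{equ:cor1-3} with $\mu=n\lambda\ge\mu_{\max}$) to the PSD matrices $Q_-^\top X_kQ_-$ to get $P_-SP_-\preccurlyeq 2\lambda I$ with probability $\ge1-de^{-n\lambda/4}$, where $S=\tfrac1n\sum_kX_k$. On the intersection, for any unit $w=w_++w_-$, Cauchy--Schwarz for the PSD matrix $S$ plus AM--GM give $w^\top S w\ge\tfrac12 w_+^\top S w_+-w_-^\top S w_-$, hence
\[
3\lambda+w^\top S w\ \ge\ 3\lambda+\tfrac14 w_+^\top\Sigma w_+-2\lambda\ \ge\ \tfrac18 w_+^\top\Sigma w_++\tfrac18\lambda\ \ge\ \tfrac18 w^\top\Sigma w,
\]
using $w_-^\top\Sigma w_-\le\lambda\normtwo{w_-}^2\le\lambda$. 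This reproduces the stated constants and the failure probability $2de^{-n\lambda/8}$ exactly, with only the tools already in the paper, and does not use the rank-one hypothesis.
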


\begin{corollary}\label{cor:MatrixBounds}
	Let $X_1,\cdots,X_n$ be i.i.d. samples from $\calD$ where $X_k\in \R^{d\times d}$. Suppose $X_k$ is positive semi-definite, and $\normop{X_k}\le 1$ for all $k\in [n]$ almost surely. For any fixed $t>0$ we have
	\begin{align}
		&\Pr\( \forall m\le n,\sum_{k=1}^{m}X_k \mle 6tI+2m\E_{X\sim \calD}[X] \) \geq 1-2nd\exp\pbra{-t/4}.\label{equ:cor2-1}\\
		% &\Pr\( \forall m\le n,3tI+\sum_{k=1}^{m}X_k \mge \frac{m}{8}\E_{X\sim \calD}[X] \) \geq 1-2nd\exp\pbra{-t/8}.\label{equ:cor2-2}
	\end{align}
\end{corollary}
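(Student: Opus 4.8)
The plan is to apply the matrix upper bound \fullref{lem:MatrixUpperBound} separately to each prefix $X_1,\dots,X_m$ with an $m$-dependent choice of the regularization parameter, and then union bound over $m=1,\dots,n$.

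Fix $m\le n$. The variables $X_1,\dots,X_m$ are i.i.d.\ samples from $\calD$, positive semi-definite, with $\normop{X_k}\le 1$ almost surely, so \fullref{lem:MatrixUpperBound} applies to this sub-collection. Invoking it with the free parameter set to $\lambda=t/m$ gives
\begin{align*}
\Pr\( \frac{1}{m}\sum_{k=1}^{m}X_k \mle 6\frac{t}{m} I + 2\E_{X\sim \calD}[X] \) \ge 1-2d\exp\pbra{-\frac{m}{4}\cdot\frac{t}{m}} = 1-2d\exp\pbra{-\frac{t}{4}}.
\end{align*}
Since scaling a Loewner (PSD) inequality by the positive scalar $m$ preserves the order $\mle$, the event inside the probability is exactly $\sum_{k=1}^{m}X_k \mle 6tI + 2m\E_{X\sim\calD}[X]$. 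The key point is that the choice $\lambda=t/m$ makes the failure probability $2d\exp(-t/4)$ independent of $m$.

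Finally I would take a union bound over $m=1,\dots,n$: the probability that $\sum_{k=1}^{m}X_k \mle 6tI + 2m\E_{X\sim\calD}[X]$ fails for some $m\le n$ is at most $\sum_{m=1}^{n}2d\exp(-t/4)=2nd\exp(-t/4)$, which is exactly Eq.~\eqref{equ:cor2-1}. There is no genuine obstacle here; the only points requiring (minor) care are that \fullref{lem:MatrixUpperBound} permits an arbitrary fixed positive $\lambda$, so the per-$m$ choice $\lambda=t/m$ is legitimate for each fixed $m$, and that multiplying both sides of a Loewner inequality by $m>0$ is order-preserving.
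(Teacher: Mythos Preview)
Your proposal is correct and matches the paper's proof essentially line for line: fix $m$, apply \fullref{lem:MatrixUpperBound} with $\lambda=t/m$ to obtain a failure probability $2d\exp(-t/4)$ independent of $m$, then take a union bound over $m\in[n]$. Nothing is missing.
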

\begin{proof}
	For any fixed $m\le n$, applying \cref{lem:MatrixUpperBound} with $\lambda=t/m$ we get
	\begin{align}
		&\Pr\(\sum_{k=1}^{m}X_k \mle 6tI+2m\E_{X\sim \calD}[X] \) =\Pr\(\frac{1}{m}\sum_{k=1}^{m}X_k \mle 6\frac{t}{m}I+2\E_{X\sim \calD}[X] \) \\
		\le&\; 1-2d\exp\pbra{-t/4}.
	\end{align}
	By union bound over $m\in[n]$ we prove Eq.~\eqref{equ:cor2-1}.
	%Similarly, applying \cref{lem:MatrixLowerBound} and union bound we get Eq.~\eqref{equ:cor2-2}.
\end{proof}

\subsection{Relation Between Offline and Online Covariance Matrices}
\good{check that the values for $\lambda$ are indeed in the prescribed range $\lambda \in [0,1]$}
\textbf{Notation}: Let $n_k$ be the expected number of samples in the online phase allocated to policy $\pi^{k}$.
\begin{lemma}[Matrix Upper Bound Offline Phase]
\label{lem:SigmaOffline}
\cref{alg:Planner} produces a comulative covariance matrix $\Sigma_M$ that satisfies
\begin{align}
	\Pro\(\Sigma_M \preccurlyeq  2\overline \Sigma \) \geq 1-\frac{\delta}{4}
\end{align}
as long as
\begin{align}
	M \ge \frac{96KN}{\lambda_{reg}} \ln\frac{192dNK}{\lambda_{reg}\delta}.
\end{align}
\end{lemma}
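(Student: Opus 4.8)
The plan is to exploit that the planner's feature sequence $\phi_1,\dots,\phi_M$, though not i.i.d., is \emph{piecewise} i.i.d. By the doubling rule the policy used at step $m$ changes only $K=\widetilde O(d)$ times (\fullref{lem:Switches}), so $[M]$ decomposes into at most $K$ contiguous epochs $E_1,\dots,E_{K'}$ with $K'\le K$, on each of which the planner plays one fixed policy $\pi^{(k)}$; since $\pi_m$ depends on the past only through $\Sigma_{\underline m}$, this $\pi^{(k)}$ is measurable with respect to the history $\mathcal F_{j_k-1}$ at the epoch start $j_k:=\min E_k$. Hence every conditional feature law $d_m$ with $m\in E_k$ equals the law $\nu_k$ of $\phi(s,\pi^{(k)}(s))$, $s\sim\mu$, and $\overline\Sigma=\lambda_{reg}I+\alpha\sum_{m=1}^M\E_{\phi\sim d_m}[\phi\phi^\top]=\lambda_{reg}I+\alpha\sum_k|E_k|\,\E_{\phi\sim\nu_k}[\phi\phi^\top]$.

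First I would fix $k$, condition on $\mathcal F_{j_k-1}$, and apply \fullref{cor:MatrixBounds} to the i.i.d. stream $X_i=\phi\phi^\top$ obtained by drawing $s\sim\mu$ and setting $\phi=\phi(s,\pi^{(k)}(s))$ — these matrices are PSD with $\normop{X_i}=\normtwo{\phi}^2\le1$ since $\normtwo{\phi(s,a)}\le1$ — with horizon $n=M$ and deviation level $t=\lambda_{reg}/(6\alpha K)$. This stream coincides with the true planner features for exactly as long as the planner stays in epoch $k$, and the ``for all $m\le n$'' form of \fullref{cor:MatrixBounds} yields, with probability at least $1-2Md\exp(-t/4)$, the bound $\sum_{i\in E_k}\phi_i\phi_i^\top\preccurlyeq 6tI+2|E_k|\,\E_{\phi\sim\nu_k}[\phi\phi^\top]$ \emph{even though $|E_k|$ is a data-dependent stopping time}, precisely because the inequality holds simultaneously at every truncation point (handling the conditioning on the stopping-time $\sigma$-algebra $\mathcal F_{j_k-1}$, including when epoch $k$ is empty, is routine bookkeeping).

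Next I would union bound over $k=1,\dots,K$, add the per-epoch inequalities, multiply by $\alpha$, and add $\lambda_{reg}I$ to get, with probability at least $1-2KMd\exp(-t/4)$,
\begin{align*}
\Sigma_M=\lambda_{reg}I+\alpha\sum_m\phi_m\phi_m^\top\ &\preccurlyeq\ (\lambda_{reg}+6\alpha Kt)I+2\alpha\sum_k|E_k|\,\E_{\phi\sim\nu_k}[\phi\phi^\top]\\
&=(\lambda_{reg}+6\alpha Kt)I+2(\overline\Sigma-\lambda_{reg}I).
\end{align*}
With the choice $t=\lambda_{reg}/(6\alpha K)$ the term $6\alpha Kt$ equals $\lambda_{reg}$, so the right-hand side is exactly $2\overline\Sigma$ (the harmless index mismatch between the sum to $M-1$ in $\Sigma_M$ and to $M$ in $\overline\Sigma$ only helps, the extra summand being PSD). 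It then remains to verify $2KMd\exp(-t/4)=2KMd\exp(-\lambda_{reg}/(24\alpha K))\le\delta/4$, which holds once $1/\alpha\ge(24K/\lambda_{reg})\ln(8KMd/\delta)$, i.e. (as $\alpha=N/M$) once $M\ge(24KN/\lambda_{reg})\ln(8KMd/\delta)$; this is implied by the stated hypothesis after bounding $\ln(8KMd/\delta)$ by a constant multiple of $\ln(dNK/(\lambda_{reg}\delta))$ using $K=\widetilde O(d)$ and that the hypothesis forces $M$ to be at most $KN/\lambda_{reg}$ times polylogarithmic factors.

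The main obstacle is exactly what the second paragraph confronts: the planner is adaptive, so $\Sigma_M$ is not a sum of i.i.d. rank-one matrices and, worse, the epochs along which it \emph{is} conditionally i.i.d. have random, data-dependent boundaries. Both difficulties are resolved by combining the deterministic bound $K=\widetilde O(d)$ on the number of switches with the uniform-over-prefixes matrix Chernoff inequality \fullref{cor:MatrixBounds}; everything else is bookkeeping and the routine tidying of logarithmic factors and absolute constants (which is why the constants $96,192$ in the statement are loose).
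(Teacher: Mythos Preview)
Your proposal is correct and follows essentially the same route as the paper's proof: split into at most $K$ epochs via \fullref{lem:Switches}, apply \fullref{cor:MatrixBounds} with $t=\lambda_{reg}/(6\alpha K)$ inside each epoch (using the uniform-over-prefixes guarantee to handle the random epoch length $|E_k|$), union bound over $k$, sum, and solve for the failure probability. If anything, you are more careful than the paper about articulating why the data-dependent epoch boundaries and the $\mathcal F_{j_k-1}$-measurability of $\pi^{(k)}$ do not obstruct the conditional-i.i.d.\ application of the matrix Chernoff bound; the paper simply asserts ``although $m_k$ is a random variable, we have $m_k\le M$ almost surely'' and proceeds.
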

\begin{proof}
From 
\fullref{lem:Switches} we know that at most $K$ distinct policies are produced during the execution of \cref{alg:Planner} where $K$ is defined in that lemma.

Let $\phi_i^{(k)}$ be the $i$ sampled feature during phase $k$. Let $m_k$ be the values that $m$ takes on during phase $k$. 
Note that although $m_k$ is a random variable, we have $m_k\le M$ almost surely. As a result, applying \cref{cor:MatrixBounds} with $t=\frac{\lambda_{reg}}{6\alpha K}$ we get
\begin{align}
\label{eqn:nkcov}
\Pro\(\sum_{i=1}^{m_{k}}\phi^{(k)}_i\phi^{(k),\top}_i \preccurlyeq \Big(\frac{\lambda_{reg}}{\alpha K} I + 2m_k\E_{\phi \sim d^{(k)}}[\phi\phi^\top]\Big) \) \geq  1-2dM\exp\pbra{-\frac{\lambda_{reg}}{24\alpha K}}.
\end{align}
Now multiplying the event inside the probability by $\alpha = \frac{n_k}{m_k} = \frac{N}{M}$ we get
\begin{align}
	\Sigma^{(k)} \defeq \alpha \sum_{i=1}^{m_{k}}\phi^{(k)}_i\phi^{(k),\top}_i \preccurlyeq \frac{\lambda_{reg}}{K}I + 2 n_k \E_{\phi \sim d^{(k)}}[\phi\phi^\top].
\end{align}
After a union bound on the number of phases we can write
\begin{align}
&\Pro\( \Sigma_M \defeq \sum_{k=1}^{K} \Sigma^{(k)} + \lambda_{reg} I \preccurlyeq 2\lambda_{reg} I + 2 \sum_{k=1}^K n_k \E_{\phi \sim d^{(k)}}[\phi\phi^\top] \defeq 2\overline \Sigma \) \\
\geq\; &1-2dMK\exp\pbra{-\frac{\lambda_{reg}}{24\alpha K}} .
\end{align}
and substituting the value of $\alpha$ on the right hand side now gives
\begin{align}
	\Pro\(\Sigma_M \preccurlyeq  2\overline \Sigma \) \geq 1-2dMK\exp\pbra{-\frac{\lambda_{reg}M}{24KN}}.
\end{align}
The final result follows from basic algebra.
\end{proof}

\begin{lemma}[Matrix Upper Bound Online Phase]
\label{lem:SigmaOnline}
Recall that
\begin{align}
	\Sigma'_{n} = \sum_{j=1}^{n-1} \phi(s'_j,a'_j)\phi(s'_j,a'_j)^\top + \lambda_{reg} I, \qquad \overline \Sigma  = \alpha \sum_{i=1}^M \E_{\phi \sim d_i} \phi\phi^\top + \lambda_{reg} I.
\end{align}
Algorithm~\ref{alg:Sampler} produces a cumulative covariance matrix $\Sigma'_N$ that satisfies
\begin{align}
		\Pro\( 9  
		\Sigma'_N \succcurlyeq \overline \Sigma \) \geq 1-\frac{\delta}{4}
\end{align}
as long as $\lambda_{reg}\ge 24\ln\frac{8d}{\delta}$.
\end{lemma}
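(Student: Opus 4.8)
The plan is to reduce the claim to a single application of the matrix lower bound, \fullref{lem:MatrixLowerBound}, once the law of the features collected online has been identified. First I would condition on $\mathcal F_M = \sigma(s_1,\dots,s_M)$: once the offline contexts are fixed, the planner's policies $\pi_1,\dots,\pi_M$, the conditional laws $d_1,\dots,d_M$, and hence $\overline\Sigma$ are all deterministic, so it suffices to prove the bound conditionally on $\mathcal F_M$ and then average. Conditioned on $\mathcal F_M$, the sampler in \cref{alg:Sampler} at each round $n$ draws an index $m$ uniformly from $[M]$ and then $s'_n\sim\mu$, recording $\phi'_n=\phi(s'_n,\pi_m(s'_n))$; hence the $\phi'_n$ are i.i.d.\ with common law $\bar d \defeq \frac{1}{M}\sum_{i=1}^M d_i$ and $\|\phi'_n\|_2\le 1$. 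Writing $\Lambda\defeq\E_{\phi\sim\bar d}[\phi\phi^\top] = \frac1M\sum_{i=1}^M \E_{\phi\sim d_i}[\phi\phi^\top]$ and using $\alpha = N/M$, the conditional-expectation matrix becomes $\overline\Sigma = \alpha\sum_{i=1}^M\E_{\phi\sim d_i}[\phi\phi^\top]+\lambda_{reg}I = N\Lambda + \lambda_{reg}I$, while $\Sigma'_N = \sum_k \phi'_k(\phi'_k)^\top + \lambda_{reg}I$ is a sum of i.i.d.\ rank-one PSD matrices of operator norm at most $1$ with conditional mean $N\Lambda + \lambda_{reg}I$.

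Next I would apply \fullref{lem:MatrixLowerBound} to the i.i.d.\ rank-one PSD matrices $X_k = \phi'_k(\phi'_k)^\top$, choosing its free parameter to be $\lambda = \lambda_{reg}/(3N)$. The point of this specific choice is that the lemma's additive term $3\lambda n I$ then equals exactly $\lambda_{reg}I$, the regularizer already present in $\Sigma'_N$, so no slack is wasted. Concretely, multiplying the conclusion of the lemma by $N$ gives, with probability at least $1 - 2d\exp(-N\lambda/8) = 1 - 2d\exp(-\lambda_{reg}/24)$, that $3\lambda N\,I + \sum_{k}\phi'_k(\phi'_k)^\top \succcurlyeq \tfrac{N}{8}\Lambda$, that is $\Sigma'_N \succcurlyeq \tfrac{N}{8}\Lambda$.

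Finally I would combine this with the trivial bound $\Sigma'_N \succcurlyeq \lambda_{reg}I$ to obtain $9\Sigma'_N = 8\Sigma'_N + \Sigma'_N \succcurlyeq 8\cdot\tfrac{N}{8}\Lambda + \lambda_{reg}I = N\Lambda + \lambda_{reg}I = \overline\Sigma$, which is exactly the assertion; and I would check that the hypothesis $\lambda_{reg}\ge 24\ln\frac{8d}{\delta}$ makes $2d\exp(-\lambda_{reg}/24)\le\delta/4$, so that the conditional event holds with probability at least $1-\delta/4$, and averaging over $\mathcal F_M$ preserves it. I do not expect a serious obstacle here, since the substance is carried by \fullref{lem:MatrixLowerBound}; the only things to watch are (i) the bookkeeping of the constant $9$, which dictates the split $9=8+1$ and the choice $\lambda=\lambda_{reg}/(3N)$ so that the $\tfrac18$ loss in the matrix lower bound and the regularization budget line up exactly, and (ii) performing the conditioning on $\mathcal F_M$ before invoking the i.i.d.\ concentration bound, since the online features are only i.i.d.\ after conditioning on the offline contexts.
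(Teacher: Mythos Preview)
The proposal is correct and follows essentially the same route as the paper: condition on the offline phase so that the online features are i.i.d., apply \fullref{lem:MatrixLowerBound} with the choice $\lambda=\lambda_{reg}/(3N)$, and then use the $9=8+1$ split together with $\Sigma'_N\succcurlyeq \lambda_{reg}I$ to absorb the regularizer. Your presentation via the mixture law $\bar d=\frac{1}{M}\sum_i d_i$ is slightly cleaner than the paper's decomposition over the $K$ phases, but the argument is the same.
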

\begin{proof}
	Notice that conditioned on the run of \cref{alg:Planner}, the distributions $d^{(1)},\dots,d^{(K)}$ of the features $\phi$ corresponding to the policies $\pi^{(1)},\dots,\pi^{(K)}$ are fixed (non-random), hence 
	$
	\overline \Sigma = \sum_{k=1}^K \Sigma^{(k)}
	$ is non-random. Also, let $\phi'_i$ be the $i$ feature vector collected during the online phase.  Notice that conditioned on all the random variables during the offline portion we can write
	\begin{align}
		\E \Sigma'_N & = \lambda_{reg}I + \E \sum_{i=1}^{N} (\phi'_i)(\phi'_i)^\top \\
		& = \lambda_{reg}I +  \sum_{k=1}^{K} \E \sum_{i=1}^{n'_k} \E_{\phi\sim d^{(k)}} \phi\phi^\top 
		\end{align}
		where $n'_k$ is the number of times that policy $\pi^{(k)}$ is sampled during the online phase. Continuing, 
		\begin{align}
		& = \lambda_{reg}I +  \sum_{k=1}^{K} \E n'_k \E_{\phi\sim d^{(k)}} \phi\phi^\top \\
		& =  \lambda_{reg}I + \sum_{k=1}^{K} n_k \E_{\phi\sim d^{(k)}} \phi\phi^\top \\
		%& \succcurlyeq  \sum_{k=1}^{K} \alpha m_k \E_{\phi\sim d^{(k)}} \phi\phi^\top \\
		& \defeq \overline \Sigma.
\end{align}

Now, apply \fullref{lem:MatrixLowerBound} with $\lambda = \lambda_{reg}/(3N)$ we get
\begin{align}\label{equ:lem10-1}
	&\Pro\(8\(3N\lambda I + \sum_{i=1}^{N} (\phi'_i)(\phi'_i)^\top\)  \succcurlyeq \sum_{k=1}^{K} n_k \E_{\phi\sim d^{(k)}} \phi\phi^\top\) \\
	\geq\; &1-2d\exp\pbra{-\frac{N\lambda}{8}}= 1-2d\exp\pbra{-\frac{\lambda_{reg}}{24}}.
\end{align}
Recall that $\Sigma_N'=\lambda_{reg} I + \sum_{i=1}^{N} (\phi'_i)(\phi'_i)^\top$ and $\overline{\Sigma}=\lambda_{reg} I +\sum_{k=1}^{K} n_k \E_{\phi\sim d^{(k)}} \phi\phi^\top$. 
Eq.~\eqref{equ:lem10-1} implies that
\begin{align}
&\Pro\( 3N\lambda I + 8\(\underbrace{\underbrace{3N\lambda}_{\lambda_{reg}} I + \sum_{i=1}^{N} (\phi'_i)(\phi'_i)^\top}_{\Sigma'_{N}}\)  \succcurlyeq \underbrace{3N\lambda I +\sum_{k=1}^{K} n_k \E_{\phi\sim d^{(k)}} \phi\phi^\top}_{\overline \Sigma} \)\\
 \geq&\; 1-2d\exp\pbra{-\frac{\lambda_{reg}}{24}}.
\end{align}
By setting $\lambda_{reg} \ge 24\ln\frac{8d}{\delta}$ we have
\begin{align}
	\Pro\(9\Sigma_N'\succcurlyeq \overline \Sigma\)\ge 1-\delta/4.
\end{align}
\end{proof}

\section{Scalar Concentration Inequalities for Martingales}
\subsection{Bernstein Inequality for Martingales}
The following lemma is the same as Theorem 1 from \cite{beygelzimer2011contextual} as is reported here for completeness.
\begin{theorem}[Bernstein's Inequality for Martingales]
\label{lem:Bernstein}
Consider the stochastic process $\{X_t\}$ adapted to the filtration $\{ \F_t \}$. Assume $X_t \leq 1$ almost surely. Then 
\begin{align}
\forall \lambda \in (0,1], \qquad \Pro\( \sum_{t=1}^T X_t \leq  \lambda \sum_{t=1}^T \E_t X^2_t +  \frac{1}{\lambda}\ln\frac{1}{\delta} \) \geq 1-\delta,
\end{align}
which implies
\begin{align}
	\Pro\( \sum_{t=1}^T X_t \leq  2\sqrt{\(\sum_{t=1}^T \E_t X^2_t \) \ln\frac{1}{\delta}} +  2\ln\frac{1}{\delta} \) \geq 1-\delta.
\end{align}
\end{theorem}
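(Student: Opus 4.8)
This statement is quoted essentially verbatim from \citet{beygelzimer2011contextual}, so the cleanest option is to cite it directly; for completeness I would reproduce the standard exponential-supermartingale (Freedman-type) argument. Fix $\lambda\in(0,1]$ and use the martingale-difference property $\E_t[X_t]=0$ (implicit in the setup — without it, e.g. $X_t\equiv 1$, the claim is false). The only analytic input is the elementary bound $e^{y}\le 1+y+y^{2}$ for all $y\le 1$: writing $g(y):=1+y+y^{2}-e^{y}$, one checks $g(0)=0$, $g\ge 0$ on $(-\infty,0]$ because the quadratic dominates, and $g\ge 0$ on $[0,1]$ because $g'$ stays nonnegative there. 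Since $X_t\le 1$ and $\lambda\le 1$ we have $\lambda X_t\le 1$, so $e^{\lambda X_t}\le 1+\lambda X_t+\lambda^{2}X_t^{2}$; taking $\E_t[\cdot]$ gives
\begin{align*}
\E_t\!\left[e^{\lambda X_t}\right]\le 1+\lambda^{2}\E_t[X_t^{2}]\le \exp\pbra{\lambda^{2}\E_t[X_t^{2}]}.
\end{align*}
Hence $M_t:=\exp\!\pbra{\lambda\sum_{s=1}^{t}X_s-\lambda^{2}\sum_{s=1}^{t}\E_s[X_s^{2}]}$ is a nonnegative supermartingale with $M_0=1$, so $\E[M_T]\le 1$, and Markov's inequality gives $\Pro(M_T\ge 1/\delta)\le\delta$. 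Taking logarithms of the inequality defining this event and dividing through by $\lambda>0$ produces exactly the first displayed bound.

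For the second inequality I would optimize $\lambda$ in the first one. With $V:=\sum_{t=1}^{T}\E_t[X_t^{2}]$ the right-hand side is $\lambda V+\lambda^{-1}\ln(1/\delta)$, minimized over $\lambda>0$ at $\lambda^{\star}=\sqrt{\ln(1/\delta)/V}$ with value $2\sqrt{V\ln(1/\delta)}$. Taking $\lambda=\min\{1,\sqrt{\ln(1/\delta)/V}\}$ and splitting into $V\ge\ln(1/\delta)$ (where the bound becomes $2\sqrt{V\ln(1/\delta)}$) and $V<\ln(1/\delta)$ (where $\lambda=1$ and the bound becomes $V+\ln(1/\delta)\le 2\ln(1/\delta)$) shows the right-hand side is at most $2\sqrt{V\ln(1/\delta)}+2\ln(1/\delta)$ in either case.

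The only point that is not bookkeeping is that $V$ is $\mathcal F_T$-measurable rather than deterministic, so the first inequality — which holds for each \emph{fixed} $\lambda$ — cannot be invoked with the data-dependent choice $\lambda=\min\{1,\sqrt{\ln(1/\delta)/V}\}$ as stated. I would handle this in the usual way: union-bound the first inequality over a geometric grid of $\lambda$'s covering an a priori interval known to contain $\lambda^{\star}$ (in the later applications $\E_t[X_t^{2}]$ is bounded, so $V$ is controlled deterministically and the interval can be taken as, say, $[1/T,1]$), then use the grid point within a factor $2$ of $\lambda^{\star}$; this merely replaces $\ln(1/\delta)$ by $\ln(G/\delta)$ with $G$ the (logarithmic) grid size — a doubly-logarithmic overhead absorbed by the constants and the $\widetilde O$ conventions used in the sequel. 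Everything else is routine.
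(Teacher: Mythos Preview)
Your proposal is correct and follows essentially the same route as the paper: define the exponential process $M_t=\exp\bigl(\lambda\sum_{s\le t}X_s-\lambda^{2}\sum_{s\le t}\E_s X_s^{2}\bigr)$, use $e^{y}\le 1+y+y^{2}$ for $y\le 1$ together with $1+x\le e^{x}$ to show it is a supermartingale, apply Markov, and then optimize over $\lambda$ by splitting on $V\gtrless\ln(1/\delta)$.

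Two points where you are in fact more careful than the paper's own write-up. First, you make explicit the assumption $\E_t[X_t]=0$; the paper never states it, but its supermartingale step $\E_t[1+\lambda X_t+\lambda^{2}X_t^{2}]\le\exp(\lambda^{2}\E_t X_t^{2})$ silently uses it (and, as you note, the statement is false for $X_t\equiv 1$). Second, you flag that $V=\sum_t\E_t X_t^{2}$ is random, so plugging the data-dependent $\lambda=\sqrt{\ln(1/\delta)/V}$ into the fixed-$\lambda$ bound is not literally licensed; the paper simply does this without comment. Your geometric-grid fix is the standard patch and is fine for the downstream $\widetilde O$ uses. So your argument matches the paper's and is, if anything, more rigorous.
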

For completeness, we reprove the theorem below:
\begin{proof}
Define the random variable $M_t$ as
\begin{align}
M_t = M_{t-1}\exp(\lambda X_t - \lambda^2 \E_t X^2_t)
\end{align}
where in particular $M_0 = 1$. Recall the inequality $e^x \leq 1+x + x^2$ for $x \leq 1$ and $1+x \leq e^x$:
\begin{align}
\E_t M_t & = M_{t-1} \E_t \exp(\lambda X_t - \lambda^2 \E_t X^2_t) \\
& \leq M_{t-1} \E_t \Big[(1+\lambda X_t + \lambda^2 X_t^2)\Big]\exp(-\lambda^2 \E_t X^2_t) \\
& \leq M_{t-1} \exp(\lambda^2 \E_t X_t^2)\exp(-\lambda^2 \E_t X^2_t) \\
& = M_{t-1}.
\end{align}
Thus $\{ M_t \}$ is a supermartingale sequence adapted to $\{ \F_t\}$. In particular, $\E M_t \leq M_0 = 1$ using the tower property.
Now by the Markov inequality
\begin{align}
	\Pro\( \underbrace{\lambda \sum_{t=1}^T X_t - \lambda^2\sum_{t=1}^T \E_t X_t^2}_{\ln M_t} > \ln\frac{1}{\delta}\) = \Pro\(M_t >  \frac{1}{\delta}\)
    \leq \frac{\E M_t}{\frac{1}{\delta}} \leq \delta
%> \delta \E M_t \leq 1.
\end{align}
This implies that with probability at least $1-\delta$ the following event holds:
\begin{align}
\lambda \sum_{t=1}^T X_t - \lambda^2 \sum_{t=1}^T \E_t X^2_t = \ln M_t \leq \ln\frac{1}{\delta}
\end{align}
which is the first part of the thesis. Now, we choose $\lambda$. If $\sum_{t=1}^T \E_t X_t^2 \leq \ln \frac{1}{\delta}$ then under the above event we obtain with $\lambda = 1$ (the largest possible value)
\begin{align}
\label{eqn:b1}
	\sum_{t=1}^T X_t \leq \sum_{t=1}^T \E_t X^2_t + \ln\frac{1}{\delta} \leq  2\ln\frac{1}{\delta}.
\end{align}
If conversely $\sum_{t=1}^T \E_t X_t^2 \geq \ln \frac{1}{\delta}$ then let $\lambda = \sqrt{\frac{\ln\frac{1}{\delta}}{\sum_{t=1}^T \E_t X_t^2 }} \leq 1$ to obtain (still under the same event)
\begin{align}
\label{eqn:b2}
	\sum_{t=1}^T X_t \leq 2\sqrt{\(\sum_{t=1}^T \E_t X^2_t \) \ln \frac{1}{\delta}}.
\end{align}
Therefore, summing the rhs of \cref{eqn:b1,eqn:b2} to cover both cases we obtain the second part of the thesis.
\end{proof}

\subsection{Reverse Bernstein Inequality for Martingales}
Bernstein's inequality bounds the sum a random variable $\sum_{t} X_t$ using second moment information $\sum_{t} \Var_t X_t$; in our case (positive random variables in $[0,1]$), the sum of the conditional variances $\sum_{t} \Var_t X_t$ is upper bounded by the sum of the means $\sum_{t} \E_t X_t$. 

This section provides the `reverse' inequality: Assuming a bound on the sum a random variable $\sum_{t} X_t$, it bounds the conditional sum of the means $\sum_{t} \E_t X_t$.
\begin{theorem}[Reverse Bernstein for Martingales]
\label{thm:ReverseBernstein}
Let $(\Sigma,\F,\Pro)$ be a probability space and consider the stochastic process $\{X_t\}$ adapted to the filtration $\{ \F_t \}$. Let $\E_t X_t \defeq \E[X_t \mid \F_{t-1}]$ be the conditional expectation of $X_t$ given $\F_t$. If $0 \leq X_t \leq 1$ then it holds that
\begin{align}
\Pro \(\sum_{t=1}^T \E_t X_t \geq \frac{1}{4}\( c_1 + \sqrt{c_1^2+4\(\sum_{t=1}^T X_t+c_2 \)}\)^2
\) \leq \delta, \qquad \qquad c_1 = 2\sqrt{\ln\frac{1}{\delta}}, \; c_2 = 2\ln\frac{1}{\delta}
\end{align}
\end{theorem}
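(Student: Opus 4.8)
The plan is to apply the forward Bernstein inequality (Theorem~\ref{lem:Bernstein}) to the centered sequence $Y_t \defeq \E_t X_t - X_t$ and then invert a quadratic inequality. First I would verify the hypotheses: $\{Y_t\}$ is adapted to $\{\F_t\}$, $\E_t Y_t = 0$, and since $0 \le X_t \le 1$ we have $\E_t X_t \le 1$, hence $Y_t \le \E_t X_t \le 1$ almost surely, so Theorem~\ref{lem:Bernstein} applies to $\{Y_t\}$ (in particular $\lambda$ may be taken up to $1$). The key estimate is the conditional second moment:
\begin{align}
\E_t Y_t^2 = \E_t X_t^2 - (\E_t X_t)^2 \le \E_t X_t^2 \le \E_t X_t,
\end{align}
where the last inequality uses $X_t^2 \le X_t$ on $[0,1]$.

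Next I would invoke the second (square-root) form of Theorem~\ref{lem:Bernstein} for $\{Y_t\}$: with probability at least $1-\delta$,
\begin{align}
\sum_{t=1}^T Y_t \le 2\sqrt{\Big(\sum_{t=1}^T \E_t Y_t^2\Big)\ln\tfrac1\delta} + 2\ln\tfrac1\delta \le 2\sqrt{\Big(\sum_{t=1}^T \E_t X_t\Big)\ln\tfrac1\delta} + 2\ln\tfrac1\delta.
\end{align}
Writing $S \defeq \sum_{t=1}^T \E_t X_t$ and $W \defeq \sum_{t=1}^T X_t$, and using $\sum_t Y_t = S - W$ together with $c_1 = 2\sqrt{\ln(1/\delta)}$, $c_2 = 2\ln(1/\delta)$, this reads $S - W \le c_1\sqrt{S} + c_2$, i.e. $S \le W + c_1\sqrt{S} + c_2$ on the same high-probability event.

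Finally I would solve this quadratic inequality in $u \defeq \sqrt{S}\ge 0$. From $u^2 - c_1 u - (W + c_2) \le 0$ and the quadratic formula, $u \le \tfrac12\big(c_1 + \sqrt{c_1^2 + 4(W+c_2)}\big)$, hence $S = u^2 \le \tfrac14\big(c_1 + \sqrt{c_1^2 + 4(W+c_2)}\big)^2$; taking complements yields the stated bound. The only genuinely delicate steps are the variance estimate $\E_t Y_t^2 \le \E_t X_t$ (which is precisely where membership in $[0,1]$, not merely boundedness, is exploited) and confirming the one-sided bound $Y_t \le 1$ so that Bernstein is applicable; the quadratic inversion and bookkeeping of the constants $c_1,c_2$ are routine.
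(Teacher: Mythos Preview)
Your proposal is correct and follows essentially the same approach as the paper: define the centered sequence $Y_t=\E_t X_t - X_t$, apply the forward Bernstein inequality (Theorem~\ref{lem:Bernstein}), bound the conditional second moment by $\E_t X_t$ using $X_t\in[0,1]$, and invert the resulting quadratic in $\sqrt{S}$. If anything, you are slightly more explicit than the paper in verifying the hypothesis $Y_t\le 1$ needed to invoke Theorem~\ref{lem:Bernstein}.
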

\begin{proof}
Consider the random `noise'
\begin{align}
\xi_t \defeq \E_t X_t - X_t
\end{align}
which allows us to write
\begin{align}
\label{eqn:thissum}
\sum_{t=1}^T \E_t X_t = \sum_t^T (\xi_t + X_t)
\end{align}
Then \fullref{lem:Bernstein} ensures the following high probability statement for appropriate $c_1 = 2\sqrt{\ln\frac{1}{\delta}}$, $c_2 = 2\ln\frac{1}{\delta}$:
\begin{align}
\Pro\( \sum_{t=1}^T \xi_t  \leq c_1\sqrt{\sum_{t=1}^T{\E}_t \xi^2_t} + c_2 \) \geq 1-\delta.
\end{align}
Notice that since $0 \leq X_t \leq 1$ we have 
\begin{align}\E_t \xi^2_t & =  \E_t (X_t - \E_t X_t)^2 \\
& = \E_t X^2_t - (\E_t X_t)^2 \\
& \leq \E_t X^2_t \\
& \leq \E_t X_t.
\end{align}
Plugging back into the above display and using \cref{eqn:thissum} gives
\begin{align}
\Pro\(\sum_{t=1}^T \xi_t  = \sum_{t=1}^T \( \E_t X_t - X_t  \) \leq c_1\sqrt{\sum_{t=1}^T \E_tX_t} + c_2 \) \geq 1-\delta
\end{align}
or equivalently
\begin{align}
\Pro\( \sum_{t=1}^T \E_t X_t \leq \sum_{t=1}^T X_t  + c_1\sqrt{\sum_{t=1}^T\E_t X_t} + c_2 \) \geq 1-\delta.
\end{align}
Solving for $\sum_{t=1}^T \E_t X_t$ gives under such event
\begin{align}
\sum_{t=1}^T \E_t X_t \leq \frac{1}{4}\( c_1 + \sqrt{c_1^2+4\(\sum_{t=1}^T X_t+c_2\)}\)^2.
\end{align} 
\end{proof}

\section{Additional Remarks}
\subsection{Hard Instance that Requires Exploration}
\label{sec:HardInstance}
Even with a fixed context, uniform exploration ignores the structure of the reward function, and is therefore suboptimal. Consider the case where $d=2$ and $\phi(1,1)=(1,0)$ and $\phi(1,i)=(0,1)$ for $2\le i\le A.$ The context is fixed and there are $A$ actions. A uniform exploration requires $\Omega(A/\epsilon^2)$ samples to estimate the reward for action $1$. In contrast, the optimal exploration policy is $$\pi(1)=\begin{cases}1,&\text{ w.p. 1/2},\\2,&\text{ w.p. 1/2}.\end{cases}$$ And the corresponding sample complexity is $\tildeO(1/\epsilon^2).$

On the other hand, the policy that ignores context is also suboptimal. Consider the policy $\pi^G(s)$ that returns the G-optimal design on the action set $\calA_s.$ The policy $\pi^G$ explores optimally for a fixed context.  \citet{ruan2020linear}'s Lemma 4 implies that this policy achieves an online sample complexity $\tildeO(d^3/\epsilon^2).$ For completeness, we also include a hard instance for $\pi^G.$ Let $\calS=\{s_1,\cdots,s_k\}$ and $\calA_s=\{a_1,\cdots,a_{k+1}\}.$ Assume a uniform distribution over the state space $\calS$. The feature vector is defined as follows.
\begin{align}
	\phi(s_i,a_j)=\begin{cases}
		e_j,&\text{ when }j\le k,\\
		e_{i+k},&\text{ when }j=k+1.
	\end{cases}
\end{align}
Note that in this case, the dimension of the feature vector is $d=2k$. For a fixed $s\in\calS$, the G-optimal design returns the uniform exploration policy. As a result, the expected covariance matrix is $\Sigma=\E_{s\sim\calS,a\sim \calA}\phi(s,a)\phi(s,a)^\top=\diag(1/(k+1),\cdots,1/(k+1),1/k(k+1),\cdots,1/k(k+1)).$ It follows that
\begin{align*}
	\E_{s\sim \calS}\max_{a\in\calA_s}\phi(s,a)^\top \Sigma^{-1} \phi(s,a)\ge \E_{s\sim \calS}\phi(s,a_{k+1})^\top \Sigma^{-1} \phi(s,a_{k+1})=k(k+1)=\Omega(d^2).
\end{align*}
In contrast, the optimal exploration policy is 
\begin{align}
	\pi(s_i)=\begin{cases}
		U(\{a_1,\cdots,a_k\}),&\text{ w.p. }1/2,\\
		a_{k+1},&\text{ w.p. }1/2,
	\end{cases}
\end{align}
where $U(\cdot)$ denotes the uniform distribution. Correspondingly, we have
$\Sigma=\diag(1/2k,\cdots,1/2k),$ and
\begin{align}
	\E_{s\sim \calS}\max_{a\in\calA_s}\phi(s,a)^\top \Sigma^{-1} \phi(s,a)\le O(d).
\end{align}

\subsection{Remarks Regarding \citet{ruan2020linear}}
\label{sec:Ruan}
Section 6 of \citet{ruan2020linear} argues that when $\lambda_{reg}<1/d$, the covariance matrix doesn't concentrate. Their construction works as follows. Consider a fixed offline dataset $\calC$ with size $M$. The context space is $\calS=[d]$. The action space is $\calA_1=\{1\}$ and $\calA_s=\{1,2\}$ for $2\le s\le d.$ The feature vector is defined as
\begin{align}
	\phi(1,1)=e_1,\quad \phi(s,1)=e_s,\quad \phi(s,2)=\sqrt{1-\frac{d}{M}}e_s+\sqrt{\frac{d}{M}}e_1.
\end{align}
The context distribution $\mu$ is 
\begin{align}
	\mu(1)=\frac{1}{dM},\quad \mu(s)=\frac{1}{d-1}\pbra{1-\frac{1}{dM}},\forall s\ge 2.
\end{align}
Then with probability at least $1/d$, $\calC$ contains a single occurrence of context $1$. \citet{ruan2020linear} argues that in this case, there exits a policy $\pi$ such that the covariance on $\calC$ deviates from the population one when $\lambda_{reg}<\frac{1}{d}$.

The policy is $\pi(s)=1.$ Let $\hat\Sigma=\lambda_{reg}I+\sum_{s_i\in \calC}\phi(s_i,\pi(s_i))\phi(s_i,\pi(s_i))^\top.$ We can compute that
\begin{align}
	\E_{s\sim \calC}\max_{a\in\calA_s} \phi(s,a)^\top \hat\Sigma^{-1} \phi(s,a)\le O(d/M).
\end{align}
Now consider the true distribution $\mu$. Let $\Sigma=\lambda_{reg}I+M\E_{s\sim \mu}\phi(s,\pi(s))\phi(s,\pi(s))^\top.$ By basic algebra we get $\Sigma=\lambda_{reg}I+M\diag(1/(dM),\mu(2),\cdots,\mu(d)).$
As a result, we can compute 
\begin{align*}
	&\E_{s\sim \mu}\max_{a\in\calA_s} \phi(s,a)^\top \Sigma^{-1} \phi(s,a)\\
	=&\;\mu(1)\frac{d}{2}+\sum_{s=2}^{d}\mu(s)\maxtwo{\frac{1}{\lambda_{reg}+M\mu(s)}}{\pbra{1-\frac{d}{M}}\frac{1}{\lambda_{reg}+M\mu(s)}+\frac{d}{M}\frac{1}{\lambda_{reg}+1/d}}\\
	\ge&\;\Omega(d^2/M).
\end{align*}
Note the inequality follows due to the rightmost term and substituting in $\lambda_{reg} < \frac{1}{d}.$ Comparing the estimate of the empirical covariance with the true expectation, we observe the covariance matrix doesn't concentrate. Indeed, in the setting where $\lambda_{reg}<1$, the concentration events $\calE_1,\calE_2,\calE_3$ in the proof of Theorem~\ref{thm:main} fail with constant probability. Since \citet{ruan2020linear} focus on small regularization setting, their algorithm first finds a ``core'' of the contexts. This procedure makes their algorithm much more complex compared with ours, and increases the number of offline samples required.

Now if we set $\lambda_{reg}=1$, by the same computation we get
\begin{align*}
	&\E_{s\sim \mu}\max_{a\in\calA_s} \phi(s,a)^\top \Sigma^{-1} \phi(s,a)\le \Omega(d/M).
\end{align*}
Hence, the concentration events hold and Theorem~\ref{thm:main} gives a much tighter offline sample complexity.

\section{Helper Lemmas}
\begin{lemma}[Number of Switches]
\label[lemma]{lem:Switches}
Algorithm \ref{alg:Planner} generates at most $K$ distinct policies:
\begin{align}
\label{eqn:Kdef}
K \leq d\ln_2 \(1 + \frac{M}{d\lambda_{reg}}\) = \widetilde O(d)
\end{align}
\end{lemma}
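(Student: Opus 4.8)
The plan is to read the switch count straight off the doubling schedule in Algorithm~\ref{alg:Planner}. A new policy $\pi_m$ is produced only when the index $\underline m$ is updated, which by construction happens exactly when $\det(\Sigma_m)>2\det(\Sigma_{\underline m})$ (together with the trivial initialization at $m=1$). Since the update rule $\Sigma_{m+1}=\Sigma_m+\alpha\phi_m\phi_m^\top$ adds a positive semidefinite rank-one term, $\Sigma_{m+1}\succeq\Sigma_m$, so $m\mapsto\det(\Sigma_m)$ is nondecreasing; hence the determinant at least doubles between any two consecutive index updates. If $1=m_1<m_2<\dots<m_K\le M$ are the update times, telescoping gives $\det(\Sigma_{m_K})\ge 2^{K-1}\det(\Sigma_{m_1})=2^{K-1}\det(\lambda_{reg}I)=2^{K-1}\lambda_{reg}^{\,d}$, and since $\det(\Sigma_{m_K})\le\det(\Sigma_{M+1})$ this yields
\begin{align}
K\le 1+\log_2\frac{\det(\Sigma_{M+1})}{\lambda_{reg}^{\,d}}.
\end{align}

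It then remains to bound $\det(\Sigma_{M+1})$ from above, a routine trace/AM--GM step. Using the standing normalization $\normtwo{\phi_m}\le 1$ and $\alpha\le 1$, the trace is $\mathrm{tr}(\Sigma_{M+1})=d\lambda_{reg}+\alpha\sum_{m=1}^M\normtwo{\phi_m}^2\le d\lambda_{reg}+M$, and AM--GM on the (positive) eigenvalues of $\Sigma_{M+1}$ gives $\det(\Sigma_{M+1})\le\big(\mathrm{tr}(\Sigma_{M+1})/d\big)^d\le(\lambda_{reg}+M/d)^d$. Plugging this in,
\begin{align}
K\le 1+d\log_2\!\Big(1+\tfrac{M}{d\lambda_{reg}}\Big),
\end{align}
and absorbing the additive constant (equivalently, not counting the $m=1$ initialization as a genuine switch) gives the claimed $K\le d\log_2(1+M/(d\lambda_{reg}))$; the $\tildeO(d)$ conclusion is then immediate since the logarithmic factor is $\mathrm{polylog}$ in $M/(d\lambda_{reg})$.

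I do not expect any real obstacle here: this is the standard elliptical-potential / determinant-doubling argument, and the only points needing a little care are tracking the discount factor $\alpha\le 1$ through the covariance update and invoking $\normtwo{\phi(s,a)}\le 1$. If one wants to be fully precise about the additive constant, one can instead phrase the count as the number $J$ of doubling events (so $K=J+1$ distinct policies, or $K=J$ switches, depending on convention) and note $2^{J}\lambda_{reg}^{\,d}\le\det(\Sigma_{M+1})$ directly.
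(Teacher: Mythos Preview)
Your argument is correct and follows essentially the same route as the paper: determinant doubling at each switch combined with the trace/AM--GM upper bound $\det(\Sigma_M)\le(\lambda_{reg}+M/d)^d$ (which the paper cites from Abbasi-Yadkori et al.\ rather than spelling out). You are in fact slightly more careful than the paper about the additive $+1$ from the $m=1$ initialization and about tracking $\alpha\le 1$ through the trace bound.
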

\begin{proof}
Notice that $\det(\Sigma_1) = \lambda_{reg}^d$ and $\det(\Sigma_M) \leq (\lambda + \frac{M}{d})^d$ (see proof of lemma 11 in \citep{Abbasi11}). Every time the policy changes the determinant of $\Sigma_m$ at least doubles.
Let $K$ denote the number of times the policy is updated.
By induction,
\begin{align}
\(\lambda_{reg} + \frac{M}{d}\)^d \geq \det(\Sigma_M) \geq 2^K \det(\Sigma_1) \geq 2^K \lambda_{reg}^d
\end{align}
Solving for $K$ concludes.
\end{proof}

\section{Additional Experiments and Information}

The Yahoo! dataset is available freely for research purposes through Yahoo! Webscope. Use of the dataset required accepting an agreement not to share the original in a way that the dataset could be reconstructed. The dataset is available at the following link:

 \href{https://webscope.sandbox.yahoo.com/catalog.php?datatype=c}{https://webscope.sandbox.yahoo.com/catalog.php?datatype=c}

The data consists entirely of numerical feature vectors and does not contain any identifiable or offensive information. To run the experiments, we used a standard Amazon Web Services EC2 c5.xlarge instance with 4 vCPUs and 8gb of memory.

For the subsampling to create $300$-dimensional feature vectors, we selected the following $0$-indexed coordinates to include out of full $700$. These coordinates were chosen randomly by sampling with replacement. The specific indices that were used are in the supplementary material.

\subsection{Additional Plots}

In addition to the three choices of regularization shown in Figure~\ref{fig::real-reg} for the real-world dataset, we ran additional more extreme values, but omitted them for clarity in the plot. In Figure~\ref{fig::add-reg}, we see the same algorithms plotted with both large values of regularization ($\geq 1$) and small values of regularization ($\leq 1$).

\begin{figure}
\begin{subfigure}{.5\textwidth}
\centering
\includegraphics[width=7cm]{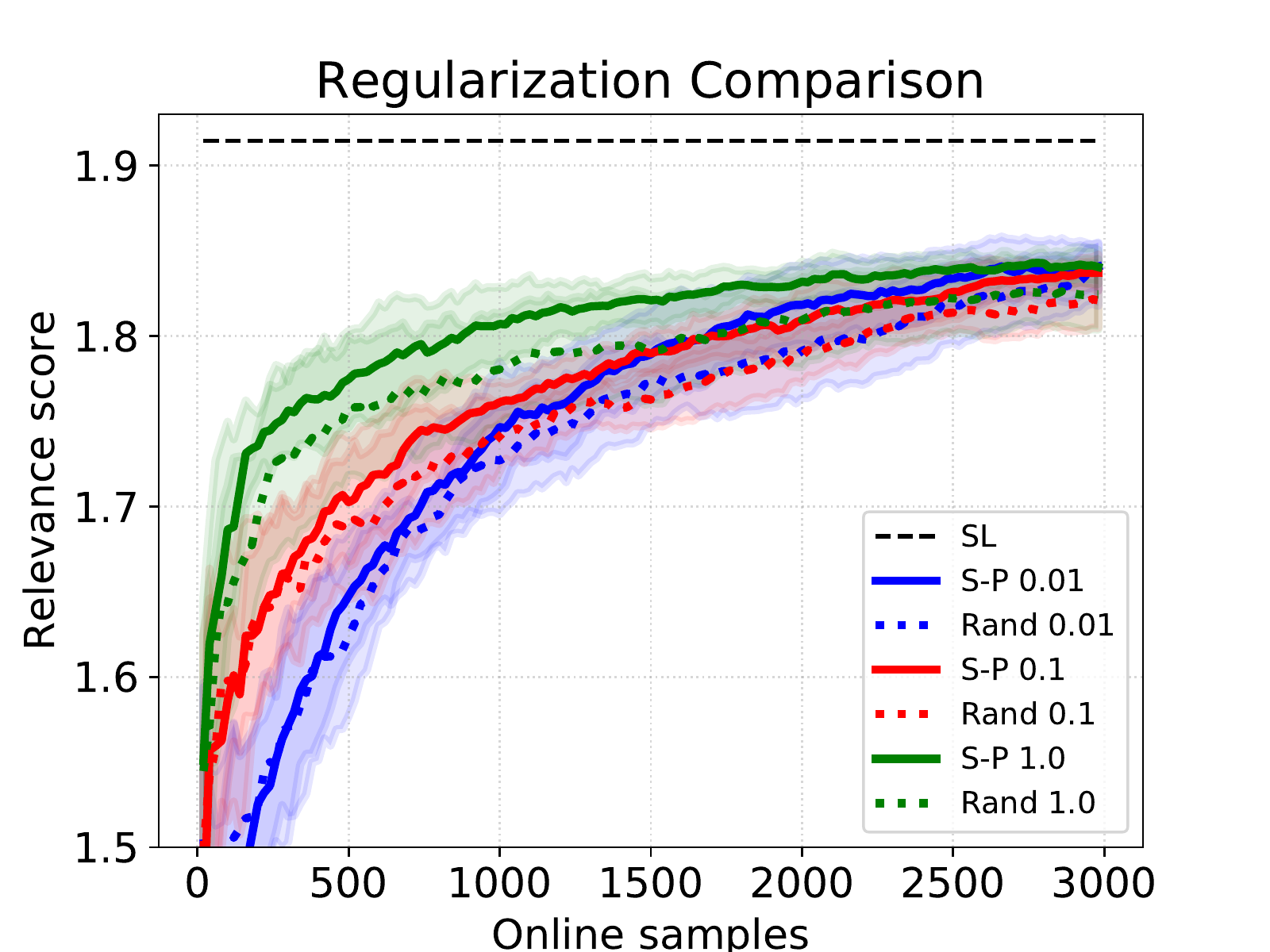} 
\caption{}
\end{subfigure}
\begin{subfigure}{.5\textwidth}
\centering
\includegraphics[width=7cm]{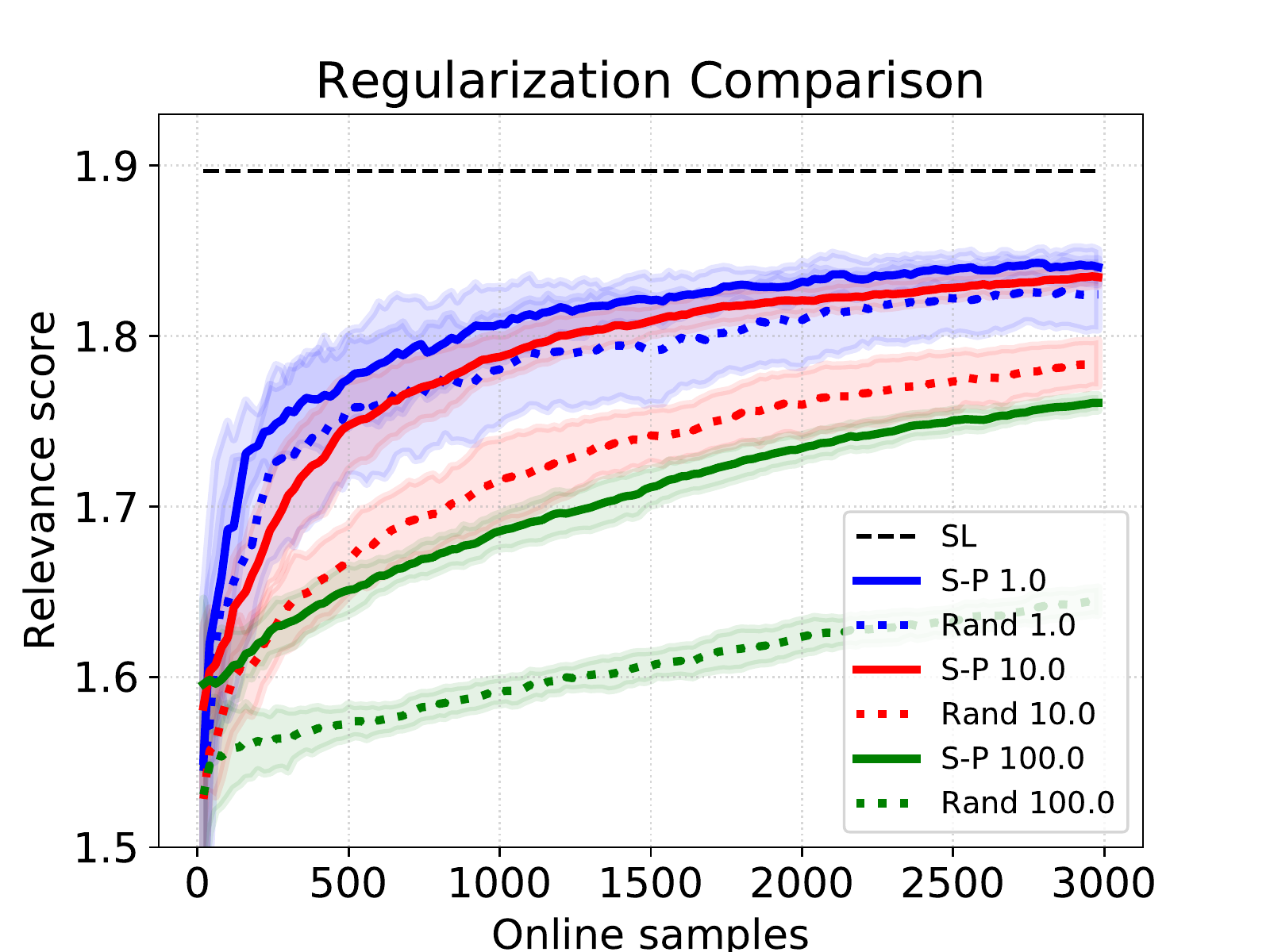}
\caption{}
\end{subfigure}%
    \caption{ The left figure shows the sampler-planner (S-P) compared to the random algorithm with small regularization $\lambda \in \{ 0.01, 0.1, 1.0\}$. The right shows the same for $\lambda \in \{1.0, 10.0, 100.0\}$. } \label{fig::add-reg}
    \vspace{-3mm}
\end{figure}
\end{document}